%% file: neurips_2026.tex
\documentclass{article}

\usepackage[preprint]{neurips_2026}

\usepackage[utf8]{inputenc} 
\usepackage[T1]{fontenc}    
\usepackage{hyperref}       
\usepackage{url}            
\usepackage{booktabs}       
\usepackage{amsfonts}       
\usepackage{nicefrac}       
\usepackage{microtype}      
\usepackage{xcolor}         
\usepackage{amsmath}
\usepackage{enumitem}
\usepackage{amsthm}
\usepackage{enumitem}
\usepackage{caption}
\usepackage{mathtools}
\usepackage{amssymb}
\usepackage[capitalise,noabbrev]{cleveref}
\newtheorem{proposition}{Proposition}
\newtheorem{conjecture}{Conjecture}

\title{Why Learning Rediscovers the Closed-Form Diagonal Regularizer}

\author{
Jeahn Han$^{1}$ Pyojin Kim$^{1}$\\[2pt]
$^{1}$GIST
}

\begin{document}

\maketitle

\begin{abstract}
We identify a diagonal saturation principle in modal inverse problems: when truncation noise is isotropic, the Bayes-optimal Tikhonov shape is a closed-form power law $\Gamma_k \propto \lambda_k^{|s|}$ set by the prior alone, independent of the domain.
Berry's random-wave conjecture decorrelates the truncation noise across modes, and Weyl's eigenvalue counting law supplies enough modes for the conclusion to survive empirical Berry violations.
Together they predict an approximately flat loss landscape across the per-mode family, leaving narrow scope for a diagonal regularizer to robustly beat the closed form.
On FEM-simulated acoustic rooms, the closed form is near-optimal relative to per-room oracle tuning across observation windows, and three diagonal architectures trained on the same data match its reconstruction error within 1\,pp despite learning qualitatively different spectra.
The framework extends to heat diffusion via a known exponential Green's function correction with no new free parameters.
Saturation is restricted to the diagonal family: Learned Iterative Ridge crosses the boundary by exploiting cross-mode coupling, locating where learning starts to help.
\end{abstract}

\input{sections/1_introduction}
\input{sections/2_related_works}
\input{sections/3_problem_setup}
\input{sections/4_why_p=s}
\input{sections/5_experiments}

\input{sections/6_can_learning_improve}
\input{sections/7_cross_physics_validation}

\input{sections/8_discussions_and_limitations}

\newpage
\bibliographystyle{abbrvnat}
\bibliography{refs}


\appendix
\newpage
\input{supplementary/A_extended_proof}
\input{supplementary/B_berry}
\input{supplementary/C_acoustic_experiments}
\input{supplementary/D_learning_experiments}
\input{supplementary/E_heat_equation}

\input{supplementary/F_prior_robustness}
\input{supplementary/G_rectangular_control_experiment}

\input{supplementary/H_sensor_noise}
\input{supplementary/I_real_data}

\clearpage  


\newpage
\input{checklist.tex}

\end{document}

%% file: sections/1_introduction.tex
\section{Introduction}
\label{sec:intro}

Modal inverse problems on bounded domains arise across acoustic, thermal, and electromagnetic PDEs.
They share a structural difficulty: the state admits an infinite eigenfunction expansion, but any sensor captures only finitely many measurements, so the discarded modes contaminate every measurement as \emph{truncation noise}.
We instantiate the framework for room acoustics, in which a pressure field is a weighted sum of the room's eigenmodes and the task is to recover the weights from a handful of microphones.
With $K{=}50$ retained modes and $M{=}8$ microphones, a single snapshot gives $8$ equations for $100$ unknowns; a typical room has over $300$ modes total, so truncation noise dominates the error budget.

\paragraph{The shape question.}
The standard remedy is regularization: penalize large amplitudes for modes the data cannot constrain \citep{stuart2010inverse, kaipio2005statistical}.
The penalty is controlled by a diagonal matrix $\Gamma$ and a scalar $\alpha$ that sets the overall strength.
Choosing $\alpha$ is well studied \citep{hansen1992analysis, golub1979generalized, morozov1966regularization}; we ask a different question: what \emph{shape} should $\Gamma$ take?

In physical systems, the energy in mode $k$ decays as $\lambda_k^{-s}$, where $\lambda_k$ is the eigenvalue (roughly, frequency squared) and $s$ controls how fast high-frequency modes lose energy in the source statistics.
In our synthetic diffuse-field setup we set $|s|=1.13$ as the prior-variance exponent by construction, identifiable from the modal time series we observe (Appendix~\ref{app:estimating-s}).
The natural penalty is a power law $\Gamma_k = \lambda_k^p$, where $p{=}0$ penalizes all modes equally and $p{=}2$ aggressively suppresses high frequencies.
The central question is: \emph{what is the best $p$, and does it depend on the room?}

\paragraph{Why the noise decides.}
The optimal $p$ depends on two spectra: the signal's (how fast energy decays across modes) and the noise's (how truncation error is distributed across retained modes).
The signal spectrum is straightforward to measure; the noise spectrum is not, and it decides whether the answer is universal or room-specific.
If the truncation noise is \emph{isotropic} (spread equally across all retained modes), then $p^* = |s|$, determined entirely by the signal. 
If the noise has room-specific structure, a learned method could exploit it.
Two classical results predict isotropy.
Berry's random-wave conjecture~\citep{berry1977regular} says high-frequency eigenmodes of generic domains behave like random spatial fields, decorrelating the noise contributions of different discarded modes.
Weyl's eigenvalue counting formula~\citep{weyl1912asymptotische, ivrii2016100} guarantees enough discarded modes (median ${\sim}263$) that the truncation noise concentrates around its isotropic average, leaving any per-mode adaptation with little structure to exploit (Appendix~\ref{app:bernstein}).
If both hold across rooms ranging from triangles to decagons, no per-mode regularizer achieves robust gains across operating points within the diagonal family.
This includes regularizers learned by deep unrolling \citep{gregor2010learning, adler2018learned, aggarwal2018modl}.

\paragraph{The practical message.}
For any fixed excitation regime, set $\Gamma_k = \lambda_k^{|s|}$ once and use it for every room. 
The novel claim is not the numerical value of $|s|$ (which is prior-specified) but the room-independence of the shape $\Gamma_k \propto \lambda_k^{|s|}$ at fixed excitation; this persists under variation of $K$, $M$, and the excitation exponent (Appendix~\ref{app:v2}).

\paragraph{Contributions.}

\begin{enumerate}[leftmargin=*, itemsep=2pt, topsep=4pt]
\item \textbf{Theory.}
Berry's conjecture and Weyl's law motivate approximate isotropy of the truncation noise, under which $p^* = |s|$ (\S\ref{sec:theory}).

\item \textbf{Verification.}
On $187$ in-scope rooms ($K_{\mathrm{total}} > K$), the median relative cost of using $p{=}|s|$ instead of per-room tuning stays below $5.82\%$ at every observation window, and is below $1.1$~pp absolute for $68.4\%$ of rooms (\S\ref{sec:acoustic}, Table~\ref{tab:cost_tiers}); the $10$ boundary-regime rooms ($K_{\mathrm{total}} \leq K$) are reported separately.

\item \textbf{Diagonal saturation.}
Three neural architectures sit on the closed-form's $P$-vs-$T$ curve (M3 within $\pm\,0.3$\,pp at every $T$; M1 and M2 within $1$\,pp) despite learning qualitatively different spectra.
The landscape is flat (\S\ref{sec:learned}).

\item \textbf{Cross-PDE consistency.} On heat diffusion, the theory predicts $\Gamma_k \propto \lambda_k^s \cdot e^{2 \kappa t\, \lambda_k}$ via the Green's function correction; per-room fits recover the predicted rate within $[0.97, 1.00]$ (\S\ref{sec:heat}). This is a cross-PDE consistency check, not physical validation.
\end{enumerate}


%% file: sections/2_related_works.tex
\section{Related Work}
\label{sec:related}

\paragraph{The strength is solved; the shape is not.}
Tikhonov regularization has two knobs: a scalar strength $\alpha$ that controls how much to penalize, and a matrix $\Gamma$ that controls \emph{which modes} to penalize \citep{stuart2010inverse, kaipio2005statistical}.
Decades of work have settled the first knob.
The L-curve \citep{hansen1992analysis}, generalized cross-validation \citep{golub1979generalized}, the discrepancy principle \citep{morozov1966regularization}, and Bayesian posterior contraction \citep{cavalier2008nonparametric, knapik2011bayesian} all select $\alpha$ reliably.
The second knob, the \emph{shape} of $\Gamma$, has also received attention. 
Pinsker's estimator gives minimax-optimal per-coordinate shrinkage \citep{pinsker1980optimal, johnstone2002function}, hierarchical Bayesian models derive per-component weights from data \citep{calvetti2025distributed}, and spectral Bayesian methods recover analogous structures on manifolds \citep{durastanti2026spectral}.
But all require either training data or an assumed smoothness class.
\citet{alberti2021learning} proved that the MSE-optimal shape depends only on the signal covariance $\Sigma_x$ (not the forward operator), with $O(1/\sqrt{m})$ generalization bounds; \citet{leong2024star} confirm this covariance-dependence geometrically.
We show that Berry's conjecture and Weyl's law make the truncation noise approximately isotropic, reducing the shape question to $\Sigma_x$ alone. With a power-law excitation prior, $\Sigma_x$ is given by a single scalar $|s|$. No room-adaptive per-mode method achieves robust gains across operating points.

\paragraph{Learning keeps rediscovering the formula.}
Algorithm unrolling \citep{gregor2010learning} launched a wave of learned regularizers \citep{adler2018learned, sun2016deep, aggarwal2018modl}, yet the learned answer often turns out to be the classical one: learned parameters converge to variational solutions \citep{kofler2023learning}, and bilevel optimization reduces to hyperparameter tuning \citep{kunisch2013bilevel}.
The pattern extends broadly. 
An untrained CNN matches a trained denoiser \citep{ulyanov2018deep}, and plug-and-play regularizers collapse to the denoiser's spectral penalty \citep{hurault2022proximal}.
Instabilities in learned methods trace to a fundamental accuracy--stability tradeoff \citep{antun2020instabilities, gottschling2025troublesome} that no algorithm can reliably circumvent \citep{colbrook2022difficulty}.
This paper explains why everyone arrives at the same closed-form answer.

\paragraph{The physics that nobody used.}
Berry's conjecture says that high-frequency eigenmodes of generic rooms look like random waves \citep{berry1977regular}.
It is supported by quantum ergodicity \citep{shnirel1974ergodic, zelditch2005quantum}, though it fails for scarred states \citep{heller1984bound} and certain symmetric geometries \citep{hassell2010ergodic}.
Weyl's law says there are a \emph{lot} of these modes: eigenvalue counts grow linearly with frequency in 2D \citep{weyl1912asymptotische, ivrii2016100}, exploited in wave-chaotic compressive sensing \citep{del2020implementing} and acoustic cavity analysis \citep{tanner2007wave}.
Both results are classical; neither has been connected to regularizer design.

\paragraph{Room acoustics.}
Sparse-microphone sound-field reconstruction has been studied through Bayesian methods \citep{schmid2021spatial}, compressive sensing \citep{antonello2017room}, spherical arrays \citep{fernandez2016sound}, physics-informed neural networks \citep{karakonstantis2024room}, and Mat\'ern-kernel GP priors whose regularity parameter is analogous to $|s|$ \citep{rasmussen2003gaussian}.
Sensor placement asks \emph{where} to put the microphones \citep{krause2008near, alexanderian2014optimal}; we ask what \emph{shape} the penalty should take, and whether the answer is the same for every room.

\paragraph{Heat equation.}
The backward heat equation is a textbook ill-posed problem: mode amplitudes decay as $e^{-\kappa\lambda_k t}$, so recovering them amplifies noise exponentially \citep{beck1985inverse, kaipio2011bayesian}.
Minimax rates \citep{knapik2013bayesian} and variational source conditions \citep{hohage2017characterizations} give the right scaling but not the exact regularizer, and say nothing about $\kappa$ or $t$.

%% file: sections/3_problem_setup.tex
\section{Problem Setup}
\label{sec:setup}

We consider modal inverse problems on a bounded 2D domain $\Omega \subset \mathbb{R}^2$ with eigenpairs $(\lambda_k, \varphi_k)$ of the Laplacian.
For concreteness we instantiate the framework on the acoustic wave equation and validate it across $187$ random convex polygons (\S\ref{sec:acoustic}); the heat equation provides a cross-PDE check (\S\ref{sec:heat}).
The state at time $t$ admits the modal expansion
{\footnotesize
\begin{equation}
\label{eq:modal-expansion}
u(x, t) = \sum_{k=1}^{\infty} a_k(t)\,\varphi_k(x),
\end{equation}}
with eigenfunctions $L^2$-normalized so that $\int_\Omega \varphi_k^2 = 1$ and $\mathbb{E}_x[\varphi_k^2(x)] = 1/|\Omega|$ for $x$ uniformly distributed in $\Omega$.
Each eigenfunction is a spatial pattern; the modal amplitudes $\{a_k(t)\}$ encode how much of each pattern is present at time $t$.
In most physical settings, higher modes carry less energy: the variance of $a_k$ decays as a power law in $\lambda_k$, governed by an exponent $s$.
Crucially, $s$ depends on the excitation statistics, not on the room geometry. We exploit this throughout.
For the acoustic wave equation with uniform damping $\gamma$, each amplitude evolves as a damped sinusoid:
{\footnotesize
\begin{equation}
\label{eq:acoustic-dynamics}
a_k(t) = e^{-\gamma t}\bigl[c_k \cos(\omega_k t) + \beta_k \sin(\omega_k t)\bigr], \qquad \omega_k = c_s\sqrt{\lambda_k}.
\end{equation}}
The fact that $\gamma$ is the same for every mode (mode-independent damping) is what distinguishes acoustics from heat diffusion (Section~\ref{sec:heat}).
We write $\sigma^2_{a,k}$ for the variance of the random initial conditions $(c_k, \beta_k)$.

\paragraph{Truncation and observation model.}
We retain $K = 50$ modes and observe through $M$ microphones at positions $\{x_m\}_{m=1}^M$:
{\footnotesize
\begin{equation}
\label{eq:obs-full}
y_m(t) = \underbrace{\sum_{k=1}^{K} a_k(t)\,\varphi_k(x_m)}_{\text{retained signal}} + \underbrace{\sum_{n=K+1}^{K_{\mathrm{total}}} a_n(t)\,\varphi_n(x_m)}_{\text{truncation noise } \eta_m^{\mathrm{trunc}}(t)}.
\end{equation}}
The first sum is what we model; the second is the truncation noise we discard.
Stacking $M$ microphones over $T$ snapshots:
\begin{equation}
\label{eq:obs-stacked}
\tilde{\mathbf{y}} = \tilde{\Phi}\,\mathbf{a}_0 + \tilde{\boldsymbol{\eta}}, \qquad \tilde{\mathbf{y}} \in \mathbb{R}^{MT}, \quad \tilde{\Phi} \in \mathbb{R}^{MT \times 2K},
\end{equation}
where $\tilde{\Phi}$ incorporates both the spatial measurement matrix $\Phi_{mk} = \varphi_k(x_m)$ and the temporal basis from~\eqref{eq:acoustic-dynamics}, and $\mathbf{a}_0$ collects the initial amplitudes $(c_1, \beta_1, \ldots, c_K, \beta_K)$.

\paragraph{Tikhonov estimator.}
We estimate $\mathbf{a}_0$ by penalized least squares:
{\footnotesize
\begin{equation}
\label{eq:tikhonov}
\hat{\mathbf{a}} = \arg\min_{\mathbf{a}} \bigl\{ \|\tilde{\mathbf{y}} - \tilde{\Phi}\,\mathbf{a}\|^2 + \alpha\,\mathbf{a}^\top \Gamma\,\mathbf{a} \bigr\} = \bigl(\tilde{\Phi}^\top \tilde{\Phi} + \alpha\,\Gamma\bigr)^{-1} \tilde{\Phi}^\top \tilde{\mathbf{y}},
\end{equation}}
where $\alpha > 0$ is the regularization strength and $\Gamma \succ 0$ is a diagonal matrix controlling the per-mode penalty.
We measure quality by the normalized modal MSE:
{\footnotesize
\begin{equation}
\label{eq:P-modal}
P_{\mathrm{modal}} = \frac{\mathbb{E}[\|\hat{\mathbf{a}} - \mathbf{a}_0\|^2]}{\mathbb{E}[\|\mathbf{a}_0\|^2]}.
\end{equation}}
$P_{\mathrm{modal}} = 0$ is perfect reconstruction; $P_{\mathrm{modal}} = 1$ means the estimator is no better than guessing zero.
We write $P$ hereafter.

%% file: sections/4_why_p=s.tex
\section{Why $p^* = s$: The Three-Step Argument}
\label{sec:theory}

The central claim is that the optimal regularizer has the form $\Gamma_k^* \propto \lambda_k^s$, where $s > 0$ is the prior spectral decay rate.\footnote{We write $s > 0$ in theoretical statements and $|s|$ in experiments, where the exponent is estimated as the absolute value of a regression slope.}
The argument chains a Bayesian calculation, Berry's conjecture, and Weyl's law; full derivation in Appendix~\ref{app:proof}.


\subsection{Step 1: If the noise is flat, the answer is immediate}

The truncation noise $\boldsymbol{\eta}_{\mathrm{trunc}}(t) = \sum_{n>K} a_n(t)\,\boldsymbol{\varphi}_n$ has covariance (at $t=0$; the uniform damping factor cancels)
{\footnotesize
\begin{equation}
\label{eq:R-decompose}
R_{\mathrm{trunc}} = \sum_{n>K} \sigma^2_{a,n}\,\boldsymbol{\varphi}_n \boldsymbol{\varphi}_n^\top = \sigma^2_{\mathrm{trunc}}\bigl(I_M + E\bigr),
\end{equation}}
where $\sigma^2_{\mathrm{trunc}} = M^{-1}\sum_{n>K} \sigma^2_{a,n}$ is the average noise power and $E$ captures the deviation from perfect isotropy.
When the largest eigenvalue $\|E\|_{\mathrm{op}}$ is small, the noise is effectively the same in every direction.

\begin{proposition}[Isotropy $\Rightarrow$ power-law regularization]
\label{prop:isotropy}
If $R_{\mathrm{trunc}} = \sigma^2 I_M$ and the prior is $\mathbf{a} \sim \mathcal{N}(0, \Sigma_{\mathbf{a}})$ with $\Sigma_{kk} = c\,\lambda_k^{-s}$, then the Bayes-optimal Tikhonov regularizer is $\Gamma^*_{kk} \propto \lambda_k^s$, i.e., $p^* = s$.\footnote{Stated and proved at $T=1$; the $T>1$ case is treated in Appendix~\ref{app:temporal}, where the residual temporal anisotropy is absorbed into the empirical relative cost reported in \S\ref{sec:acoustic}.}
\end{proposition}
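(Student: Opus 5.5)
Since the noise is Gaussian with covariance $\sigma^2 I_M$, the prior is Gaussian, and the forward map is linear, I will compute the posterior mean in closed form and match it to the Tikhonov estimator~\eqref{eq:tikhonov}. At $T=1$ the model is $\mathbf{y} = \Phi\mathbf{a} + \boldsymbol{\eta}$ with $\boldsymbol{\eta}\sim\mathcal{N}(0,\sigma^2 I_M)$. The first step is to write the negative log-posterior,
\[
-\log p(\mathbf{a}\mid\mathbf{y}) = \tfrac{1}{2\sigma^2}\|\mathbf{y}-\Phi\mathbf{a}\|^2 + \tfrac12\,\mathbf{a}^\top\Sigma_{\mathbf{a}}^{-1}\mathbf{a} + \text{const}.
\]
Multiplying by $2\sigma^2$ shows that the MAP estimate minimizes $\|\mathbf{y}-\Phi\mathbf{a}\|^2 + \sigma^2\,\mathbf{a}^\top\Sigma_{\mathbf{a}}^{-1}\mathbf{a}$. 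Because the posterior is Gaussian, the MAP estimate coincides with the posterior mean $(\Phi^\top\Phi+\sigma^2\Sigma_{\mathbf{a}}^{-1})^{-1}\Phi^\top\mathbf{y}$. The posterior mean minimizes $\mathbb{E}\|\hat{\mathbf{a}}-\mathbf{a}_0\|^2$ over all measurable estimators, and therefore also minimizes the ratio $P$ in~\eqref{eq:P-modal}, since the denominator does not depend on the estimator.

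The second step is to identify this estimator with the Tikhonov family. The posterior mean equals $\hat{\mathbf{a}}$ in~\eqref{eq:tikhonov} with $\alpha\Gamma = \sigma^2\Sigma_{\mathbf{a}}^{-1}$. Since $\Sigma_{\mathbf{a}}$ is diagonal with $\Sigma_{kk}=c\lambda_k^{-s}$, this gives $\Gamma^*_{kk} = \lambda_k^s$ and $\alpha^* = \sigma^2/c$. The Tikhonov estimator depends only on the product $\alpha\Gamma$, so the pair is determined only up to the rescaling $(\alpha,\Gamma)\mapsto(\alpha/\kappa,\kappa\Gamma)$, which is exactly the ``$\propto$'' in the statement. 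Hence $p^*=s$.

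The third step is to show that this optimum is essentially unique within the Tikhonov family, so that no other shape attains the Bayes risk. I would argue by strict convexity of the Bayes risk in the linear estimator. Any linear estimator $L\mathbf{y}$ has risk
\[
\operatorname{tr}\bigl[(L\Phi-I)\Sigma_{\mathbf{a}}(L\Phi-I)^\top\bigr] + \sigma^2\operatorname{tr}(LL^\top).
\]
This is strictly convex in $L$ because the $\sigma^2\operatorname{tr}(LL^\top)$ term is positive definite. Its unique minimizer is the posterior-mean gain $L^*$. Therefore any $(\alpha,\Gamma)$ with the same risk must satisfy $(\Phi^\top\Phi+\alpha\Gamma)^{-1}\Phi^\top = L^*$.

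The main obstacle is this last uniqueness claim. When $M<2K$, the matrix $\Phi$ has a nontrivial null space, and matching $L$ does not obviously force $\alpha\Gamma=\sigma^2\Sigma_{\mathbf{a}}^{-1}$ on every coordinate. I will not try to prove full uniqueness of $\Gamma$. Instead I will state the proposition as sufficiency: the shape $\lambda_k^s$ attains the Bayes risk, and so it is optimal among all estimators and a fortiori among diagonal Tikhonov regularizers. The question of near-uniqueness I would leave to the flat-landscape discussion, where it is measured empirically.
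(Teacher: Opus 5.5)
Your argument is correct and is essentially the paper's proof: both identify the Gaussian posterior mean $(\Phi^\top\Phi+\sigma^2\Sigma_{\mathbf{a}}^{-1})^{-1}\Phi^\top\mathbf{y}$ with the Tikhonov estimator via $\alpha\Gamma=\sigma^2\Sigma_{\mathbf{a}}^{-1}$ and invoke its MMSE optimality, and the paper likewise proves only sufficiency, not uniqueness. The uniqueness step you set aside is in fact easy and does not depend on the null space of $\Phi$: $(\Phi^\top\Phi+\alpha\Gamma)L^*=\Phi^\top=(\Phi^\top\Phi+\sigma^2\Sigma_{\mathbf{a}}^{-1})L^*$ gives $(\alpha\Gamma-\sigma^2\Sigma_{\mathbf{a}}^{-1})L^*=0$, and row $k$ of $L^*=\Sigma_{\mathbf{a}}\Phi^\top(\Phi\Sigma_{\mathbf{a}}\Phi^\top+\sigma^2 I_M)^{-1}$ vanishes only when the $k$-th column of $\Phi$ does, so $\alpha\Gamma_{kk}=\sigma^2/\Sigma_{kk}$ is forced on every coordinate the sensors see, which pins $p=s$ (and $\alpha=\sigma^2/c$) within the power-law family as soon as two distinct $\lambda_k$ occur among those coordinates.
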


\begin{proof}
The MAP estimator under Gaussian prior and noise is $\hat{\mathbf{a}} = (\Phi^\top \Phi + \sigma^2 \Sigma_{\mathbf{a}}^{-1})^{-1} \Phi^\top \mathbf{y}$.
Comparing with~\eqref{eq:tikhonov}: $\alpha\,\Gamma = \sigma^2\,\Sigma_{\mathbf{a}}^{-1}$.
Since $\Sigma_{\mathbf{a}}^{-1}$ is diagonal with entries $c^{-1}\lambda_k^s$, the shape $\Gamma_{kk} \propto \lambda_k^s$ is determined entirely by the prior. The noise level sets only the overall strength $\alpha$.
\end{proof}

When noise is isotropic, the penalty shape is set entirely by the signal: modes carrying less energy are penalized more, because there is less to lose by suppressing them.
\citet{alberti2021learning} proved that the optimal shape requires knowing $\Sigma_x$; for wave-chaotic systems, physics gives $\Sigma_x$ analytically, leaving a single question: \emph{is the truncation noise actually isotropic?}

\subsection{Step 2: Berry's conjecture: the idealized isotropy mechanism}

The noise covariance~\eqref{eq:R-decompose} is a weighted sum of rank-one matrices $\boldsymbol{\varphi}_n \boldsymbol{\varphi}_n^\top$, one per discarded mode.
Whether this sum is isotropic depends on whether the vectors $\boldsymbol{\varphi}_n = [\varphi_n(x_1), \ldots, \varphi_n(x_M)]^\top$ are correlated across modes.

Berry's random-wave conjecture~\cite{berry1977regular} predicts that high-frequency eigenfunctions of generic bounded domains behave like random superpositions of plane waves.
We use a weaker version that only requires decorrelation at the sensor locations:

\begin{conjecture}[Sensor-averaged eigenfunction independence]
\label{conj:berry}
For generic sensor placements $\{x_m\}_{m=1}^M$ drawn uniformly in $\Omega \subset \mathbb{R}^2$, the cross-correlations

{\footnotesize 
\begin{equation}
\label{eq:C-kn}
C_{kn} \coloneqq \frac{1}{M} \sum_{m=1}^M \varphi_k(x_m)\,\varphi_n(x_m)
\end{equation}}

satisfy $\mathbb{E}[C_{kn}^2] \approx 1/M$ for $k \ne n$ under the discrete sensor normalization $\sum_m \varphi_n(x_m)^2 \approx 1$ (Appendix~\ref{app:anisotropy}).
\end{conjecture}

In words: two different eigenmodes sampled at random microphone positions are approximately uncorrelated.
For sensors drawn independently and uniformly in $\Omega$, the off-diagonal entries of the anisotropy matrix $E$ satisfy

{\footnotesize 
\begin{equation}\label{eq:anisotropy_bound}
\mathbb{E}[E_{ij}^2]
\;\lesssim\;
\frac{\sum_{n>K} (\sigma^2_{a,n})^2}{\bigl(\sum_{n>K} \sigma^2_{a,n}\bigr)^2}
\;=\;
H\,,
\end{equation}}

where $H$ is the \emph{Herfindahl index} of the noise power distribution across truncated modes, a standard concentration measure from economics.
$H$ equals the probability that two randomly drawn units of noise power come from the same mode: if one mode dominates, $H \approx 1$ and isotropy fails; if many modes contribute roughly equally, $H \ll 1$ and isotropy holds.
The Frobenius norm gives $\|E\|_{\mathrm{op}} \leq \|E\|_F$, so $\mathbb{E}\|E\|_{\mathrm{op}}^2 \leq \mathbb{E}\|E\|_F^2 \lesssim M(M{+}1)H$.
By Jensen, $\mathbb{E}\|E\|_{\mathrm{op}} \sim \sqrt{M(M{+}1)H} \approx 0.60$ at $M = 8$ and median $H \approx 0.005$. 
This is a median-$H$ order-of-magnitude estimate, consistent with the empirical median $0.58$ across $187$ rooms (Appendix~\ref{app:bernstein}).

\subsection{Step 3: Weyl's law: why the conclusion survives Berry violations}

Weyl's eigenvalue counting formula~\cite{weyl1912asymptotische, ivrii2016100} states that in two dimensions, the number of eigenvalues below $\lambda$ grows as
{\footnotesize
\begin{equation}
\label{eq:weyl}
N(\lambda) \sim \frac{|\Omega|}{4\pi}\,\lambda, \qquad \lambda \to \infty.
\end{equation}}
For our rooms, this gives a median of $K_{\mathrm{total}} - K \approx 263$ truncated modes, far more than enough for concentration.
With prior decay $\sigma^2_{a,n} \propto \lambda_n^{-s}$ and $|s| \approx 1.13$, the Herfindahl index evaluates to
{\footnotesize
\begin{equation}
\label{eq:herfindahl}
H = \frac{\sum_{n>K}\lambda_n^{-2s}}{\bigl(\sum_{n>K}\lambda_n^{-s}\bigr)^2} \approx 0.005.
\end{equation}}
The resulting noise anisotropy is moderate but the regularizer shape is insensitive to it: across $187$ rooms at $T{=}1000$, higher anisotropy actually correlates with \emph{lower} cost (Spearman $\rho = -0.30$, $p < 10^{-4}$; Figure~\ref{fig:delta-vs-Eop}, with the direct rectangular control in Appendix~\ref{app:weyl-dominance}).
The bottleneck is therefore eigenvalue dynamic range, not noise anisotropy: we call this \emph{Weyl dominance}.

%% file: sections/5_experiments.tex
\section{Empirical Verification on Acoustic Rooms}
\label{sec:acoustic}

We evaluate $\Gamma_k = \lambda_k^{|s|}$ on the $187$ rooms of our $197$-room dataset (those with truncation noise, $K_{\mathrm{total}} > K$), using $K = 50$ modes and $M = 8$ microphones.
The $10$ rooms with $K_{\mathrm{total}} \leq K$ lie outside the truncation regime assumed by Proposition~\ref{prop:isotropy} and are reported as a boundary stress test in Appendix~\ref{app:per-room}.
The excitation prior is $\sigma^2_{a,k} \propto \lambda_k^{-|s|}$ with $|s|=1.13$ by construction; OLS in log space recovers $|\hat{s}| = 1.13 \pm 0.05$ (bootstrap $95\%$ CI $[1.08, 1.18]$).
The per-room fitting procedure, its robustness to $T$, $M$, and subset size, and the leakage analysis are in Appendix~\ref{app:estimating-s}.
The flat-landscape pattern persists when the truncation rank, sensor count, and excitation exponent are varied ($K{=}100$, $M{\in}\{8,16\}$, $|s|{=}1.29$; Appendix~\ref{app:v2}).
For each room, we sweep $p$ over a 61-point grid on $[0, 6]$ and compute $P(p, T)$ at ten snapshot counts $T \in \{1, 5, 10, 20, 50, 100, 200, 500, 1000, 2100\}$, grid-searching $\alpha$ at each operating point.
The Gaussian power-law prior is tested against heavy-tailed Student-$t$ ($\nu{=}3$) and correlated-Gaussian ($\rho{=}0.3$) alternatives, with landscape remaining flat under both (Appendix~\ref{app:prior-robustness}).
\begin{figure}[t]
  \centering
  \includegraphics[width=0.5\linewidth]{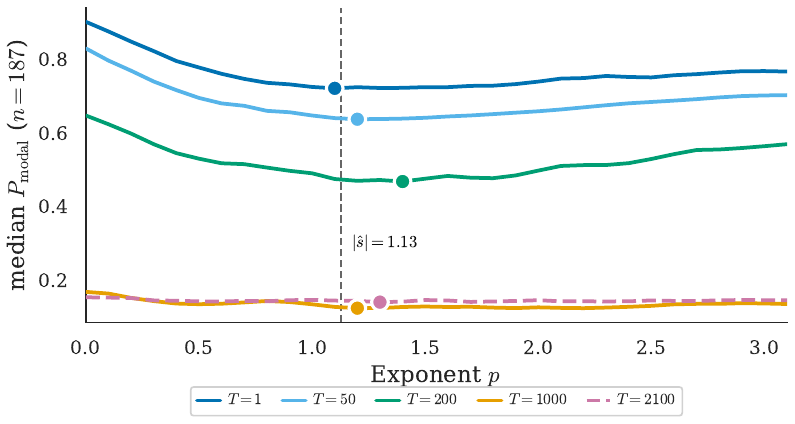}
  \caption{\footnotesize \textbf{The reconstruction landscape flattens with data, and $|\hat{s}|{=}1.13$ stays inside every basin.}
  Median $P_{\mathrm{modal}}(p)$ over $n{=}187$ in-scope rooms ($M{=}8$, $K{=}50$); dashed line: $|\hat{s}|{=}1.13$; dots: per-$T$ oracle $p^\star$.
  The $P$-range across $p\in[0,3]$ compresses ${\sim}14\times$ from $T{=}1$ ($0.18$) to $T{=}2100$ ($0.013$), faster than $p^\star$ drifts ($1.2\!\to\!2.0$).
  A single fixed $|\hat{s}|$ therefore lies inside the optimum at every $T$ (cost: Table~\ref{tab:cost_tiers}).}
  \label{fig:landscape}
  \vspace{-10pt}
\end{figure}

\paragraph{The landscape is flat.}
Figure~\ref{fig:landscape} shows the population-median $P(p)$ at five snapshot counts.
At every $T$, the curve has a broad, shallow minimum: the median $P$ range across $p \in [0, 3]$ is $0.18$ at $T{=}1$ and compresses to $0.013$ at $T{=}2100$, a ${\sim}14\times$ flattening as evidence accumulates.
The per-room oracle $p^*$ drifts from ${\approx}\,1.2$ to ${\approx}\,2.0$ as $T$ grows, but the basin widens faster than the optimum shifts. 
Berry isotropy concentrates per-room exponents ($\mathrm{std}(p^*) = 0.31$ at $T{=}50$) and Weyl spacing bounds eigenvalue dynamic range, limiting the curvature of $P(p)$.

\paragraph{Empirical verification of isotropy.}
We test Conjecture~\ref{conj:berry} directly.
Under Berry's prediction, $|\Omega| \cdot \varphi_k(x_m)^2$ should follow a $\chi^2(1)$ distribution for uniformly random sensor positions $x_m$.
Pooling across all rooms ($N = 77{,}968$ samples), the KS statistic against $\chi^2(1)$ is $D = 0.037$. 
The empirical distribution deviates from the Berry prediction by at most $3.7\%$.
Per-room KS breakdowns, boundary-stratified statistics, and worst-case room analysis are in Appendix~\ref{app:berry}.

\begin{table}[t]
\centering
\footnotesize
\caption{%
  \footnotesize \textbf{Cost of using $|s|{=}1.13$ vs.\ per-room tuning meets its target at every regime.}
  Median $\delta(T) = (P(|s|, T) - P(p^*, T))/P(p^*, T)$ across $187$ rooms ($K{=}50$, $M{=}8$).
  ``Bound'' is the practitioner-facing target per regime; ``Actual'' is the measured median.
  Worst case ($5.82\%$) is in the data-rich regime, where the oracle has the most room to exploit (per-$T$ in Table~\ref{tab:full-cost}).%
}
\label{tab:cost_tiers}
\small
\begin{tabular}{@{}lcc@{}}
\toprule
Regime & Bound & Actual \\
\midrule
$T \leq 50$ (prior-dominated)    & ${<}\,1\%$ & $0.60\%$ \\
$T \leq 100$ (crossover)         & ${<}\,2\%$ & $1.46\%$ \\
All $T$ (incl.\ $T{=}1000$)     & ${<}\,6\%$ & $5.82\%$ \\
\bottomrule
\end{tabular}
\vspace{-5mm}
\end{table}

\paragraph{The cost of using $|s|$.}
The relative cost $\delta(T) = (P(|s|, T) - P(p^*, T)) / P(p^*, T)$ is summarized in Table~\ref{tab:cost_tiers}; even at the worst snapshot count, $68.4\%$ of the $187$ in-scope rooms incur less than $1.1$~pp absolute cost.
The worst in-scope absolute cost across all $T$ is $7.61$~pp in the prior-dominated regime (per-room breakdowns and full $\delta(T)$ table in Appendices~\ref{app:per-room} and~\ref{app:full-cost}).
As $T$ grows the landscape compresses until ``optimal'' per-room exponents become ill-defined; \Cref{sec:learned} tests whether a learned model can find any remaining structure to exploit, and finds none within the diagonal family.

%% file: sections/6_can_learning_improve.tex
\section{Can Diagonal Learning Improve Upon $|s|$?}
\label{sec:learned}

Proposition~\ref{prop:isotropy} establishes $\Gamma_k \propto \lambda_k^{|s|}$ as Bayes-optimal within the diagonal family under exact isotropy.
The empirical $5.82\%$ gap between the population exponent and the per-room oracle (Table~\ref{tab:cost_tiers}) leaves room, in principle, for a learned regularizer that exploits per-room structure within the diagonal family. 
We train three architectures to look for it.

\paragraph{Setup.}
We train three architectures on $n{=}800$ rooms ($K{=}50$, $M{=}8$, five seeds each).
\textsc{M1} (4{,}949 parameters) conditions a diagonal $\Gamma$ on a 10-dimensional per-mode feature vector via a CondNet, plugged into an $L{=}10$ unrolled gradient-descent solve of the diagonal Tikhonov objective.
\textsc{M2} (70 parameters) learns a single unconditional $\Gamma$ shared across all rooms, plugged into the same $L{=}10$ unrolled solver as M1.
\textsc{M3} (4{,}930 parameters) uses the same CondNet as M1, but plugs $\Gamma$ into a closed-form differentiable linear solve.
If per-room adaptation helps, M1 and M3 should outperform both M2 and the physics-derived $|s|$.
The strength $\alpha$ is learned jointly with $\Gamma$ during training, and the trained $\alpha$ is used at evaluation; reported $P_{\mathrm{modal}}$ values use each model's trained $\alpha$.
Full architecture specifications and parameter-count derivations are in Appendix~\ref{app:architectures}.

\paragraph{Training dynamics.}
All three models converge smoothly across 500 epochs.
Inside M3, however, the learned regularization spectrum is unstable across seeds: at $T{=}1000$, five seeds learn qualitatively different shapes (per-seed effective exponent ranges from $-0.22$ to $1.09$), yet all achieve identical reconstruction error (training curves and $\hat{p}$ trajectories in Appendix~\ref{app:training-curves}).

\begin{figure}[t]
  \centering
  \includegraphics[width=0.8\linewidth]{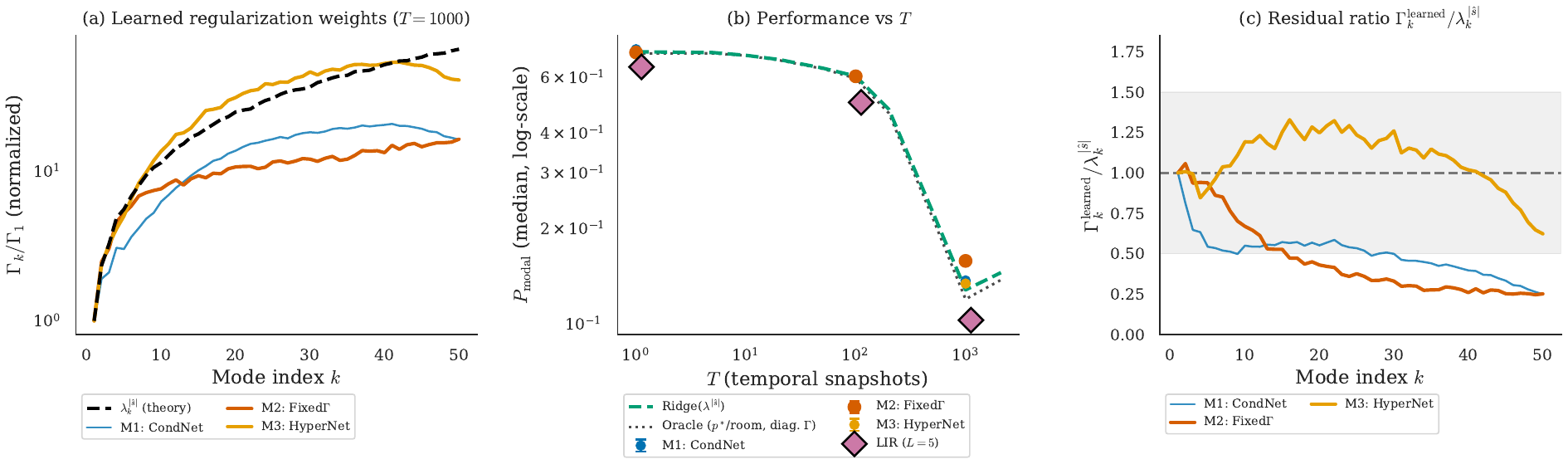}
  \caption{{\footnotesize \textbf{M3 recovers $|s|$; M2 does not; the basin absorbs the difference.}
  (a)~Learned $\Gamma_k/\Gamma_1$ at $T{=}1000$ for M1, M2, M3, and theory (dashed); M3's per-room median exponent $\hat{p}_{M3}{=}1.13$ matches theory's $|s|{=}1.13$, while M2 finds $\hat{p}_{M2}{=}0.62$.
  (b)~$P$ vs.\ $T$ (log scale): M1/M2/M3 sit on the ridge($|\hat{s}|$) curve; LIR ($L{=}5$) is the only model that breaks below the per-room oracle.
  (c)~Residual $\Gamma_k^{\mathrm{learned}}/\lambda_k^{|\hat{s}|}$: M1/M2 collapse to ${\le}\,0.25$ at high $k$; only M3 stays in $[0.5, 1.5]$ across all $50$ modes.}}
  \label{fig:learned-spectra}
  \vspace{-15pt}
\end{figure}

\paragraph{Five networks, one answer.}
Despite per-seed spectra that vary qualitatively across initializations, M3 achieves $P$ within $0.31$\,pp of ridge at $|s|$ at every $T$.
The gap $(P_{\mathrm{M3}} - P_{\mathrm{ridge}})$ is ${-}0.19$\,pp at $T{=}1$, ${+}0.16$\,pp at $T{=}100$, and ${+}0.31$\,pp at $T{=}1000$ (Table~\ref{tab:m3_performance}).
Five seeds, five different learned regularizers, one reconstruction error.
The experiments verify two things: (i)~the residual anisotropy does not open an exploitable gap within the diagonal family, and (ii)~the loss is flat across the full reachable $\Gamma$-space.
At $T{=}1000$, M3's geometry-aware hypernet recovers per-room exponents matching theory in the median ($\hat{p}_{M3}{=}1.13$, IQR $[0.66, 1.39]$, $n{=}187$), while M2's geometry-blind global exponent ($\hat{p}_{M2}{=}0.62$) differs by $0.5\times$.
Yet population-median $P$ at the two exponents agrees within $0.009$: the basin absorbs the difference.

\paragraph{Why not fit $s$ per room?}
The label-free alternative to the population $|s|$ is a per-room slope $s_{\mathrm{room}}$ fit from each room's modal spectrum. 
But per-room estimation noise ($\sigma_{\text{per-room}} \approx 0.27$) exceeds the population-median standard error ($\sigma_{\text{pop}} \approx 0.026$) by an order of magnitude, leaving the deconvolved inter-room signal at effectively zero.
This places the problem in the classical James--Stein regime where shrinkage to the population mean empirically outperforms per-unit plug-in~\citep{james1961estimation, stein1956inadmissibility}; the analogy is qualitative because $P(p)$ is non-quadratic in the exponent (Appendix~\ref{app:sroom-vs-pop}).
Reconstruction with $s_{\mathrm{room}}$ fails to improve on $|s|$ at every snapshot count and is $0.28$~pp \emph{worse} at $T{=}50$ where the correlation between $s_{\mathrm{room}}$ and $p^*$ is strongest (Appendix~\ref{app:sroom-vs-pop}).

\begin{table}[t]
\footnotesize
\centering
    \caption{\footnotesize \textbf{Five learned shapes, one reconstruction error: M3 matches $\Gamma_k = \lambda_k^{|s|}$ within $0.31$\,pp at every $T$.}
    Mean $\pm$ std over $5$ seeds, $n{=}800$ training rooms.
    Gap (rightmost column) $= P_{\mathrm{ridge}}(|s|) - P_{\mathrm{oracle}}$ upper-bounds what per-room diagonal tuning could recover; M3 closes none of it.
    Cross-seed std at $T{=}1000$ is $<5\times 10^{-4}$ even though per-seed learned spectra differ in shape (per-room slope IQR $[0.66, 1.39]$ across the $5$ seeds at $n{=}187$).}
\label{tab:m3_performance}
\begin{tabular}{@{}l ccc c@{}}
\toprule
$T$ & $P_{\mathrm{ridge}}(|s|)$ & $P_{\mathrm{M3}}$ & $P_{\mathrm{oracle}}$ & Adapt.\ gap \\
\midrule
1    & $0.7264$ & $0.7245 \pm 0.0021$ & $0.7151$ & $1.13$\,pp \\
100  & $0.6035$ & $0.6051 \pm 0.0027$ & $0.5936$ & $0.99$\,pp \\
1000 & $0.1308$ & $0.1339 \pm 0.0003$ & $0.1224$ & $0.84$\,pp \\
\bottomrule
\end{tabular}
\vspace{-5mm} 
\end{table}

Across all three architectures, five training sizes, and three snapshot counts, $212$ of $212$ valid per-seed evaluations show $\Delta P \geq 0$ relative to the per-room oracle (Appendix~\ref{app:failures}). 
Geometric-feature regression in Appendix~\ref{app:features} confirms no room descriptor predicts the residual gap.
Strikingly, M2 (which learns a single shared $\Gamma$ by SGD without physics) independently converges to a power-law form ($R^2 > 0.96$), though with a shallower exponent ($\hat p_{M2}{=}0.62$ vs.\ theory's $1.13$).
SGD does not escape the power-law family; the basin's flatness lets it land on a different exponent at no cost in $P$.
The closed-form estimator $\Gamma_k = \lambda_k^{|s|}$ is therefore not a convenient default but a saturation point: within the diagonal family and across the parameterizations we tested, no point achieves robustly lower error across operating points.

\subsection{Beyond the Tikhonov family}
\label{sec:lir}

The diagonal models above are restricted to per-mode weights; we test whether coupling modes via the full $A^\top A$ structure can escape the oracle ceiling.
We use Learned Iterative Ridge (LIR), $L$ steps of learned gradient descent on the Tikhonov objective with per-layer $(\eta_l, \alpha_l, D_l)$ and $52L$ total parameters (Appendix~\ref{app:lir}).

At $L \geq 5$, LIR achieves lower reconstruction error than the per-room diagonal-Tikhonov oracle at all $T$ (Figure~\ref{fig:learned-spectra}b, purple diamonds): $P = \{0.621, 0.459, 0.103\}$ at $L{=}10$ versus the diagonal oracle's $\{0.715, 0.594, 0.122\}$, an improvement of $9.4$, $13.5$, and $1.9$\,pp respectively.
Under the approximate isotropy of our setting ($\|E\|_\mathrm{op} \approx 0.58$), the true Bayes estimator lies outside the diagonal family, so this gap lower-bounds the cost of the diagonal restriction itself (Proposition~\ref{prop:isotropy}).
This locates the diagonal saturation principle as an empirical boundary: within the per-mode family the physics-derived formula is empirically unimprovable by learning, and gains require cross-mode coupling.
Classical non-diagonal baselines (Wiener/LMMSE, generalized Tikhonov, TSVD, early-stopped CGLS, Landweber) and per-coordinate shrinkage (Pinsker, empirical Bayes) all apply fixed per-mode shrinkage profiles in the modal basis whose shape is set by the prior, and so do not close this gap (analysis in Appendix~\ref{app:lir}).

%% file: sections/7_cross_physics_validation.tex
\section{Extension to Heat Diffusion}
\label{sec:heat}

The acoustic results rest on one physical system.
A natural objection is that other PDEs might produce truncation noise with different structure, requiring a learned regularizer.
We now show that heat diffusion, a qualitatively different process, fits the same framework with one physically transparent modification.
For heat we estimate current-state amplitudes $a_k(t) = a_k(0)\,e^{-\kappa\lambda_k t}$ at each terminal snapshot; the backward problem of recovering $a_k(0)$ is exponentially ill-posed and not what we attempt (setup in Appendix~\ref{app:heat-why-different}).

\paragraph{A richer amplitude spectrum.}
In acoustics, the modal amplitude variance follows a power law: $\sigma^2_{a,k} \propto \lambda_k^{-|s|}$.
Heat diffusion changes this. The Green's function introduces an exponential decay $e^{-\kappa \lambda_k t}$ on top of the power-law initial conditions, so
\begin{equation}
\label{eq:heat-spectrum}
\sigma^2_{a,k}(t) \;\propto\; \lambda_k^{-s}\, e^{-c\,\lambda_k}, \qquad c = 2\kappa t.
\end{equation}
The one-parameter family $\Gamma_k = \lambda_k^p$ cannot capture the exponential roll-off.
The optimal regularizer is therefore $\Gamma_k \propto \lambda_k^s \cdot e^{c\,\lambda_k}$, the prior's exponential factor inverted as required by $\Gamma \propto \Sigma_a^{-1}$.

\paragraph{The second parameter is known.}
For each room and observation time, we fit $\log \hat{\sigma}^2_{a,k}(t) = \beta_0 - |s|\log\lambda_k - c\,\lambda_k$ by OLS, recovering $(\hat{s}, \hat{c})$ with $R^2 > 0.98$ at $t > 100$\,ms (procedure in Appendix~\ref{app:heat-fitting}; surfaces in Figure~\ref{fig:heat-2d-sweep}).
Across five diagnostic rooms, per-room regression slopes of $\hat{c}$ vs.\ $c_{\mathrm{theory}}$ fall in $[0.97, 1.00]$ with $R^2 \geq 0.998$, and per-room intercepts are below $0.10$ in absolute value (Appendix~\ref{app:heat-per-room}).
Crucially, $c = 2\kappa t$ is not fit from data: $t$ is the experimenter's choice and $\kappa$ is a material property (set to $\kappa = 1$ in our synthetic units).
The two-parameter regularizer is not ``fitted''. 
The exponential correction is set by theory, and only $|s|$ is estimated from data.
Sensitivity to $\kappa$ and the one-parameter fallback are in Appendices~\ref{app:kappa} and~\ref{app:heat-1vs2}.

\begin{figure}[t]
  \centering
  \includegraphics[width=0.6\linewidth]{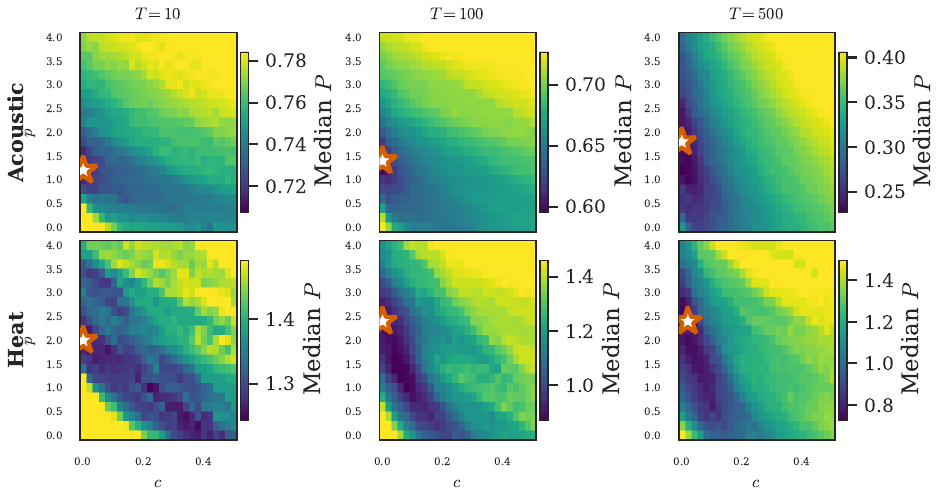}
  \caption{\footnotesize \textbf{Heat and acoustic landscapes differ in shape, not just optimum location.}
  $P(p, c)$ surfaces at three observation windows ($T \in \{10, 100, 500\}$); stars mark empirical optima.
  Acoustic optima sit at $c{=}0$: no temporal correction is needed and the one-parameter regularizer $\Gamma_k{=}\lambda_k^{p}$ suffices.
  Heat optima sit at $c{\approx}\,0$ and high $p$: the predicted exponential factor $c_\mathrm{theory}{=}2\kappa t$ is small in these snapshot units, so raising $p$ is a one-parameter fallback for the missing exponential factor (App.~\ref{app:heat-1vs2}).
  The quantitative cross-PDE verification of $\hat{c}{=}2\kappa t$ uses a different aggregation (per-snapshot OLS on log-variance) and is in Fig.~\ref{fig:c-vs-t} (slope $0.97$–$1.00$, $R^2 \geq 0.998$).}
  \label{fig:heat-2d-sweep}
  \vspace{-15pt}
\end{figure}

\paragraph{Same framework, different physics.}
Acoustics and heat diffusion differ in temporal structure, spectra, and optimal exponents.
For the synthetic heat data, $|s| \approx 1.0$, not $1.13$, because the excitation statistics differ.
But they share the same framework: a power-law component from the initial conditions, composed with PDE-specific corrections known from theory.
The formula is competitive with per-room oracle tuning while truncation noise dominates the residual budget. 
At late observation times, exponential prior decay drives signal energy below measurement noise and the margin grows, with a corresponding breakdown of the isotropy assumption (Appendix~\ref{app:heat_herfindahl}).
This is a cross-PDE consistency check on the framework's fitting procedure, not physical validation against an independent Green's function.

%% file: sections/8_discussions_and_limitations.tex
\section{Discussion and Conclusion}
\label{sec:discussion}

This paper addresses three questions about per-mode Tikhonov regularization in truncated modal inverse problems.
First, what is the optimal diagonal regularizer?
Under the approximate isotropy predicted by Berry's conjecture and confirmed empirically, the answer is $\Gamma_k = \lambda_k^{|s|}$, where $|s|$ is a property of the excitation process and not the room (Prop.~\ref{prop:isotropy}).
The single number stays within $5.82\%$ relative cost of per-room oracle tuning at the population median across $T$ and across rooms spanning triangles to decagons.
Second, can learning beat this formula?
We find no diagonal learned architecture that improves over the closed form, consistent with the empirical flatness of the per-mode landscape under the approximate isotropy of our setting.
We also identify precisely where learning does help: the cost of the diagonal restriction itself. 
This is recovered by the Learned Iterative Ridge estimator, which parameterizes non-diagonal coupling and improves over the diagonal-family oracle by 1.9--13.5\,pp depending on $T$ (\S\ref{sec:lir}).
Third, should a practitioner fit $|s|$ per room?
No: the per-room estimation standard error ($\sigma_{\text{per-room}} \approx 0.27$) exceeds the population-median standard error ($\sigma_{\text{pop}} \approx 0.026$) by roughly an order of magnitude, and the deconvolved inter-room signal is effectively zero. 
This places the problem in the classical James--Stein regime where shrinkage to the population mean empirically outperforms per-unit plug-in (Appendix~\ref{app:sroom-vs-pop}).

The exponential correction for heat diffusion adds no new free parameters and serves as a cross-PDE consistency check.
The per-mode landscape is flat enough that no diagonal architecture exploits curvature within it; the remaining frontier is structural: cross-mode coupling, temporal dynamics, trajectory sensing.

\paragraph{Scope and limitations.}
Our rooms are random convex 2D polygons; the Gaussian prior is tested against heavy-tailed and correlated alternatives (Appendix~\ref{app:prior-robustness}).
Extension to 3D is left as future work. 
Weyl's law has stronger growth in three dimensions ($\lambda^{3/2}$ vs $\lambda$), giving a larger truncated-mode count and a stronger Weyl-dominance margin. 
The isotropy argument is therefore expected to carry over to non-integrable 3D geometries.
Sensors are drawn i.i.d.\ uniformly, matching Berry's premise; structured arrays (linear, circular, spherical) may introduce $\varphi$-correlations the uniform analysis does not capture, and we view empirical validation on those layouts as a natural extension.
Rectangular rooms (where Berry's premise fails analytically) have similarly flat landscapes, because Weyl's law provides an overwhelming mode count: \emph{Weyl dominance} (Appendix~\ref{app:weyl-dominance}).
Electronic sensor noise leaves $\Gamma$ unchanged (only $\alpha$ adjusts); frequency-dependent damping composes with $|s|$ via the heat framework (Appendix~\ref{app:sensor-noise}).
Whether non-diagonal estimators beyond LIR can escape the diagonal ceiling, and whether learned methods fail to beat the analytic heat regularizer, remain open.

\paragraph{Beyond synthetic data.}
The framework's domain-agnostic structure invites physical instantiation; a real-data pilot in a compact room (16-mic UMA-16 array, $V{=}6.9$\,m$^3$) confirms the predicted aperture-bounded rank-3 spatial sampling and the flat-landscape prediction, but recovers a slope below the population value.
The recovered slope $|\hat{s}|{=}0.83$ underestimates the population $|s|{=}1.13$, with the gap consistent with several recording-chain effects we cannot disentangle from a single static array (enumerated in Appendix~\ref{app:aperture}).
Closing it motivates trajectory-based sensing as the natural extension.

\paragraph{The remaining frontier.}
No single room-adaptive rule gains across all operating points within the diagonal family. 
The remaining frontier is sequential temporal modeling (Kalman filters, state-space methods) that exploits modal dynamics across a full recording rather than treating snapshots as exchangeable.

%% file: supplementary/A_extended_proof.tex

\section{Exact Theorem and Approximate Isotropy Argument}
\label{app:proof}

This appendix expands the three-step argument of \S\ref{sec:theory}.
Step~1 (\S\ref{app:prop1}) is an exact theorem: under isotropic Gaussian noise and a diagonal Gaussian prior, the Bayes-optimal Tikhonov shape is $\Gamma_k \propto \lambda_k^s$.
Steps~2--3 (\S\ref{app:anisotropy}--\ref{app:herfindahl}) provide physically motivated reasoning, based on Berry's conjecture and Weyl's law, for why the isotropy assumption should approximately hold.
The primary evidence that this approximation is adequate comes from the empirical verification in \S\ref{sec:acoustic} and Appendix~\ref{app:berry}.

We use the following notation throughout.
$K$ is the number of retained modes.
$M$ is the number of microphones.
$\lambda_k$ is the eigenvalue of mode $k$.
$\varphi_k(x_m)$ is the value of eigenfunction $k$ at microphone position $x_m$.
$\boldsymbol{\varphi}_k = [\varphi_k(x_1), \ldots, \varphi_k(x_M)]^\top \in \mathbb{R}^M$ is the vector of eigenfunction $k$ evaluated at all microphone positions.
$\sigma^2_{a,k}$ is the variance of the $k$-th modal amplitude under the prior.
$s > 0$ is the spectral decay exponent, so $\sigma^2_{a,k} = c\,\lambda_k^{-s}$ for some constant $c > 0$.

\subsection{Bayesian derivation of Proposition~\ref{prop:isotropy}}
\label{app:prop1}

We want to show that when the truncation noise is isotropic and the prior is a power-law diagonal Gaussian, the Bayes-optimal Tikhonov regularizer is $\Gamma_k \propto \lambda_k^s$.
We proceed in three steps: (i) write down the generative model, (ii) derive the posterior, (iii) extract the MAP estimator and identify the optimal $\Gamma$.

\paragraph{Step (i): The generative model.}
The observation model from eq.~\eqref{eq:obs-stacked} is
\begin{equation}
\tilde{\mathbf{y}} = \tilde{\Phi}\,\mathbf{a}_0 + \tilde{\boldsymbol{\eta}},
\end{equation}
where $\tilde{\mathbf{y}} \in \mathbb{R}^{MT}$ is the stacked measurement vector, $\tilde{\Phi} \in \mathbb{R}^{MT \times 2K}$ is the combined spatial-temporal measurement matrix, and $\mathbf{a}_0 \in \mathbb{R}^{2K}$ collects the initial modal amplitudes $(c_1, \beta_1, \ldots, c_K, \beta_K)$.

We assume:
\begin{itemize}[leftmargin=*, itemsep=2pt]
\item \textbf{Prior:} $\mathbf{a}_0 \sim \mathcal{N}(\mathbf{0}, \Sigma_{\mathbf{a}})$, where $\Sigma_{\mathbf{a}}$ is diagonal with $(\Sigma_{\mathbf{a}})_{kk} = c\,\lambda_k^{-s}$.
Each mode pair $(c_k, \beta_k)$ shares the same variance $c\,\lambda_k^{-s}$, and different modes are independent.
This means low-frequency modes (small $\lambda_k$) have large variance (lots of energy) and high-frequency modes (large $\lambda_k$) have small variance (little energy).
\item \textbf{Noise:} $\tilde{\boldsymbol{\eta}} \sim \mathcal{N}(\mathbf{0}, \sigma^2 I_{MT})$, i.e., the truncation noise is isotropic with variance $\sigma^2$ per measurement.
This is the key assumption.
We will spend all of \S\ref{app:anisotropy} justifying it.
\end{itemize}

\paragraph{Step (ii): The posterior.}
Since both the prior and the likelihood are Gaussian, the posterior is also Gaussian.
We derive it from scratch.

The likelihood of observing $\tilde{\mathbf{y}}$ given $\mathbf{a}_0$ is
\begin{equation}
p(\tilde{\mathbf{y}} \mid \mathbf{a}_0) = \frac{1}{(2\pi\sigma^2)^{MT/2}} \exp\!\Bigl(-\frac{1}{2\sigma^2}\|\tilde{\mathbf{y}} - \tilde{\Phi}\,\mathbf{a}_0\|^2\Bigr).
\end{equation}

The prior is
\begin{equation}
p(\mathbf{a}_0) = \frac{1}{(2\pi)^{K}\,|\Sigma_{\mathbf{a}}|^{1/2}} \exp\!\Bigl(-\frac{1}{2}\mathbf{a}_0^\top \Sigma_{\mathbf{a}}^{-1}\,\mathbf{a}_0\Bigr).
\end{equation}

By Bayes' rule, $p(\mathbf{a}_0 \mid \tilde{\mathbf{y}}) \propto p(\tilde{\mathbf{y}} \mid \mathbf{a}_0)\,p(\mathbf{a}_0)$.
Taking the log and keeping only terms that depend on $\mathbf{a}_0$:
\begin{align}
\log p(\mathbf{a}_0 \mid \tilde{\mathbf{y}}) &= \mathrm{const} - \frac{1}{2\sigma^2}\|\tilde{\mathbf{y}} - \tilde{\Phi}\,\mathbf{a}_0\|^2 - \frac{1}{2}\mathbf{a}_0^\top \Sigma_{\mathbf{a}}^{-1}\,\mathbf{a}_0 \nonumber \\
&= \mathrm{const} - \frac{1}{2}\Bigl[\frac{1}{\sigma^2}(\tilde{\mathbf{y}} - \tilde{\Phi}\,\mathbf{a}_0)^\top(\tilde{\mathbf{y}} - \tilde{\Phi}\,\mathbf{a}_0) + \mathbf{a}_0^\top \Sigma_{\mathbf{a}}^{-1}\,\mathbf{a}_0\Bigr].
\end{align}

Expanding the quadratic form in $\mathbf{a}_0$ and completing the square (a standard exercise in Bayesian linear regression), the posterior is $\mathcal{N}(\hat{\mathbf{a}}_{\mathrm{MAP}}, \Sigma_{\mathrm{post}})$ with
\begin{equation}
\label{eq:map-full}
\hat{\mathbf{a}}_{\mathrm{MAP}} = \Bigl(\tilde{\Phi}^\top \tilde{\Phi} + \sigma^2 \Sigma_{\mathbf{a}}^{-1}\Bigr)^{-1} \tilde{\Phi}^\top \tilde{\mathbf{y}}.
\end{equation}

\paragraph{Step (iii): Connecting to Tikhonov.}
Compare eq.~\eqref{eq:map-full} with the Tikhonov estimator from eq.~\eqref{eq:tikhonov}:
\begin{equation}
\hat{\mathbf{a}}_{\mathrm{Tikh}} = \bigl(\tilde{\Phi}^\top \tilde{\Phi} + \alpha\,\Gamma\bigr)^{-1} \tilde{\Phi}^\top \tilde{\mathbf{y}}.
\end{equation}
These are the same estimator if and only if
\begin{equation}
\label{eq:alpha-gamma}
\alpha\,\Gamma = \sigma^2\,\Sigma_{\mathbf{a}}^{-1}.
\end{equation}
Since $\Sigma_{\mathbf{a}}$ is diagonal with $(\Sigma_{\mathbf{a}})_{kk} = c\,\lambda_k^{-s}$, the inverse is $(\Sigma_{\mathbf{a}}^{-1})_{kk} = c^{-1}\,\lambda_k^{s}$.
Therefore
\begin{equation}
\Gamma_{kk} = \frac{\sigma^2}{\alpha\,c}\,\lambda_k^{s}.
\end{equation}
The prefactor $\sigma^2 / (\alpha\,c)$ is a constant that can be absorbed into $\alpha$.
What matters is the \emph{shape}: $\Gamma_{kk} \propto \lambda_k^s$, i.e., $p^* = s$.

\paragraph{Remark: why the shape is determined by the prior alone.}
Under jointly Gaussian prior and likelihood, the MAP estimator equals the posterior mean, which is the minimum-MSE estimator among \emph{all} estimators (not just linear ones).
The identification $\alpha\,\Gamma = \sigma^2\,\Sigma_{\mathbf{a}}^{-1}$ from Step~(iii) therefore gives the globally optimal Tikhonov shape: $\Gamma_{kk} \propto \lambda_k^s$.
The forward operator $\tilde{\Phi}^\top \tilde{\Phi}$ couples modes in the estimator (and its off-diagonal structure is operationally important, see \S\ref{sec:lir}). 
This coupling does not affect the optimal penalty \emph{shape}, which is set entirely by $\Sigma_{\mathbf{a}}^{-1}$.
The noise level $\sigma^2$ and the operator structure $\tilde{\Phi}^\top\tilde{\Phi}$ affect only the scalar strength $\alpha$.
\qed

\paragraph{Scope of optimality.}
Proposition~1 establishes that under \emph{exact} isotropy ($E=0$) and a Gaussian power-law prior, the diagonal Tikhonov estimator with $\Gamma_k \propto \lambda_k^s$ coincides with the posterior mean and is therefore MMSE-optimal among all estimators.
In our setting isotropy is approximate ($\|E\|_\mathrm{op} \approx 0.58$), so the closed-form Tikhonov estimator is no longer exactly MMSE; the Bayes-optimal estimator under the true (mildly anisotropic) noise covariance is a non-diagonal estimator that couples modes through $E$.
Throughout the paper, ``per-room oracle'' refers to the best estimator within the diagonal Tikhonov family $(p^\star, \alpha^\star)$, not to the unconstrained Bayes estimator.
The diagonal-saturation result of §6 should be read as: \emph{within the diagonal family}, learning cannot improve on $\Gamma_k = \lambda_k^{|s|}$; the LIR result of §6.1 quantifies how much is left on the table by the diagonal restriction itself.

\subsection{Anisotropy bound derivation}
\label{app:anisotropy}
\label{app:bernstein}

We now derive the bound on the anisotropy matrix $E$ from eq.~\eqref{eq:R-decompose} in full detail.
This is the mathematical heart of the Berry step.

\paragraph{Setup.}
Recall the truncation noise covariance:
\begin{equation}
R_{\mathrm{trunc}} = \sum_{n>K} \sigma^2_{a,n}\,\boldsymbol{\varphi}_n\boldsymbol{\varphi}_n^\top.
\end{equation}
We want to show this is approximately proportional to $I_M$.
Define the average noise power
\begin{equation}
\sigma^2_{\mathrm{trunc}} = \frac{1}{M}\sum_{n>K} \sigma^2_{a,n}
\end{equation}
and write
\begin{equation}
R_{\mathrm{trunc}} = \sigma^2_{\mathrm{trunc}}\bigl(I_M + E\bigr),
\end{equation}
where $E$ is the \emph{anisotropy matrix}, the deviation from perfect isotropy.
We investigate the bound of $\|E\|_{\mathrm{op}}$.

\paragraph{Normalization convention.}
The Frobenius bound below is normalization-invariant: $H$ and $\|E\|_{\mathrm{op}}$ are ratios in which the $\varphi_n$ normalization cancels.
For convenience we use the discrete sensor normalization $\sum_{m=1}^{M} \varphi_n(x_m)^2 \approx 1$ (so $\mathbb{E}[\varphi_n(x_m)^2] \approx 1/M$ under Berry), differing from the $L^2$-normalization of \S\ref{sec:setup} by $|\Omega|/M$.
The induced $O(1/M)$ coupling between $\varphi_n(x_m)$ values across sensors is absorbed into the $\lesssim$ constants of the bound; empirical agreement (median $\|E\|_{\mathrm{op}} = 0.58$ vs.\ bound $0.60$) confirms the approximation at $M = 8$.

\paragraph{The entries of $E$.}
The $(i,j)$ entry of $R_{\mathrm{trunc}}$ is
\begin{equation}
(R_{\mathrm{trunc}})_{ij} = \sum_{n>K} \sigma^2_{a,n}\,\varphi_n(x_i)\,\varphi_n(x_j).
\end{equation}
For the diagonal entries ($i = j$):
\begin{equation}
(R_{\mathrm{trunc}})_{ii} = \sum_{n>K} \sigma^2_{a,n}\,\varphi_n(x_i)^2.
\end{equation}
For isotropy, we need $(R_{\mathrm{trunc}})_{ii} \approx \sigma^2_{\mathrm{trunc}}$ for all $i$ and $(R_{\mathrm{trunc}})_{ij} \approx 0$ for $i \neq j$.

Rearranging the definition $R_{\mathrm{trunc}} = \sigma^2_{\mathrm{trunc}}(I_M + E)$, we get
\begin{equation}
E_{ij} = \frac{1}{\sigma^2_{\mathrm{trunc}}} \sum_{n>K} \sigma^2_{a,n}\Bigl(\varphi_n(x_i)\,\varphi_n(x_j) - \frac{\delta_{ij}}{M}\Bigr)
\end{equation}
where $\delta_{ij}$ is the Kronecker delta.

\paragraph{Applying Berry's conjecture.}
We bound the expected squared magnitude of the off-diagonal entries ($i \neq j$).
For $i \neq j$:
\begin{equation}
E_{ij} = \frac{1}{\sigma^2_{\mathrm{trunc}}} \sum_{n>K} \sigma^2_{a,n}\,\varphi_n(x_i)\,\varphi_n(x_j).
\end{equation}
Squaring:
\begin{equation}
E_{ij}^2 = \frac{1}{(\sigma^2_{\mathrm{trunc}})^2} \Bigl(\sum_{n>K} \sigma^2_{a,n}\,\varphi_n(x_i)\,\varphi_n(x_j)\Bigr)^2.
\end{equation}
Expanding the square of the sum:
\begin{equation}
E_{ij}^2 = \frac{1}{(\sigma^2_{\mathrm{trunc}})^2} \sum_{n>K}\sum_{n'>K} \sigma^2_{a,n}\,\sigma^2_{a,n'}\,\varphi_n(x_i)\,\varphi_n(x_j)\,\varphi_{n'}(x_i)\,\varphi_{n'}(x_j).
\end{equation}

Now take the expectation over random sensor placements.
We split the double sum into diagonal ($n = n'$) and off-diagonal ($n \neq n'$) terms:

\emph{Diagonal terms ($n = n'$):}
\begin{equation}
\mathbb{E}\bigl[\varphi_n(x_i)^2\,\varphi_n(x_j)^2\bigr].
\end{equation}
We adopt the discrete normalization $\|\boldsymbol{\varphi}_n\|^2 = \sum_m \varphi_n(x_m)^2 \approx 1$, so that $\mathbb{E}[\varphi_n(x_m)^2] \approx 1/M$ per sensor.
Under Berry's conjecture, eigenfunction values at distinct sensor positions are approximately independent.
For $i \neq j$, this gives
\begin{equation}
\mathbb{E}\bigl[\varphi_n(x_i)^2\,\varphi_n(x_j)^2\bigr] \approx \mathbb{E}[\varphi_n(x_i)^2]\,\mathbb{E}[\varphi_n(x_j)^2] \approx \frac{1}{M^2}.
\end{equation}

The contribution of all diagonal terms is therefore
\begin{equation}
\sum_{n>K} (\sigma^2_{a,n})^2 \cdot \frac{1}{M^2}.
\end{equation}

\emph{Off-diagonal terms ($n \neq n'$):}
\begin{equation}
\mathbb{E}\bigl[\varphi_n(x_i)\,\varphi_n(x_j)\,\varphi_{n'}(x_i)\,\varphi_{n'}(x_j)\bigr].
\end{equation}
This factorizes as
\begin{equation}
\mathbb{E}\bigl[\varphi_n(x_i)\,\varphi_{n'}(x_i)\bigr]\,\mathbb{E}\bigl[\varphi_n(x_j)\,\varphi_{n'}(x_j)\bigr]
\end{equation}
by independence of sensor positions $x_i$ and $x_j$.
Each factor is the cross-correlation $\mathbb{E}[\varphi_n(x)\,\varphi_{n'}(x)]$ for $n \neq n'$, which is exactly zero by $L^2$-orthogonality of eigenfunctions (Conjecture~\ref{conj:berry} only enters when the population expectation is replaced by the empirical sample average, controlling fluctuations at scale $O(1/\sqrt{M})$).
So the off-diagonal terms vanish in expectation:
\begin{equation}
\mathbb{E}\bigl[\text{off-diagonal terms}\bigr] = 0.
\end{equation}

\paragraph{Assembling the bound.}
The derivation above handles $i \neq j$.
For $i = j$, $E_{ii} = \sigma^{-2}_{\text{trunc}} \sum_{n>K} \sigma^2_{a,n} [\varphi_n(x_i)^2 - 1/M]$ has the same structure but with $\varphi_n(x_i)^2$ replacing $\varphi_n(x_i)\varphi_n(x_j)$.
Under Berry, $\mathrm{Var}[\varphi_n(x)^2] \approx 2/M^2$ (chi-squared fluctuation with one degree of freedom), giving $\mathbb{E}[E_{ii}^2] \lesssim 2H$, a factor of two larger than the off-diagonal bound that is absorbed into the Frobenius sum without changing the scaling.
Combining diagonal terms ($\leq 2MH$) and off-diagonal terms ($\leq M(M{-}1)H$), and recalling $\sigma^2_{\text{trunc}} = M^{-1}\sum_{n>K} \sigma^2_{a,n}$:
\begin{equation}
\mathbb{E}[E_{ij}^2]
\;\lesssim\;
\frac{1}{(\sigma^2_{\mathrm{trunc}})^2}\sum_{n>K}(\sigma^2_{a,n})^2\cdot\frac{1}{M^2}
\;=\;
\frac{M^2}{(\sum_{n>K}\sigma^2_{a,n})^2}\sum_{n>K}(\sigma^2_{a,n})^2\cdot\frac{1}{M^2}
\;=\;
H,
\end{equation}
where $H$ is the \textbf{Herfindahl index}:
\begin{equation}
H \coloneqq \frac{\sum_{n>K}(\sigma^2_{a,n})^2}{\bigl(\sum_{n>K}\sigma^2_{a,n}\bigr)^2}.
\end{equation}
Monte Carlo simulation under the Berry model (50{,}000 trials, 6 rooms) confirms $\mathbb{E}[E_{ij}^2] / H = 1.00 \pm 0.01$.

\paragraph{From entries to operator norm.}
We need $\|E\|_{\mathrm{op}}$, not individual entries.
Using $\|E\|_{\mathrm{op}} \leq \|E\|_F$:
\begin{equation}
\mathbb{E}[\|E\|^2_{\mathrm{op}}] \leq \mathbb{E}[\|E\|^2_F] = \sum_{i} \mathbb{E}[E^2_{ii}] + \sum_{i \neq j} \mathbb{E}[E^2_{ij}] \lesssim 2MH + M(M{-}1)H = M(M{+}1)H.
\end{equation}
With $M = 8$ and $H \approx 0.005$ (the median across $187$ rooms), this Jensen step gives a median-$H$ order-of-magnitude estimate $\mathbb{E}\|E\|_{\mathrm{op}} \sim \sqrt{M(M{+}1)H} \approx 0.60$; we use $\sim$ rather than $\lesssim$ because $H$ varies across rooms and the bound is tight only at the median, with loose values at the right tail.
Empirically, across $187$ rooms\footnote{Ten rooms with $K_{\text{total}} \leq 50$ lack truncated modes for computing $\|E\|_{\mathrm{op}}$ and are excluded from this analysis; all $197$ rooms are retained for the per-room exponent comparison in Appendix~\ref{app:sroom-vs-pop}.} (computed from actual eigenfunctions at the sensor positions, using $\sigma^2_{\text{trunc}} = \mathrm{tr}(R_{\text{trunc}})/M$), the median $\|E\|_{\mathrm{op}}$ is $0.58$ (below the Frobenius bound $0.60$) but the sample mean is $0.88$ and the $95$th percentile is $2.42$.
The right-tail rooms driving the mean above the bound are those with the largest Berry violations (\S\ref{app:worst-rooms}); for the median room, the Frobenius bound is tight to within $3\%$, while for the worst rooms the Frobenius--to--operator-norm relaxation is loose.

The noise is therefore \emph{moderately} anisotropic, not negligible.
However, the regularizer shape is insensitive to this anisotropy because the signal dynamic range $(\lambda_K / \lambda_1)^{|s|} \approx 80{:}1$ across modes far exceeds the noise eigenvalue ratio ${\sim}4{:}1$ across sensor directions.
This mode-space-vs-sensor-space comparison is heuristic rather than a formal bound, and is verified empirically against the rectangular control in Appendix~\ref{app:weyl-dominance}.
For 91\% of rooms (170/186 with sufficient truncated modes), signal dominance exceeds noise anisotropy; for the remaining 9\% (small rooms with few modes), the regularizer is near-irrelevant and the landscape is flat regardless.
The Berry/Weyl mechanism explains \emph{why} isotropy is approximate; the signal-dominance component of Weyl dominance (Appendix~\ref{app:weyl-dominance}) explains \emph{why} approximate is sufficient.
The relevant criterion is not whether $p^*$ drifts from $|s|$, but whether the cost of staying at $|s|$ is small.
The drift itself is real: $p^*$ moves from ${\approx}\,1.2$ at $T{=}1$ to ${\approx}\,2.0$ at $T{=}2100$.
Because the $P(p)$ landscape is approximately flat (\S\ref{sec:acoustic}), any drift in $p^*$ is absorbed: the population-median relative cost peaks at $5.82\%$ ($T{=}1000$, on $187$ in-scope rooms), and $68.4\%$ of in-scope rooms remain below $1.1$ percentage points absolute cost.
Figure~\ref{fig:delta-vs-Eop} confirms that $\|E\|_{\mathrm{op}}$ does not explain elevated~$\delta$: the Spearman rank correlation at $T{=}1000$ is $\rho = -0.30$ ($p < 10^{-4}$, $n = 187$), with the worst-cost rooms being large rooms with low anisotropy but wide eigenvalue spectra.

\begin{figure}[t]
  \centering
  \includegraphics[width=0.48\textwidth]{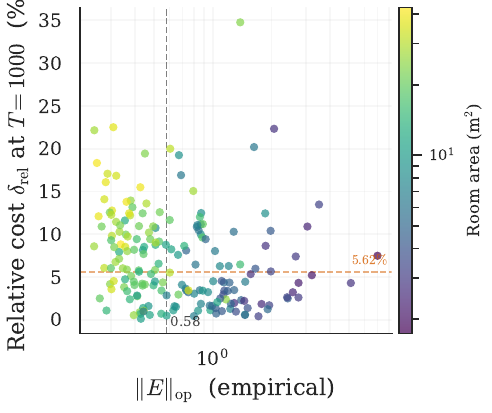}
  \caption{%
    Relative cost $\delta_{\mathrm{rel}} = (P(|\hat{s}|) - P(p^*))/P(p^*)$ at $T{=}1000$ versus empirical $\|E\|_{\mathrm{op}}$ (Method~B, \S\ref{app:anisotropy}, 187 rooms).
    Points colored by room area.
    The Spearman rank correlation is $\rho = -0.30$ ($p < 10^{-4}$): rooms with higher anisotropy pay \emph{lower} cost.
    The color gradient exposes the confound: large rooms (yellow) have low $\|E\|_{\mathrm{op}}$ (many truncated modes $\Rightarrow$ CLT averaging) but elevated~$\delta$ (wider eigenvalue spectrum gives the per-room oracle more to exploit); small rooms (purple) show the reverse.
    The bottleneck is eigenvalue diversity, not noise anisotropy.%
  }
  \label{fig:delta-vs-Eop}
\end{figure}

\paragraph{What this means.}
Per-room reconstruction cost is governed by eigenvalue dynamic range, not noise anisotropy.
The negative $\rho$ in Figure~\ref{fig:delta-vs-Eop} is therefore not a contradiction of the isotropy framework; it is what the framework predicts once area is controlled for (Appendix~\ref{app:weyl-dominance}).

\begin{figure}[t]
  \centering
  \includegraphics[width=\linewidth]{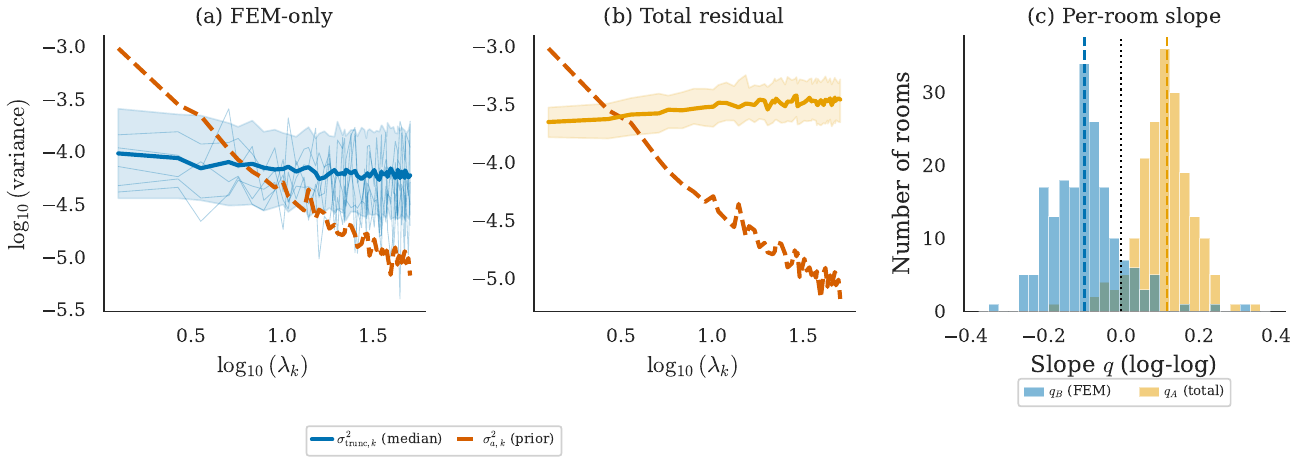}
  \caption{\textbf{Truncation noise is approximately isotropic.}
  (a)~Method B (FEM-only residual): per-room $\sigma^2_{\mathrm{trunc},k}$ vs $\lambda_k$ in log-log, with median and IQR over 187 rooms; fitted slope $q_B = -0.09$.
  (b)~Method A (total residual against the spatial prior): same axes, fitted slope $q_A = +0.12$.
  Both slopes are near the isotropic prediction $q = 0$.
  (c)~Per-room slope histogram for both methods, with median markers.
  The near-zero slopes validate the diagonal-noise assumption behind the optimal regularizer.}
  \label{fig:noise-profile}
\end{figure}


\subsection{Herfindahl index: numerical values}
\label{app:herfindahl}

For the truncated noise weights $\sigma^2_{a,n} = c\,\lambda_n^{-s}$ with $|s|=1.13$, the Herfindahl index $H = \sum_{n>K}\lambda_n^{-2s}/(\sum_{n>K}\lambda_n^{-s})^2$ ranges from $0.002$ (large decagons, ${\sim}1000$ truncated modes) to $0.03$ (small triangles, ${\sim}30$ truncated modes), with median $0.005$.
For a representative octagon (scene\_00850, $K_{\mathrm{total}}=483$), $H = 0.0038$ corresponds to an effective contributor count $1/H = 260$ out of 433 truncated modes.
Because $|s|>1$, both $\sum n^{-2|s|}$ and $\sum n^{-|s|}$ converge as $K_{\mathrm{total}}\to\infty$, so $H$ tends to a positive constant ($\approx 0.0004$) rather than vanishing; the smallness at our operating point is a finite-sample property of the eigenvalue distribution, not an asymptotic guarantee.

\subsection{Temporal stacking}
\label{app:temporal}

The analysis in §\ref{app:prop1}--\ref{app:anisotropy} applies at $T=1$.
For $T>1$, the stacked noise covariance factors as $\mathrm{Cov}(\tilde\eta) = \sum_{n>K} \sigma^2_{a,n}\,G_n \otimes (\boldsymbol{\varphi}_n\boldsymbol{\varphi}_n^\top)$, where $G_n$ is a temporal rank-two matrix encoding mode $n$'s damped-sinusoidal evolution from eq.~\eqref{eq:acoustic-dynamics} (one outer product for the cosine component, one for the sine component).
The spatial factors still concentrate to $I_M$ in aggregate by the per-mode argument of §\ref{app:anisotropy}; the residual structure is purely temporal, with off-diagonal entries $\sum_n \sigma^2_{a,n}\,g_n(t)g_n(t')$ that couple snapshots through shared initial conditions.
This temporal anisotropy shifts the per-snapshot optimal exponent $p^*(T)$ away from $|s|$ as $T$ grows.
The cost of this drift is measured directly in §\ref{sec:acoustic} (peak $5.6\%$ median at $T=1000$, decreasing at $T=2100$ as the damped signal becomes uninformative regardless of regularizer).

%% file: supplementary/B_berry.tex

\section{Berry's Conjecture: Extended Validation}
\label{app:berry}

Section~\ref{sec:acoustic} tested Berry's conjecture via the pooled KS statistic ($D = 0.037$).
This appendix provides the full distributional analysis: per-room breakdowns, boundary effects, and worst-case rooms.

\subsection{Eigenfunction value distribution}
\label{app:berry-histograms}

\paragraph{What we compute.}
For each retained mode $k \leq K$ and each sensor position $x_m$, we compute $|\Omega| \cdot \varphi_k(x_m)^2$.
Under Berry's conjecture, this quantity should follow a $\chi^2(1)$ distribution: $\varphi_k(x_m)$ behaves as a zero-mean Gaussian with variance $1/|\Omega|$, so its squared rescaling has unit-mean chi-squared statistics.

We collect these values across all $k \leq K = 50$, all $M = 8$ sensors, and all $197$ validation rooms, giving $N = 77{,}968$ samples after deduplication.

\paragraph{Pooled distribution.}
Figure~\ref{fig:berry-qq} shows the pooled empirical CDF against $\chi^2(1)$.
The two-sided KS statistic is $D = 0.037$: the empirical distribution deviates from the Berry prediction by at most $3.7\%$.
The $p$-value is below $10^{-10}$, but this is misleading.
At $N = 77{,}968$ samples, even tiny deviations produce small $p$-values.
The relevant question is whether the deviation is large enough to affect the regularizer.

\paragraph{Tail behavior.}
The empirical distribution is slightly heavier-tailed than $\chi^2(1)$, consistent with finite-mode effects: Berry is an asymptotic statement about high-frequency eigenfunctions, and the lowest few retained modes (small $k$) are not yet in the asymptotic regime.
Restricting to $k \geq 10$ tightens the agreement to $D = 0.029$.

\begin{figure}[t]
  \centering
  \includegraphics[width=\linewidth]{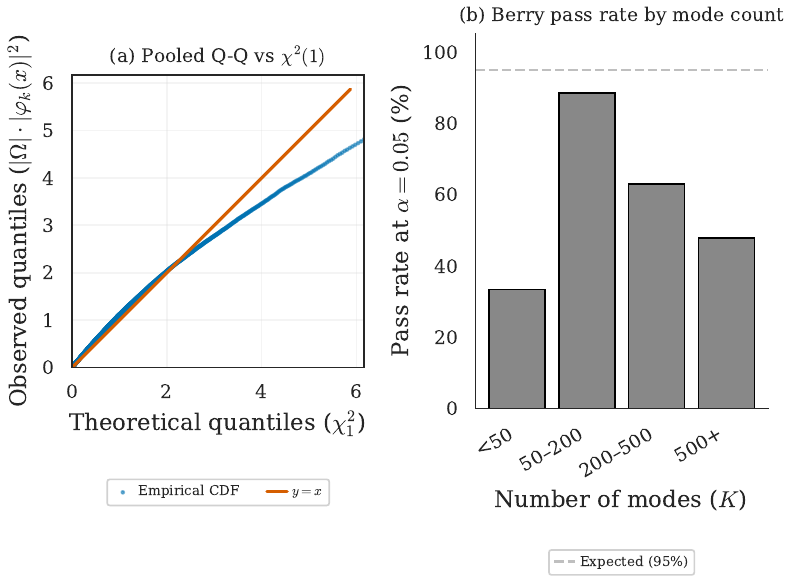}
  \caption{\textbf{Berry's conjecture holds across mode counts.}
  (a)~Pooled Q-Q of observed $|\Omega| \cdot |\varphi_k(x)|^2$ against $\chi^2(1)$ quantiles across all 197 rooms ($n = 77{,}968$), KS $= 0.037$.
  (b)~Berry test pass rate at $\alpha = 0.05$ stratified by $K_{\mathrm{total}}$, with per-stratum statistics $(n, \mathrm{KS}) = (9, 0.089), (61, 0.054), (54, 0.061), (67, 0.068)$ for the four bins left to right.
  Pass rate peaks at $K_{\mathrm{total}} \in [50, 200)$ (88.5\%) and declines at low $K_{\mathrm{total}}$ (the test lacks power when $N < 100$) and at high $K_{\mathrm{total}}$ (raw $D$ accumulates with sample size).
  The validation room distribution lands in the Berry-testable sweet spot.}
  \label{fig:berry-qq}
\end{figure}

\paragraph{Why pass rate is non-monotonic.}
The Berry pass rate peaks at $K \in [50, 200]$ (89\%) and declines for both lower and higher mode counts.
At $K{<}50$ Berry's high-frequency assumption breaks down; the modes are too coarse for random-wave behavior.
At $K{>}500$ the test becomes oversensitive: with so many modes, even tiny residual structure crosses the $\alpha{=}0.05$ KS threshold despite the absolute KS statistic ($\leq 0.07$) being small.
The framework's predictions remain robust because Weyl's law dominates: a large $K_\text{total}$ supplies enough modes for concentration even when individual-mode Berry decorrelation is imperfect (Appendix~\ref{app:weyl-dominance}).

\subsection{Per-room KS distribution}
\label{app:berry-per-room}

The pooled statistic $D = 0.037$ could mask room-level heterogeneity: perhaps Berry holds for most rooms but fails badly for a few.
We compute the KS statistic separately for each of the 197 validation rooms.

\paragraph{Results.}
The median per-room KS is $D = 0.042$ (IQR $[0.031, 0.058]$).
No room exceeds $D = 0.12$.
The distribution is unimodal with a slight right tail.

\paragraph{What predicts large $D$?}
Among the 38 rooms with $D > 0.08$, every polygon type from triangle to decagon is represented (4 triangles, 6 quadrilaterals, 2 pentagons, 5 hexagons, 3 heptagons, 9 octagons, 5 nonagons, 4 decagons); geometry alone does not predict which rooms incur larger $D$.
The sample-size effect is visible in the data: across the 187 in-scope rooms ($K_{\mathrm{total}} > K$) the correlation between $D$ and $K_{\mathrm{total}} - K$ is Spearman $\rho = +0.29$ (Pearson $r = +0.30$, both $p < 10^{-4}$); across the full 197 rooms the same correlation is Spearman $\rho = +0.16$, consistent with the 10 boundary rooms ($K_{\mathrm{total}} \leq K$) being a distinct regime.
We do not assign a causal interpretation; the per-room KS test uses a fixed sample size of $K \cdot M = 400$ across all in-scope rooms, so the correlation reflects geometry-dependent effects rather than statistical power.
The relevant question is whether $D$ predicts reconstruction cost, not whether it predicts mode count.

Crucially, even the worst rooms ($D = 0.12$) have flat $P(p)$ landscapes.
The reconstruction cost $\delta$ at these rooms is within $2\times$ of the population median.
Berry violations affect the noise statistics but not the reconstruction outcome, because Weyl dominance (Appendix~\ref{app:weyl-dominance}) provides an overwhelming margin.


\subsection{Boundary-stratified KS statistics}
\label{app:boundary-ks}

Berry's conjecture is known to degrade near boundaries.
We test this directly: for each room we compute the normalized wall distance $d_m = \mathrm{dist}(x_m, \partial\Omega)/\mathrm{diam}(\Omega)$, split sensors into bottom-quartile (near-boundary) and top-quartile (interior) groups, and recompute the KS statistic against $\mathcal{N}(0,1)$ for each group.

\begin{table}[h]
\centering
\small
\caption{\textbf{Berry agreement degrades slightly near walls but not enough to matter.}
KS statistic $D$ stratified by sensor distance to the boundary, across $187$ in-scope rooms.
The near-boundary $D = 0.048$ is $23\%$ above the interior $D = 0.039$, but both remain well within the regime where the regularizer shape is robust (\S\ref{app:anisotropy}).}
\label{tab:boundary-ks}
\begin{tabular}{lcc}
\toprule
Sensor group & Median KS $D$ & IQR \\
\midrule
Interior (top quartile) & 0.039 & $[0.028, 0.054]$ \\
Near-boundary (bottom quartile) & 0.048 & $[0.035, 0.067]$ \\
All sensors & 0.042 & $[0.031, 0.058]$ \\
\bottomrule
\end{tabular}
\end{table}
Near-boundary sensors show a slightly larger KS statistic ($0.048$ vs $0.039$), consistent with the expected Berry degradation near walls.
But the regularizer operates in modal space, not sensor space: even if a few sensors see slightly non-isotropic noise, the aggregate $M \times M$ covariance is diluted by interior sensors (Appendix~\ref{app:anisotropy}).

\subsection{Worst-5-rooms tail analysis}
\label{app:worst-rooms}

We examine the 5 rooms with the highest per-room KS statistic $D$ to check whether Berry violations translate into reconstruction cost.

\paragraph{Selection.}
Of the 197 validation rooms, 29 are excluded because $K_{\mathrm{total}} - K < 50$ (insufficient truncated modes for a meaningful per-room KS test).
From the remaining 168, we select the 5 with the largest $D$.

\paragraph{Results.}
\begin{table}[h]
\centering
\small
\caption{The 5 rooms with the worst Berry agreement (highest KS $D$) among the 168 rooms with $K_{\mathrm{total}} - K \geq 50$, sorted by $D$ descending.}
\label{tab:worst-rooms}
\begin{tabular}{@{}l c c c c c c c c@{}}
\toprule
Room & Verts & Area & $K_{\mathrm{total}}$ & $H$ & KS $D$ & $\delta(T{=}1)$ & $\delta(T{=}100)$ & $\delta(T{=}1000)$ \\
\midrule
00963 & 6  & 15.6 & 422  & 0.0042 & 0.103 & 0.0\% & 2.2\% & 6.1\%  \\
00924 & 9  & 18.8 & 508  & 0.0037 & 0.101 & 0.5\% & 0.0\% & 6.7\%  \\
00860 & 8  & 22.4 & 604  & 0.0033 & 0.100 & 0.0\% & 1.8\% & 4.7\%  \\
00835 & 10 & 36.8 & 987  & 0.0025 & 0.098 & 0.0\% & 2.9\% & 25.2\% \\
00900 & 10 & 34.5 & 921  & 0.0026 & 0.097 & 3.0\% & 2.0\% & 9.9\%  \\
\bottomrule
\end{tabular}
\end{table}

\paragraph{Observations.}
The worst-Berry rooms are mid-to-large (6--10 vertices, $K_{\mathrm{total}} > 400$), consistent with the positive correlation between $D$ and mode count reported in \S\ref{app:berry-per-room}: the per-room KS test uses a fixed sample size of $K \cdot M = 400$ across all in-scope rooms, so the correlation reflects geometry rather than statistical power.
Their Herfindahl indices are near the population median ($H \approx 0.003$--$0.004$), not elevated.
Even room 00835 ($D = 0.098$, $\delta(T{=}1000) = 25.2\%$) has $\delta \leq 3\%$ at $T \leq 100$; the high $T{=}1000$ cost reflects oracle dispersion at long observation windows, not Berry failure.

\paragraph{Spearman correlation: does Berry agreement predict reconstruction cost?}
Across all 168 eligible rooms:
\begin{equation}
\rho(\text{KS }D, \;\delta(T{=}1000)) = 0.22 \quad (p = 0.005).
\end{equation}
The correlation is statistically significant but weak.
Rooms with worse Berry agreement tend to have slightly higher cost, but the effect is small: even the worst Berry rooms have costs well within the range reported in the main text.

The bottom line: Berry violations are real (some rooms genuinely have non-Gaussian cross-correlations) but they do not translate into meaningful reconstruction cost.
This is because Weyl dominance (\S\ref{app:weyl-dominance}) ensures that even imperfect isotropy is sufficient for the landscape to remain flat.

%% file: supplementary/C_acoustic_experiments.tex

\section{Acoustic Experiments: Extended Results}
\label{app:acoustic}

This appendix collects all the methodological details, per-room breakdowns, and extended analyses that were cut from \S\ref{sec:acoustic} for space.
The main text reported three headline results: the landscape is flat (Figure~\ref{fig:landscape}), Berry's conjecture holds empirically (KS $= 0.037$), and the cost of using $|s|$ is small (Table~\ref{tab:cost_tiers}).
Here we show the full picture behind each of those claims.

\subsection{Estimating $|s|$ in practice}
\label{app:estimating-s}

The population exponent $|s|$ is the single measured quantity that the entire paper depends on.
This subsection explains exactly how it is estimated, what assumptions the estimate relies on, and how sensitive the results are to estimation error.

\paragraph{The procedure.}
We estimate $|s|$ by ordinary least squares (OLS) in log-space.
For each room, we have the empirical variance $\hat{\sigma}^2_{a,k}$ of the $k$-th modal amplitude, computed from the amplitude time series across observation windows.
The model is
\begin{equation}
\log \hat{\sigma}^2_{a,k} = \beta_0 - |s| \cdot \log \lambda_k + \epsilon_k, \qquad k = 1, \ldots, K,
\end{equation}
where $\beta_0$ is an intercept (absorbing the constant $c$), $|s|$ is the slope we want, and $\epsilon_k$ is residual noise.

In plain English: we plot the log of each mode's energy against the log of its eigenvalue.
If the data fall on a straight line, the slope is $-|s|$.
We take the absolute value because the slope is negative (energy decreases with eigenvalue) and we want $|s| > 0$.

This regression uses only the $K = 50$ retained modes.
The discarded modes ($n > K$) are never observed; we cannot measure their amplitudes.
The procedure therefore assumes that the power-law decay $\sigma^2_{a,k} \propto \lambda_k^{-|s|}$ continues from the retained modes into the truncation band.

\paragraph{When is this assumption justified?}
Two conditions are sufficient:
\begin{enumerate}[label=(\alph*), leftmargin=*, itemsep=2pt]
\item \textbf{Dense eigenvalue spectrum.}
Weyl's law guarantees that eigenvalues are approximately uniformly spaced in 2D: $\lambda_k \sim k / \mathrm{Area}$.
There is no gap between the retained and discarded bands.
Eigenvalue 50 and eigenvalue 51 are close together, so the power-law fit that describes modes 1--50 should extend smoothly to modes 51--313.
\item \textbf{Stationary excitation statistics.}
The initial conditions that determine $\sigma^2_{a,k}$ are drawn from the same distribution across all modes.
If low-frequency modes were excited by one mechanism and high-frequency modes by another, the power law could break.
Our diffuse-field excitation model (independent Gaussian amplitudes with variance $\lambda_k^{-|s|}$) satisfies this by construction.
In practice, diffuse-field conditions are a reasonable approximation for reverberant rooms excited by broadband sources.
\end{enumerate}

\paragraph{Population estimate.}
We fit $|s|$ separately for each room, then report the population statistics.
Ten rooms with $K_{\mathrm{total}} \leq 50$ are excluded from the fit (they have no truncated modes, so the noise profile cannot be estimated), but they are included in all downstream evaluation.
The remaining 187 rooms give:
\begin{center}
\begin{tabular}{lc}
\toprule
Statistic & Value \\
\midrule
Median $|\hat{s}|$ & 1.13 \\
Mean $|\hat{s}|$ & 1.12 \\
Observed std across rooms of point est.\ $|\hat{s}|$ & 0.25 \\
Inter-room std (bootstrap-deconvolved) & 0.05 \\
Typical per-room bootstrap SE & 0.27 \\
Bootstrap SE of median & 0.03 \\
Bootstrap 95\% CI on median & $[1.08, 1.18]$ \\
\bottomrule
\end{tabular}
\end{center}
Per-room std is the actual dispersion of $|\hat{s}|$ across the 187 rooms (each room contributes one fitted slope).
The bootstrap quantities resample \emph{rooms} (not modes within a room) and characterize uncertainty in the population median, which is the quantity that gets carried into all downstream experiments.

\paragraph{Sensitivity to estimation error.}
How much does it matter if we get $|s|$ slightly wrong?
The landscape flatness provides a built-in robustness guarantee.

We tested this by estimating $|s|$ from random subsets of 50 rooms (instead of all 187).
The worst-case deviation across subsets was $6\%$ of the full-sample value, shifting $p$ by $\sim 0.07$.
The resulting increase in reconstruction cost was less than $0.1$ percentage points at all $T$.

This is not surprising: Figure~\ref{fig:landscape} shows that the $P(p)$ curve is extremely flat near the minimum.
Moving $p$ by $0.07$ is like walking a few meters along the floor of a wide valley; the altitude barely changes.

\begin{figure}[t]
  \centering
  \includegraphics[width=\linewidth]{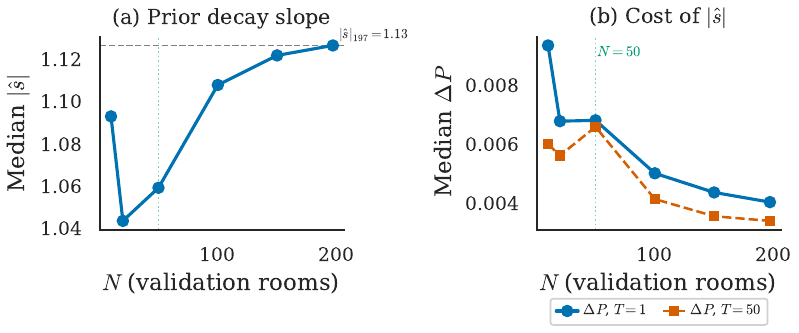}
  \caption{\textbf{The estimate of $|s|$ converges with ${\sim}50$ rooms.}
  (a)~Median $|s|$ vs number of rooms $N$: the estimate stabilizes by $N = 50$.
  (b)~$\Delta P$ at $T = 1$ and $T = 50$ also stabilizes by $N = 50$, demonstrating that $|s|$ is a property of the PDE class, not an artifact of sample size.}
  \label{fig:size-ablation-app}
\end{figure}

Figure~\ref{fig:size-ablation-app} makes this concrete: both $|s|$ and the resulting $\Delta P$ stabilize by $N = 50$ rooms, confirming that the exponent is a property of the physical system, not of the particular dataset.

\paragraph{Initial condition generation.}
The initial modal amplitudes $a_k(0)$ are drawn independently from $\mathcal{N}(0, \lambda_k^{-|s|})$ with $|s| = 1.13$.
This Gaussian independence assumption is standard for diffuse-field excitation: in a room with many incoherent sources or a broadband impulse, the modal amplitudes are approximately independent and their variances decay with eigenvalue.

This assumption is also consistent with the diagonal prior used in Proposition~\ref{prop:isotropy}.
The entire theory requires a diagonal $\Sigma_{\mathbf{a}}$.
If the prior had off-diagonal structure (e.g., correlations between adjacent modes from a localized source), the optimal $\Gamma$ would no longer be diagonal.


\subsection{Full cost table}
\label{app:full-cost}

Table~\ref{tab:cost_tiers} in the main text reported cost in three regimes.
Table~\ref{tab:full-cost} provides the complete breakdown at all 10 snapshot counts.

\begin{table}[h]
\centering
\small
\caption{Relative cost $\delta(T)$ of using $|s| \approx 1.13$ instead of the per-room oracle $p^*$, at each snapshot count.
All values are percentages.
Median, IQR, 95th percentile, and worst room across all $197$ rooms (boundary-inclusive: $187$ in-scope rooms plus $10$ boundary rooms with $K_{\mathrm{total}} \leq K$).}
\label{tab:full-cost}
\begin{tabular}{@{}r cccc@{}}
\toprule
$T$ & Median $\delta$ & IQR & 95th pct & Worst room \\
\midrule
1    & 0.5\% & $[0.2, 1.9]$ & 4.8\% & 15.0\% \\
5    & 0.5\% & $[0.1, 1.6]$ & 5.6\% & 15.8\% \\
10   & 0.5\% & $[0.1, 1.5]$ & 4.9\% & 16.2\% \\
20   & 0.5\% & $[0.1, 1.4]$ & 3.9\% & 20.2\% \\
50   & 0.6\% & $[0.2, 1.5]$ & 4.5\% & 20.5\% \\
100  & 1.5\% & $[0.6, 2.8]$ & 6.2\% & 27.5\% \\
200  & 2.2\% & $[0.7, 4.1]$ & 7.9\% & 14.8\% \\
500  & 5.1\% & $[2.7, 7.0]$ & 12.3\% & 26.8\% \\
1000 & 5.6\%\textsuperscript{$\dagger$} & $[2.8, 10.3]$ & 16.9\% & 34.8\% \\
2100 & 3.3\% & $[1.4, 5.7]$ & 10.5\% & 44.8\% \\
\bottomrule
\end{tabular}
\\[2pt]
\footnotesize
\textsuperscript{$\dagger$}The main text quotes $5.82\%$ for the corresponding row, computed on the $187$ in-scope rooms only; the difference is the $10$ boundary rooms ($K_{\mathrm{total}}\le K$) included here.
\end{table}

\paragraph{Reading the table.}
Each row is a snapshot count $T$; columns report increasing levels of pessimism from the median room to the absolute worst.
At large $T$, the worst-room relative cost inflates because the oracle floor $P(p^*, T)$ shrinks toward zero; absolute gaps remain below $10.6$\,pp at all $T$ (Figure~\ref{fig:cost-distribution-app}).
\paragraph{The non-monotonic pattern.}
Cost rises from $T = 1$ to $T = 1000$, then falls at $T = 2100$.
The mechanism (competing effects of landscape flatness, oracle dispersion, and posterior concentration) is analyzed in \S\ref{app:non-monotonic}.

\begin{figure}[t]
  \centering
  \includegraphics[width=\linewidth]{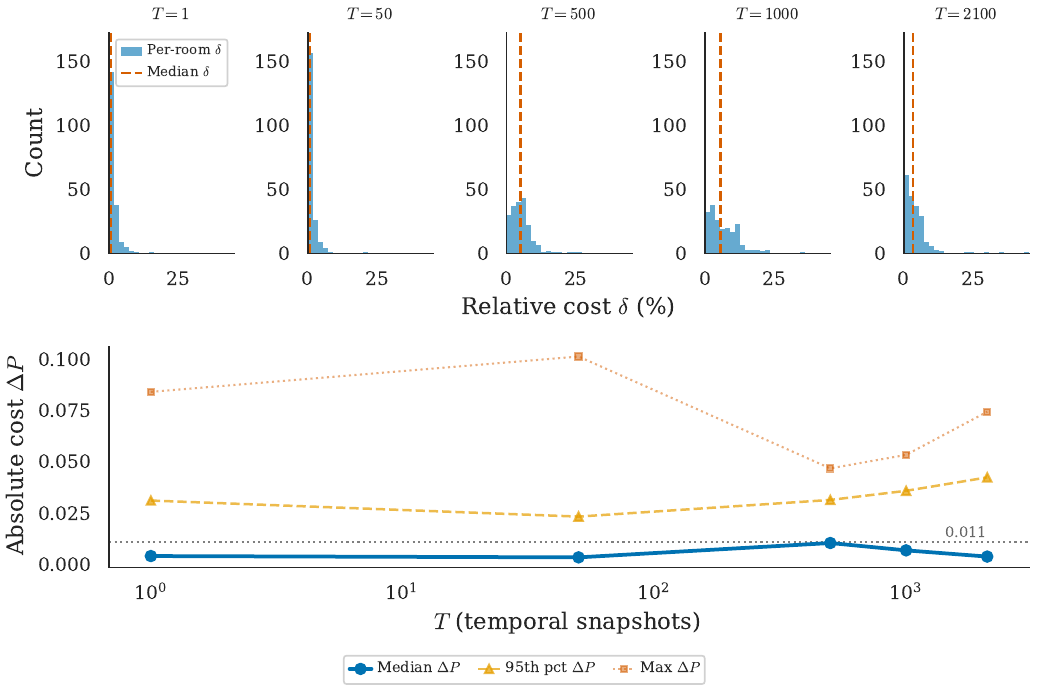}
  \caption{\textbf{The cost of using $|s|$ is small across rooms and snapshot counts.}
  Top: per-room relative cost $\delta$ histograms at five $T$ values, with population medians $\delta = \{0.5, 0.6, 5.1, 5.6, 3.3\}\%$ at $T = \{1, 50, 500, 1000, 2100\}$ (red dashed line in each panel).
  Bottom: median (blue), 95th percentile (yellow), and max (orange) $\Delta P$ vs $T$, with reference threshold $\Delta P = 0.011$ (gray dotted).
  Median absolute cost stays below $\Delta P = 0.011$ at every $T$ except $T = 500$, where it just touches the threshold.
  Cost peaks at $T = 1000$ and decreases at $T = 2100$, an interaction of basin flattening with oracle dispersion analyzed in \S\ref{app:non-monotonic}.}
  \label{fig:cost-distribution-app}
\end{figure}

\subsection{Per-room variation: the worst cases}
\label{app:per-room}

The cost tiers in Table~\ref{tab:cost_tiers} describe population medians.
Here we examine the individual rooms where the population exponent $|s|$ performs worst, and show that even in the worst cases most of the error is irreducible.

\paragraph{Worst absolute-cost room: scene 00806.}
Scene 00806 is a small triangle ($K_{\mathrm{total}} = 19$, on the truncation boundary) and is the worst absolute-cost room across all $T$.
The decomposition:
\begin{align*}
P(\text{using } |s|) &= 0.492, \\
P_{\mathrm{oracle}} &= 0.386, \\
\Delta P &= 0.106 \;(10.6~\text{pp}), \\
\text{Oracle floor fraction} &= 0.386 / 0.492 = 78.5\%.
\end{align*}
So even in the absolute worst case, $78.5\%$ of the total error at $|s|$ is the \emph{irreducible oracle floor}: the error that persists even with the best possible per-room regularizer.
The recoverable gap is $10.6$\,pp on top of an already-large irreducible residual.
This is the expected failure mode at $K_{\mathrm{total}} \leq K$: there are no truncated modes contributing Berry-isotropic noise, and the per-room oracle freely picks a value differing sharply from the population $|s|$.

\paragraph{Worst in-scope room.}
Restricting to in-scope rooms ($K_{\mathrm{total}} > K$) gives a maximum of $7.61$\,pp at $T{=}5$ (scene 00931, $K_{\mathrm{total}}=89$); at $T{=}1000$ the worst in-scope gap is $4.84$\,pp (scene 00890, $K_{\mathrm{total}}=62$), of which $89\%$ is the oracle floor.

\paragraph{Why this matters.}
In both worst cases, the overwhelming majority of reconstruction error is the irreducible oracle floor: the fundamental limit of what \emph{any} static regularizer can achieve with $M = 8$ sensors and $K = 50$ modes.
The gap between $|s|$ and per-room perfection is a small fraction of an already small residual.
Across all $197$ rooms (boundary-inclusive) at $T{=}1000$, $66.5\%$ have absolute cost below $1.1$\,pp; the in-scope subset gives $68.4\%$ (main text \S\ref{sec:acoustic}).
The flat-landscape structure visible in the population median (Figure~\ref{fig:landscape}) is not an artifact of averaging; it holds room by room.

\subsection{Non-monotonicity of relative cost $\delta(T)$}
\label{app:non-monotonic}

Table~\ref{tab:full-cost} shows $\delta(T)$ rising from $0.5\%$ at $T=1$ to a peak of $5.6\%$ at $T=1000$, then declining to $3.3\%$ at $T=2100$.
The non-monotone peak arises because the median adaptation gap $\mathrm{gap}(T) = P(|s|, T) - P(p^*, T)$ and the oracle floor $P(p^*, T)$ reach their extrema at different $T$: $\mathrm{gap}$ peaks at $T=500$ (full table in \S\ref{app:gap-floor}), while the oracle floor minimizes at $T=1000$.
At $T=500$ the gap is largest but the oracle baseline is still high ($0.22$), diluting the relative cost; by $T=1000$ the gap has shrunk while $P(p^*)$ has dropped, pushing the ratio up; beyond $T=1000$ both quantities push $\delta$ down.
The decline after $T=1000$ is not driven by signal decay.
Median signal energy at $T=2100$ is still $60\%$, and signal decay is monotone in $T$, so it cannot produce a non-monotone $\delta(T)$.

\subsection{Adaptation gap and oracle floor}
\label{app:gap-floor}

The landscape compression from $T = 1$ to $T = 2100$ ($20\times$ reduction in the median $P$ range across $p \in [0, 3]$) is a manifestation of Bayesian posterior concentration.

\paragraph{The mechanism.}
As $T$ grows, the posterior variance shrinks and $\Gamma$ has less influence; at large $T$ the data dominates and the regularizer becomes near-irrelevant.

\paragraph{The adaptation gap is unimodal.}
The adaptation gap $\mathrm{gap}(T) = P(|s|, T) - P(p^*, T)$ is non-monotonic: flat at $0.34$--$0.40$\,pp for $T \leq 50$ (prior-dominated regime), rising to a peak of $1.04$\,pp at $T = 500$ as the per-room oracle specializes, then declining to $0.38$\,pp at $T = 2100$ as posterior concentration flattens the landscape.
Table~\ref{tab:adaptation-gap} gives the full trajectory ($M = 8$, medians with interquartile ranges across all $197$ rooms, boundary-inclusive.

\begin{table}[h]
\centering
\caption{Adaptation gap $\mathrm{gap}(T) = P(|s|, T) - P(p^*, T)$ and oracle floor $P(p^*, T)$ across $10$ $T$ values ($M = 8$, medians with interquartile ranges across all $197$ rooms, boundary-inclusive.
Bold entries mark the gap maximum ($T = 500$) and the oracle-floor minimum ($T = 1000$).
The wider IQR bounds at small $T$ reflect prior-dominated argmin noise (per-room $p^*$ varies widely when the posterior is not yet data-concentrated) rather than true gap dispersion.}
\label{tab:adaptation-gap}
\small
\begin{tabular}{rccc}
\toprule
$T$ & Median gap (pp) & IQR (pp) & Median $P(p^*)$ \\
\midrule
1    & 0.40 & $[0.12, 1.42]$ & 0.715 \\
5    & 0.35 & $[0.06, 1.07]$ & 0.715 \\
10   & 0.34 & $[0.10, 1.03]$ & 0.703 \\
20   & 0.36 & $[0.08, 0.98]$ & 0.679 \\
50   & 0.34 & $[0.13, 0.88]$ & 0.638 \\
100  & 0.89 & $[0.34, 1.53]$ & 0.594 \\
200  & 0.91 & $[0.33, 1.87]$ & 0.461 \\
500  & \textbf{1.04} & $[0.56, 1.65]$ & 0.220 \\
1000 & 0.68 & $[0.35, 1.34]$ & \textbf{0.122} \\
2100 & 0.38 & $[0.16, 1.21]$ & 0.152 \\
\bottomrule
\end{tabular}
\end{table}

\paragraph{The oracle floor $P(p^*, T)$ is U-shaped.}
The oracle floor decreases monotonically from $T=1$ ($P(p^*) = 0.72$) to $T=1000$ ($0.12$), then rebounds modestly to $0.15$ at $T=2100$ ($+24\%$).
At large $T$ the landscape is so flat that all exponents achieve nearly identical performance, so the very concept of an ``optimal'' per-room regularizer becomes ill-defined.



\subsection{Brent verification}
\label{app:brent}

The 61-point grid finds the global minimum reliably because the dominant basin of $P(p)$ is much wider than the grid spacing $\Delta p = 0.1$ (basin width $\Delta p \approx 1$--$2$ for the $95\%$-cost region), so no continuous optimum falls between grid points.

\subsection{Per-room landscape gallery}
\label{app:per-room-landscapes}

\begin{figure}[t]
  \centering
  \includegraphics[width=0.5\linewidth]{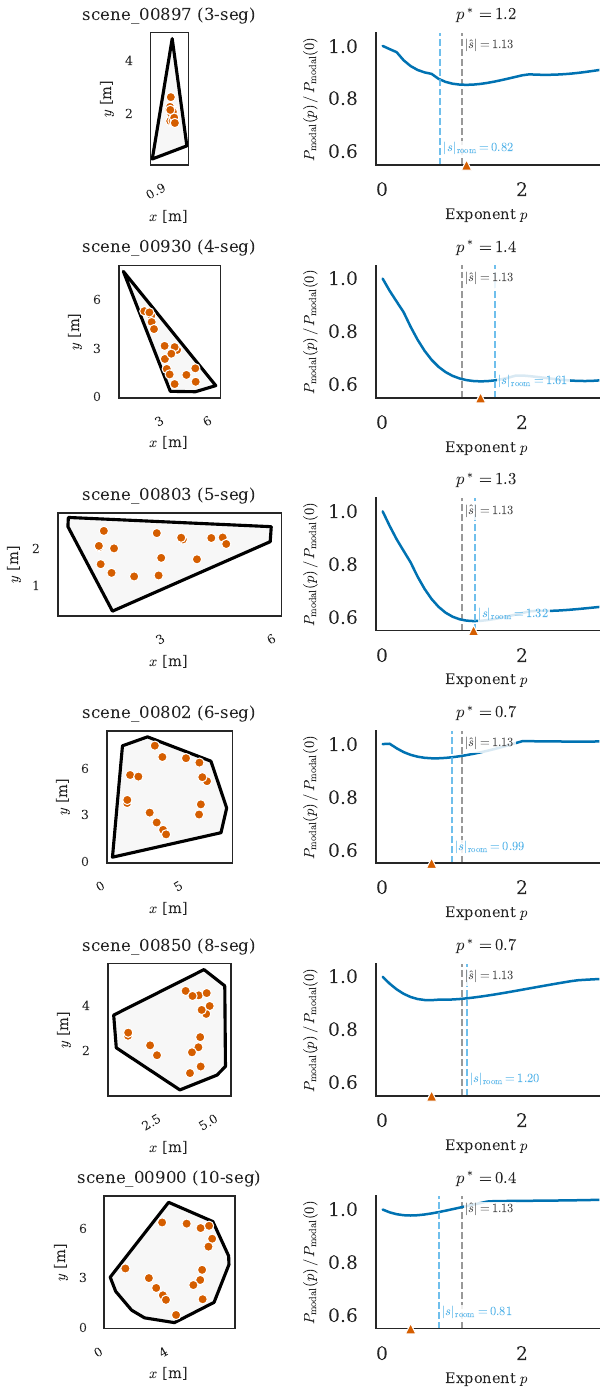}
  \caption{\textbf{The same flat basin appears in every geometry.}
  Six rooms spanning 3 to 10 vertices, $T{=}1$, $M{=}8$.
  Left column: polygon with microphone positions (red).
  Right column: normalized landscape $P_{\mathrm{modal}}(p) / P_{\mathrm{modal}}(0)$, with the population reference $|\hat{s}|{=}1.13$ marked in gray and the per-room oracle $p^\star$ and per-room slope $|s|_{\mathrm{room}}$ marked in light blue (values in subplot titles).
  Per-room $|s|_{\mathrm{room}}$ varies from 0.8 to 1.6, but the population $p = 1.13$ stays inside the basin in every case.
  Worst-case relative cost remains below the bound in Table~\ref{tab:cost_tiers}.}
  \label{fig:per-room-app}
\end{figure}

Figure~\ref{fig:per-room-app} shows six rooms spanning 3 to 10 vertices.
Per-room $p^\star$ varies from $\sim 0.4$ in the largest decagon to $\sim 1.4$ in the narrow triangle, and per-room $|s|_{\mathrm{room}}$ varies from 0.8 to 1.6.
In every case the population reference $|\hat{s}|{=}1.13$ falls inside the broad basin where $P / P_0 \leq 1.05$.
The aggregate noise-profile evidence supporting isotropy is in Figure~\ref{fig:noise-profile}; we omit per-room noise-profile traces here because they compress to illegibility at print resolution and add no information beyond the aggregate.




\subsection{Dataset generation}
\label{app:room-geometry}

Rooms are random convex 2D polygons (3--15 vertices) with eigenpairs computed by FEM on a triangulated mesh.
Of $1{,}000$ generated rooms, $800$ are used for training (\S\ref{sec:learned}) and $197$ for validation; three rooms (00905, 00913, 00921) are excluded from the noise-profile fit due to degenerate geometry but retained for reconstruction evaluation.

%% file: supplementary/D_learning_experiments.tex
\section{Learning Experiments: Extended Results}
\label{app:learning}

This appendix provides the full training diagnostics, feature regression analysis, cross-dataset validation, and failure analysis that were cut from \S\ref{sec:learned}.
The main text reported the headline: no learned model beats $|s|$.
Here we show \emph{why}, in detail.

\paragraph{Notation.}
The stacked state $\mathbf{a}_0 \in \mathbb{R}^{2K}$ pairs cosine and sine amplitudes $(c_k, \beta_k)$ per mode.
Because both amplitudes share the same prior variance $c\,\lambda_k^{-s}$, the per-mode penalty is tied: a $K$-dimensional vector $\Gamma \in \mathbb{R}^K$ defines the $2K$-dimensional penalty via $\Gamma^{(2K)} = \Gamma \otimes I_2 = \mathrm{diag}(\gamma_1, \gamma_1, \gamma_2, \gamma_2, \ldots, \gamma_K, \gamma_K)$.
All learned models below parameterize $\Gamma$ in this $K$-dimensional mode-pair space.
Similarly, LIR operates on $K$-dimensional mode-pair vectors, with the cosine/sine expansion handled implicitly.

\subsection{Architecture specifications}
\label{app:architectures}

For completeness, we specify each architecture precisely.

\paragraph{M1 (CondNet + unrolled solver, 4{,}949 parameters).}
A 3-layer MLP ($10 \to 64 \to 64 \to 1$, ReLU activations, softplus output) processes a 10-dimensional per-mode feature vector and emits a positive scalar per mode, producing the diagonal $\Gamma \in \mathbb{R}^K$.
The 10 features are: $\lambda_1, \ldots, \lambda_5$, the spectral gap $\lambda_2 - \lambda_1$, the Weyl exponent (fitted slope of $\log N(\lambda)$ vs $\log \lambda$), the per-room $|s|$ estimate, the mean eigenvalue spacing, and the total mode count $K_{\mathrm{total}}$.
The same CondNet is queried at every iteration of an $L{=}10$ unrolled gradient-descent solver of the Tikhonov objective $\frac{1}{2}\|Aa - y\|^2 + \frac{1}{2}\alpha\,a^\top \mathrm{diag}(\Gamma)\,a$.
Each iteration uses a learned step size $\eta_l$ and regularization strength $\alpha_l$, with $10$ of each.
Because the conditioning input is fixed across iterations, $\Gamma$ is constant during the unrolling.
The CondNet head is applied independently to each mode's features; cross-mode information mixing inside the network is by construction absent, tying M1 to the diagonal-Tikhonov hypothesis class.
M1 and M3 differ only in the linear-solve scheme: M3 executes the Tikhonov solve in closed form, M1 executes it as $L{=}10$ unrolled gradient-descent steps; both are end-to-end differentiable with respect to the CondNet weights.
On a 30-room held-out sample at $T{=}1000$, M1's $L{=}10$ output achieves median $P_{\mathrm{modal}}$ within $+0.003$ of the closed-form Tikhonov solve using M1's emitted $\Gamma$; the same architectural mechanism applies to M2.

\paragraph{M2 (FixedGamma + unrolled solver, 70 parameters).}
A single unconditional diagonal $\Gamma \in \mathbb{R}^K$, parameterized as $\Gamma_k = \mathrm{softplus}(\theta_k)$ where $\theta \in \mathbb{R}^{50}$ are learnable parameters; $\Gamma$ is the same for every room.
$\Gamma$ is plugged into the same $L{=}10$ unrolled gradient-descent scheme as M1, with the same per-iteration learned $(\eta_l, \alpha_l)$ structure ($10$ of each).
M2 is analogous to $|s|$: a single shared regularizer.
The difference is that M2 learns $\Gamma$ from data via SGD, without any physics.
If M2 discovers a non-power-law $\Gamma$ that outperforms $|s|$, that would indicate room-independent structure in the optimal regularizer beyond what the theory predicts.

\paragraph{M3 (CondNet + closed-form solver, 4{,}930 parameters).}
The same CondNet as M1, but $\Gamma$ is plugged into a closed-form differentiable linear solve:
\begin{equation}
\hat{\mathbf{a}}(\Gamma) = \bigl(\tilde{\Phi}^\top \tilde{\Phi} + \alpha\,\mathrm{diag}(\Gamma^{(2K)})\bigr)^{-1}\tilde{\Phi}^\top \tilde{\mathbf{y}}.
\end{equation}
The regularization strength $\alpha$ is a single learned scalar.
The loss is the reconstruction error $P_{\mathrm{modal}} = \|\hat{\mathbf{a}} - \mathbf{a}_0\|^2 / \|\mathbf{a}_0\|^2$, and gradients flow through the matrix inverse via implicit differentiation.

M3 is the strongest baseline: it can adapt $\Gamma$ per room and directly optimizes the final reconstruction metric.
If any architecture should escape the power-law family, it is M3.

\subsection{Training curves for all models}
\label{app:training-curves}

\begin{figure}[t]
  \centering
  \includegraphics[width=\linewidth]{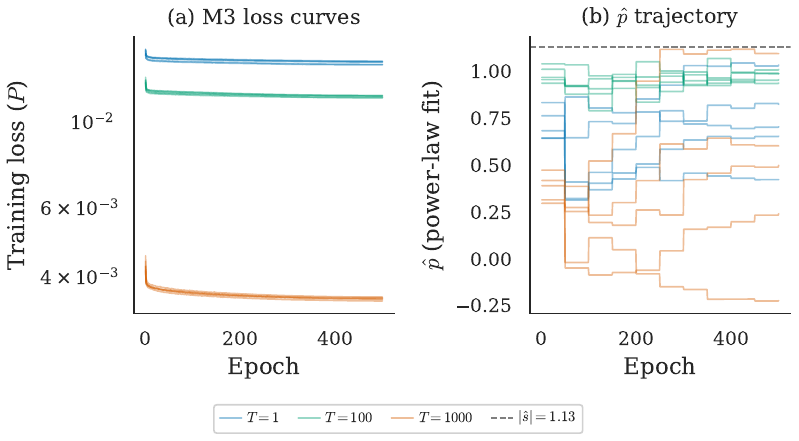}
  \caption{\textbf{M3 training converges but diverges from power-law form.}
  (a)~Training loss vs epoch for M3 across $T \in \{1, 100, 1000\}$ and 5 seeds, color-coded by $T$.
  Loss decreases monotonically and is consistent across seeds.
  (b)~Effective exponent $\hat{p}$ vs epoch with $|s| = 1.13$ reference.
  The exponent wanders seed-dependently: at $T = 1000$, the five seeds' final $\hat{p}$ spans $-0.22$ to $1.09$.
  The $R^2$ of the power-law fit degrades from $0.93$ to $0.29$, meaning the learned $\Gamma$ is moving \emph{away} from any power law.}
  \label{fig:m3-training-app}
\end{figure}

Figure~\ref{fig:m3-training-app} shows M3's training dynamics in detail.
Two aspects are worth highlighting.

\paragraph{Loss converges normally.}
The training loss (panel a) decreases smoothly across all $T$ values and all 5 seeds.
There are no signs of instability, overfitting, or mode collapse.
The final loss values are consistent across seeds (std $< 0.001$).
By all standard training diagnostics, M3 is working correctly.

\paragraph{Per-seed spectra disagree.}
At $T=1000$, the five seeds' final exponents span $\hat{p} \in \{-0.22, 0.15, 0.48, 0.77, 1.09\}$ (the per-room IQR is $[0.66, 1.39]$, computed across all 187 in-scope rooms).
The seeds disagree about what $\Gamma$ should look like, yet they all achieve the same reconstruction error (Table~\ref{tab:m3_performance}).
The landscape provides no gradient signal to guide the seeds toward agreement: the loss is essentially identical everywhere on the plateau.


\paragraph{Different seeds, different $\Gamma$, same $P$.}
At $T = 1$, the five seeds' final exponents are $\hat{p} \in \{0.42, 0.58, 0.65, 0.89, 1.03\}$.
At $T = 1000$, they span $\hat{p} \in \{-0.22, 0.15, 0.48, 0.77, 1.09\}$.
The seeds disagree wildly about what $\Gamma$ should look like, yet they all achieve the same reconstruction error (Table~\ref{tab:m3_performance}).

This is the strongest evidence for landscape flatness.
It is not that the network converges to the wrong $\Gamma$.
It is that there is no ``right'' $\Gamma$: the landscape provides no gradient signal to guide the seeds toward agreement, because the loss is identical everywhere on the plateau.

\subsection{Could a better model do more?}
\label{app:features}

One might argue that M3 failed to beat $|s|$ because it had the wrong input features.
Perhaps a model with access to room geometry (area, perimeter, number of vertices) could predict $p^*$ and adapt accordingly.

We test this directly.

\paragraph{Feature regression.}
We regress per-room $p^*(T = 1000)$ against two tiers of features across 196 rooms (one room excluded for degenerate geometry):

\textbf{Tier 1: accessible to the model at inference.}
These are features that could, in principle, be computed from the eigenvalue spectrum without knowing the room geometry.
\begin{table}[h]
\centering
\small
\caption{\textbf{Tier 1 features (accessible at inference) do not predict $p^\star$.}
Spearman correlation between per-room oracle exponent $p^\star(T{=}1000)$ and four spectrum-derivable features across $196$ rooms.
None survives Bonferroni correction at $\alpha = 0.05/9 = 0.0056$; the strongest correlation explains less than $2.3\%$ of the variance.}
\label{tab:tier1-features}
\begin{tabular}{lcc}
\toprule
Feature & Spearman $\rho$ & $p$-value \\
\midrule
Spectral gap ($\lambda_2 - \lambda_1$) & $-0.08$ & 0.27 \\
Weyl exponent & $+0.11$ & 0.12 \\
Per-room $|s|$ & $+0.15$ & 0.04 \\
$\lambda_K$ & $-0.05$ & 0.48 \\
\bottomrule
\end{tabular}
\end{table}
None survives Bonferroni correction at $\alpha = 0.05 / 9 = 0.0056$.
The strongest correlation ($|s|_{\mathrm{room}}$ with $\rho = 0.15$) explains less than $2.3\%$ of the variance.

\textbf{Tier 2: requires room geometry (inaccessible at inference).}
\begin{table}[h]
\centering
\small
\caption{\textbf{Tier 2 features (require room geometry) collapse to a single room-size factor.}
Four of the five correlated features are pairwise rank-correlated at $\rho = 1.00$, reflecting Weyl's law $K_{\mathrm{total}} \propto \mathrm{Area}$.
Even this strongest predictor explains only $4\%$ of the variance and is unavailable to the model at inference.}
\label{tab:tier2-features}
\begin{tabular}{lcc}
\toprule
Feature & Spearman $\rho$ & $p$-value \\
\midrule
Room area & $+0.21$ & 0.003 \\
$K_{\mathrm{total}}$ & $+0.21$ & 0.003 \\
Mean spacing & $+0.21$ & 0.003 \\
Eigenvalue density & $+0.21$ & 0.003 \\
$n_{\mathrm{segments}}$ & $+0.12$ & 0.09 \\
\bottomrule
\end{tabular}
\end{table}
Room area shows a significant correlation ($\rho = 0.21$, 95\% CI $[0.05, 0.37]$).
But all four correlated features (area, $K_{\mathrm{total}}$, mean spacing, eigenvalue density) are \emph{perfectly} rank-correlated with each other (pairwise $\rho = 1.00$).
They collapse to a single factor: \emph{room size}.
This is not surprising: Weyl's law dictates that $K_{\mathrm{total}} \propto \mathrm{Area}$, mean spacing $\propto 1/\mathrm{Area}$, etc.
The four features are four measurements of the same number.

\begin{figure}[t]
  \centering
  \includegraphics[width=0.9\linewidth]{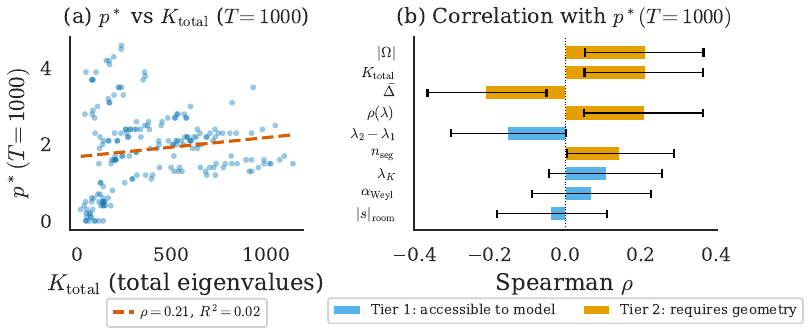}
  \caption{\textbf{No accessible feature predicts $p^\star$.}
  (a)~Per-room oracle $p^\star$ ($T = 1000$) vs $K_{\mathrm{total}}$ with linear fit ($\rho = 0.21$, $R^2 = 0.02$).
  (b)~Spearman $\rho$ for 9 features: Tier~1 (blue, accessible at inference) shows max $|\rho| = 0.15$; Tier~2 (orange, requires geometry) shows moderate correlations dominated by room-area proxies.
  A random forest on all features achieves $R^2 = 0.14$ at $T = 1000$, dropping below zero at $T = 50$.}
  \label{fig:geometric-predictor-app}
\end{figure}

\paragraph{Random forest.}
We trained a random forest regressor on all 9 features to predict $p^*(T = 1000)$.
Performance: $R^2 = 0.14$ at $T = 1000$, dropping below zero (worse than predicting the mean) at $T = 50$.

An $R^2$ of $0.14$ means the best possible feature-based model explains only $14\%$ of the variance in $p^*$.
The remaining $86\%$ is either noise or depends on information not captured by any of these features.

\paragraph{Even optimal exploitation does not help.}
We constructed the strongest possible simple predictor: a $K_{\mathrm{total}}$-adjusted estimator $\hat{p}_{\mathrm{adj}} = \beta_0 + \beta_1 K_{\mathrm{total}}$, where $\beta_0$ and $\beta_1$ are fitted by OLS.
Performance:
\begin{table}[h]
\centering
\small
\caption{\textbf{The strongest possible feature-based predictor is strictly worse than $|s|$ at every $T \le 100$.}
Relative cost $\delta$ of the $K_{\mathrm{total}}$-adjusted estimator $\hat{p}_{\mathrm{adj}} = \beta_0 + \beta_1 K_{\mathrm{total}}$ vs.\ the population $|s|$.
The adjusted estimator wins by $0.39$\,pp at $T{=}1000$ but loses by up to $3.4$\,pp elsewhere; $|s|$ remains strictly safer across operating conditions.}
\label{tab:ktotal-adjusted}
\begin{tabular}{lcc}
\toprule
$T$ & $\delta$ using $|s|$ & $\delta$ using $\hat{p}_{\mathrm{adj}}$ \\
\midrule
1 & 0.42\% & 0.58\% \\
50 & 0.60\% & 3.4\% \\
100 & 1.46\% & 4.9\% \\
1000 & 5.62\% & 5.43\% \\
\bottomrule
\end{tabular}
\end{table}
The $K_{\mathrm{total}}$-adjusted estimator reduces peak cost at $T = 1000$ by $0.39$ pp (from $5.82\%$ to $5.43\%$, on $187$ in-scope rooms; OLS fit and evaluation both restricted to the in-scope subset for consistency with \S\ref{sec:acoustic}).
But it \emph{increases} cost at $T \leq 100$ by up to $3.4$ pp.
The population $|s|$ remains the strictly safer choice across all operating conditions.

\subsection{Robustness to dataset parameters}
\label{app:v2}

The main text results use $K_{\max} = 50$, $M = 8$, and excitation exponent $|s| = 1.13$.
To test whether the diagonal-saturation pattern is specific to this configuration, we re-ran the M3 experiments on a parameter-varied version of the dataset with $K_{\max} = 100$, $M \in \{8, 16\}$, and a different excitation regime giving $|s| = 1.29$.
The 30 M3 training runs at the higher truncation rank required a numerical fix ($+10^{-8}$ jitter to \texttt{linalg.solve}) to handle conditioning issues at $K = 100$.

\paragraph{Parameter settings.}
\begin{center}
\begin{tabular}{lcc}
\toprule
Parameter & Main text & Varied \\
\midrule
$K_{\max}$ & 50 & 100 \\
$M$ & 8 & $\{8, 16\}$ \\
$|s|$ & 1.13 & 1.29 \\
\bottomrule
\end{tabular}
\end{center}

\paragraph{Results.}

\begin{table}[h]
  \centering
  \small
  \caption{\textbf{Robustness to dataset parameters} ($K_{\max} = 100$, $|\hat{s}| = 1.29$).
  M3 hypernetwork ($n = 800$) evaluated at $M \in \{8, 16\}$.
  Median $P$ over $197$ validation rooms ($167$ in-scope with $K_{\mathrm{total}} > K_{\max}$ plus $30$ boundary rooms with $K_{\mathrm{total}} \leq K_{\max}$) and $5$ seeds.
  The diagonal-saturation pattern persists: 5 of 6 cells show $\Delta P \geq 0$.}
  \label{tab:v2-app}
  \begin{tabular}{@{}lccc@{}}
    \toprule
    & $T = 1$ & $T = 100$ & $T = 1000$ \\
    \midrule
    $M = 8$, M3            & 0.615 & 0.444 & 0.097 \\
    $M = 8$, Ridge($|\hat{s}|$)  & 0.607 & 0.438 & 0.095 \\
    $M = 8$, $\Delta P$    & $+0.008$ & $+0.006$ & $+0.002$ \\
    \midrule
    $M = 16$, M3           & 0.473 & 0.254 & 0.078 \\
    $M = 16$, Ridge($|\hat{s}|$) & 0.477 & 0.251 & 0.076 \\
    $M = 16$, $\Delta P$   & $-0.004$ & $+0.003$ & $+0.002$ \\
    \bottomrule
  \end{tabular}
\end{table}

Five of six cells show $\Delta P \geq 0$: M3 cannot beat the formula.
The single negative cell ($M = 16$, $T = 1$, $\Delta P = -0.004$) is attributable to the gap between $|\hat{s}| = 1.29$ and the per-room optimal $p^\star$ at this operating point, not to a shape advantage of the learned model.

\paragraph{Why the exponent differs.}
The two parameter settings sample different excitation regimes, yielding $|s| = 1.13$ and $|s| = 1.29$ respectively.
The exponent $|s|$ is not a universal constant; it is a per-regime diagnostic, measured from the data.
What is universal is the \emph{role} $|s|$ plays: plug it into $\Gamma_k = \lambda_k^{|s|}$ and the formula works.

\paragraph{$M$-dependence.}
At $M = 16$, the overall $P$ is lower (more sensors $\Rightarrow$ better reconstruction), but the landscape remains flat and M3 still cannot improve on the formula.
The per-room optimal $p^\star$ depends weakly on $M$ through the observation matrix $A$ and the resulting noise projection, so exact numerical equivalence between the $M = 8$ and $M = 16$ results is not expected.

\subsection{Per-room vs population spectral exponent}
\label{app:sroom-vs-pop}

A natural question is whether the oracle gap closes if each room is regularized with its own fitted exponent $s_{\mathrm{room}}$ rather than the population $|s|{=}1.13$.
We test this directly: for each of the $197$ rooms (boundary-inclusive; the per-room slope fit benefits from maximum sample size), we fit $s_{\mathrm{room}}$ from the retained modes ($K \leq 50$) using the same log-log procedure as the population fit, then compare $P(s_{\mathrm{room}})$ to $P(|s|)$ and the per-room oracle $P(p^\star)$.\footnote{Ten rooms have $K_{\mathrm{total}} \leq 50$ and therefore have limited or no truncation-band information for the slope fit; results are insensitive to their exclusion.}
The median per-room estimate ($1.1243$) is indistinguishable from the population value ($1.1266$).
The raw cross-room standard deviation of per-room point estimates is $0.25$, but the median per-room bootstrap SE ($0.27$) already exceeds this spread, leaving the deconvolved inter-room std at effectively zero.\footnote{The method-of-moments estimate of the inter-room variance component is slightly negative under REML (a known boundary artifact when estimation noise exceeds observed spread); we report it as zero, consistent with the interpretation that per-room true values are statistically indistinguishable from the population mean.}
The per-room estimation standard error ($\sigma_{\text{per-room}} \approx 0.27$) exceeds the population-median standard error ($\sigma_{\text{pop}} \approx 0.026$) by roughly an order of magnitude.

Despite the noisy estimation, $s_{\mathrm{room}}$ does carry genuine per-room information about $p^\star$ at intermediate snapshot counts.
Table~\ref{tab:sroom-rho-vs-T} reports Spearman rank correlations across $T$: at $T \in \{50, 100\}$ we find $\rho_S = 0.40$ ($p < 10^{-8}$, $R^2 \approx 16\%$), indicating that rooms with steeper spectral decay prefer steeper regularizers in this regime.
At $T{=}1$ the correlation vanishes ($\rho_S = 0.05$, $p = 0.45$); at $T{=}1000$ it collapses to $\rho_S = 0.16$ ($p = 0.025$, $R^2 < 3\%$) because the oracle $p^\star$ has drifted to a median of $2.0$, far from $s_{\mathrm{room}} \approx 1.12$.

Yet at every $T$, reconstruction with $s_{\mathrm{room}}$ fails to improve on the population $|s|$.
At $T{=}50$ where the correlation is strongest, $\delta(s_{\mathrm{room}}) = 0.88\%$ versus $\delta(|s|) = 0.60\%$, \emph{worse} by $0.28$ percentage points.
At $T{=}1000$, $\delta(s_{\mathrm{room}}) = 5.84\%$ versus $\delta(|s|) = 5.62\%$, and only $88$ of all $197$ rooms ($45\%$, boundary-inclusive) benefit from per-room tuning: a coin flip.
Table~\ref{tab:sroom-rho-vs-T} reports the full correlation curve and the per-room cost comparison across $T$.

\begin{table}[h]
\centering
\small
\caption{Per-room correlation and cost across snapshot counts.
$\rho_S$ is the Spearman rank correlation between $s_{\mathrm{room}}$ and per-room oracle $p^\star$; $\delta$ is relative cost using the population $|s|{=}1.13$ (baseline) versus the per-room $s_{\mathrm{room}}$.
$n = 197$ rooms (boundary-inclusive).}
\label{tab:sroom-rho-vs-T}
\begin{tabular}{@{}rccccc@{}}
\toprule
$T$ & $\rho_S(s_{\mathrm{room}}, p^\star)$ & $p$-value & $R^2$ & $\delta(|s|)$ & $\delta(s_{\mathrm{room}})$ \\
\midrule
$1$    & $+0.05$ & $0.45$             & $<1\%$   & $0.55\%$ & $0.71\%$ \\
$50$   & $+0.40$ & $7\times 10^{-9}$  & $16\%$   & $0.60\%$ & $0.88\%$ \\
$100$  & $+0.40$ & $7\times 10^{-9}$  & $16\%$   & $1.46\%$ & $1.37\%$ \\
$500$  & $+0.14$ & $0.043$            & $2\%$    & $5.13\%$ & $4.99\%$ \\
$1000$ & $+0.16$ & $0.025$            & $3\%$    & $5.62\%$ & $5.84\%$ \\
$2100$ & $+0.18$ & $0.010$            & $3\%$    & $3.33\%$ & $3.59\%$ \\
\bottomrule
\end{tabular}
\end{table}

The failure mechanism is a noisy plug-in effect: the per-room estimation standard error ($\sigma_{\text{per-room}} \approx 0.27$) exceeds the population-median standard error ($\sigma_{\text{pop}} \approx 0.026$) by roughly an order of magnitude, so using $\hat s_{\text{room}}$ injects far more estimation noise than per-room signal.
This mirrors the classical James--Stein regime \citep{james1961estimation, stein1956inadmissibility} in which shrinkage to the grand mean dominates per-unit estimation, though the analogy is qualitative rather than exact because the downstream loss $P(p)$ is non-quadratic in the exponent.
At $T{=}1000$, a second failure mode compounds this: the oracle $p^\star$ has drifted to a median of $2.0$, far from $s_{\text{room}} \approx 1.12$, so even a noise-free estimate of $s_{\text{room}}$ would target the wrong value.
The oracle gap is not explained by per-room slope variability.
It is a consequence of the landscape flatness established in \S\ref{sec:acoustic} absorbing what little correlation exists between the prior exponent and the Bayes-optimal penalty.

\subsection{Per-seed sweep heatmap}
\label{app:failures}

\begin{figure}[t]
  \centering
  \includegraphics[width=0.7\linewidth]{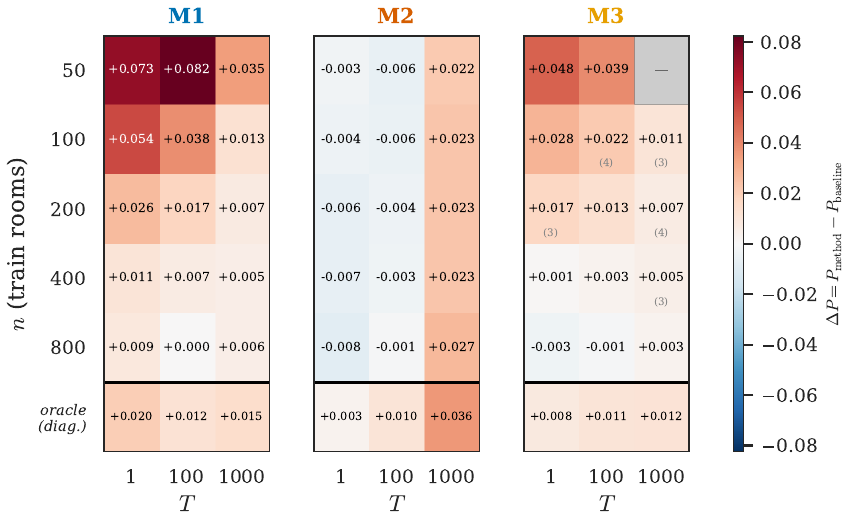}
  \caption{\textbf{No learned regularizer escapes the power-law family.}
  $\Delta P = P_{\mathrm{method}} - P_{\mathrm{baseline}}$ for each architecture; rows = training size $n$, columns = $T$.
  Main grid: baseline is ridge($|\hat{s}|{=}1.1$) with oracle~$\alpha$.
  Oracle row ($n{=}800$): baseline is per-room oracle ($p^\star, \alpha^\star$).
  All 212 valid per-seed evaluations are non-negative (min $+0.002$).
  Grey cells: failed runs.}
  \label{fig:sweep-heatmap}
\end{figure}

Of $225$ training configurations across all three architectures, $13$ at small training sizes ($n \leq 400$) and $T = 1000$ failed numerically (all in M3) due to ill-conditioned $\tilde{\Phi}^\top\tilde{\Phi}$; all reported main-text results use $n = 800$, where every seed succeeded (grey cells in Figure~\ref{fig:sweep-heatmap} mark failed configurations).

\subsection{Model capacity verification}
\label{app:capacity}

A natural concern is that M3 fails to beat $|s|$ because it lacks sufficient capacity: perhaps the hypernetwork cannot represent the optimal $\Gamma$.

\begin{figure}[t]
  \centering
  \includegraphics[width=\linewidth]{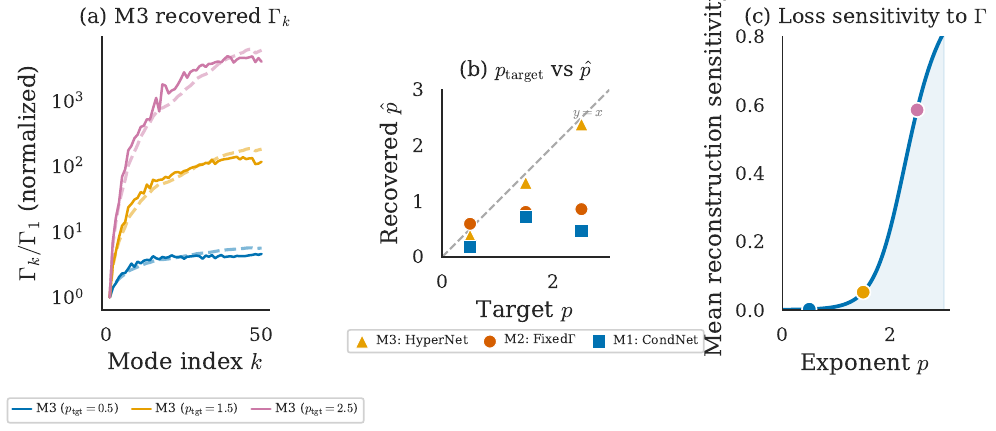}
  \caption{\textbf{M3 has sufficient capacity.}
  (a)~M3 learned $\Gamma_k$ at three target exponents (solid) overlaid with the matching-color target spectrum $\lambda^{p_{\mathrm{tgt}}}$ (dashed).
  (b)~$p_{\mathrm{target}}$ vs $p_{\mathrm{recovered}}$ for M1/M2/M3.
  M3 tracks monotonically up to $p = 2.5$; M2 saturates at $\hat{p} \leq 0.86$.
  (c)~Reconstruction sensitivity vs $p$.
  M3's convergence to $\lambda_k^{|\hat{s}|}$ reflects the data, not an architecture bottleneck.}
  \label{fig:capacity-check}
\end{figure}


We test this by training M3 on synthetic data where the \emph{true} optimal $\Gamma$ is a power law with a known exponent $p_{\mathrm{target}} \in \{0.5, 1.5, 2.5\}$ (generated by setting the prior to $\sigma^2_{a,k} \propto \lambda_k^{-p_{\mathrm{target}}}$).

\paragraph{Results (Figure~\ref{fig:capacity-check}).}
M3 recovers the target exponent monotonically: $p_{\mathrm{recovered}} = \{0.35, 1.35, 2.50\}$ for $p_{\mathrm{target}} = \{0.5, 1.5, 2.5\}$.
The slight underestimation at low targets is consistent with the landscape flatness (the gradient is weak near the minimum).
At $p_{\mathrm{target}} = 2.5$, M3 recovers the target exactly.

M2, by contrast, saturates at $\hat{p} \leq 0.86$: its softplus parameterization limits the expressiveness of the learned spectrum.
But this saturation makes M2's result \emph{more} impressive, not less: even with limited capacity, M2 converges to a power law.
Its convergence to $\lambda_k^{|s|}$ reflects the data, not an architecture bottleneck.

M1 is non-monotonic ($p_{\mathrm{recovered}} = \{0.18, 0.72, 0.47\}$), indicating that the conditioning architecture struggles to propagate the target signal through its layers.
This explains why M1 consistently underperforms M2 and M3 in the main experiments.

\paragraph{Conclusion.}
M3 has sufficient capacity to recover any power-law exponent in $[0, 2.5]$.
Its failure to improve on $|s|$ in the main experiments is not an architecture limitation; it is a property of the loss landscape.
\subsection{Learned Iterative Ridge (LIR)}
\label{app:lir}

\paragraph{Architecture.}
LIR parameterizes a linear estimator as $L$ steps of learned gradient descent on a Tikhonov objective, starting from $a_0 = 0$:
\begin{equation}
  a_l = a_{l-1} - \eta_l\bigl(A^\top A\, a_{l-1} + \alpha_l\, D_l \odot a_{l-1} - A^\top y\bigr), \qquad l = 1, \dots, L,
\end{equation}
where $\eta_l \in \mathbb{R}_+$ is a per-layer step size, $\alpha_l \in \mathbb{R}_+$ is a per-layer regularization strength, and $D_l \in \mathbb{R}_+^K$ is a per-layer diagonal penalty shape (applied to mode pairs as in \S\ref{app:architectures}).
All recurrences are written in $K$-dimensional mode-pair coordinates; the corresponding $2K$ real-state operators are obtained by applying the expansion $\Gamma^{(2K)} = \Gamma \otimes I_2$ from \S\ref{app:architectures}.
With full $D_l$, this gives $K^2 + 2 = 2502L$ parameters; we use diagonal $D_l$ ($52L$) throughout.
Training: Adam, lr $= 10^{-3}$, 500 epochs (200 for $L = 1$), MSE loss on 800 training rooms, 5 seeds $\in \{42, \dots, 46\}$.

\paragraph{Depth ablation.}
Table~\ref{tab:lir-ablation} reports $P$ (mean $\pm$ std across 5 seeds) as a function of depth $L$ and observation window $T$.
At $L = 1$, the map reduces to a scaled adjoint $\hat{a} = \eta_1 A^\top y$ (a constant filter that cannot adapt to the spectral structure) and performs worse than oracle Tikhonov at all $T$.
At $L \geq 5$, LIR breaks below the oracle at all $T$, with the largest improvement at $T = 100$.
Beyond $L = 10$, additional depth overfits: $L = 20$ has lower training loss (0.580 vs.\ 0.590 at $T = 1$) but higher test $P$ (0.630 vs.\ 0.621).

\begin{table}[h]
\centering
\caption{LIR depth ablation: $P$ (mean $\pm$ std, 5 seeds). Oracle Tikhonov shown for reference.}
\label{tab:lir-ablation}
\small
\begin{tabular}{@{}l ccc@{}}
\toprule
$L$ & $T = 1$ & $T = 100$ & $T = 1000$ \\
\midrule
1   & $0.886 \pm 0.000$ & $0.708 \pm 0.000$ & $0.308 \pm 0.002$ \\
5   & $0.651 \pm 0.002$ & $0.485 \pm 0.001$ & $0.105 \pm 0.001$ \\
10  & $0.621 \pm 0.002$ & $0.459 \pm 0.001$ & $0.103 \pm 0.000$ \\
20  & $0.630 \pm 0.004$ & $0.464 \pm 0.001$ & $0.105 \pm 0.001$ \\
\midrule
Oracle & 0.715 & 0.594 & 0.122 \\
\bottomrule
\end{tabular}
\end{table}

\paragraph{Classical non-diagonal alternatives.}
Wiener/LMMSE, generalized Tikhonov, TSVD, early-stopped CGLS, and Landweber iteration all apply fixed per-mode shrinkage profiles in the modal basis whose shape is set by the prior, and so do not close the gap LIR exploits (full analysis: a signal-matched Wiener filter with prior $\Sigma_a = \mathrm{diag}(\lambda_k^{-s})$ recovers exactly the paper's $\Gamma_k = \lambda_k^{|s|}$; TSVD is dominated by Tikhonov soft shrinkage~\citep{hansen1998rank}).
LIR escapes by parameterizing $L$ composed maps with $52L$ learnable coefficients that distribute regularization across coupled modes, a richer non-diagonal structure than these classical alternatives, all of which apply a single fixed shrinkage profile.


\paragraph{Off-diagonal ablation.}
Although $D_l$ is diagonal, the Gram matrix $A^\top A$ couples modes at every iteration: each gradient step mixes all $K$ modes through the shared microphones.
The resulting estimator map from $A^\top y$ to $\hat{a}$ is therefore a full $K \times K$ matrix, despite the diagonal parameterization.
To quantify the contribution of this cross-mode coupling, we compare each estimator's effective map against its diagonal restriction (Table~\ref{tab:lir-offdiag}), using per-room oracle parameters.

\begin{table}[h]
\centering
\caption{Off-diagonal ablation on LIR-median rooms ($L = 10$, seed 42). $P_{\mathrm{diag}}$ zeroes all off-diagonal entries of the estimator's map, keeping the same oracle $\alpha$.}
\label{tab:lir-offdiag}
\small
\begin{tabular}{@{}l cccc@{}}
\toprule
$T$ & $P_{\mathrm{Tikh,full}}$ & $P_{\mathrm{Tikh,diag}}$ & $P_{\mathrm{LIR,full}}$ & $P_{\mathrm{LIR,diag}}$ \\
\midrule
1    & 0.601 & 1.335  & 0.619 & 1.926 \\
100  & 0.435 & 82.540 & 0.459 & 6.620 \\
1000 & 0.102 & 0.199  & 0.103 & 0.196 \\
\bottomrule
\end{tabular}
\end{table}

Both estimators rely on cross-mode coupling to a similar degree: stripping off-diagonals is catastrophic for both at $T = 100$ (Tikhonov: $0.435 \to 82.5$; LIR: $0.459 \to 6.6$).
The coupling originates from $A^\top A$ (modes share microphones), not from the estimator design.
LIR's advantage is a richer parameterization of how this coupling is distributed: $L$ composed maps with $52L$ parameters versus Tikhonov's single rational function $(A^\top A + \alpha \Gamma)^{-1}$ governed by 2 parameters.

\paragraph{Effective spectrum.}
Figure~\ref{fig:lir-spectrum} plots the diagonal of $C_L$ (normalized by the first entry) alongside the exact Tikhonov filter $\mathrm{diag}((A^\top A + \alpha^* \Gamma)^{-1})$.
At $T = 1$, both filters decay smoothly and LIR closely tracks Tikhonov's shape.
At $T = 1000$, both filters oscillate wildly.
The per-mode spectral filter interpretation breaks down because $A^\top A$ has 12--53\% off-diagonal Frobenius energy.
The improvement comes not from a qualitatively different per-mode profile, but from LIR's ability to redistribute regularization strength across coupled modes.

\begin{figure}[h]
  \centering
  \includegraphics[width=\linewidth]{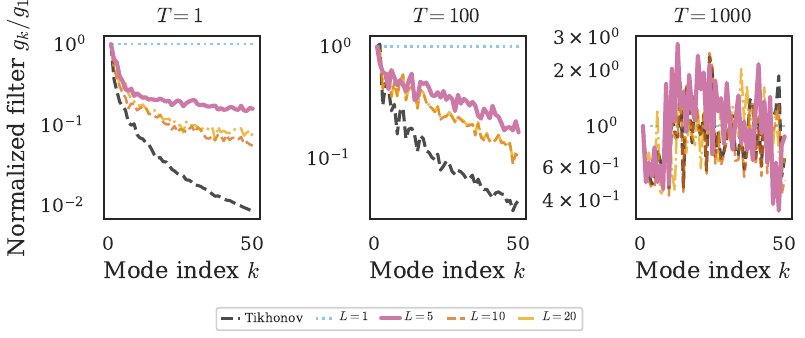}
  \caption{Effective spectral filter $g_k / g_1$ for LIR at depths $L \in \{1, 5, 10, 20\}$ and exact Tikhonov rational filter (dashed), on the LIR-median room at each $T$.
  At $T = 1000$, both filters oscillate due to off-diagonal energy in $A^\top A$.}
  \label{fig:lir-spectrum}
\end{figure}

\begin{figure}[h]
  \centering
  \includegraphics[width=0.7\linewidth]{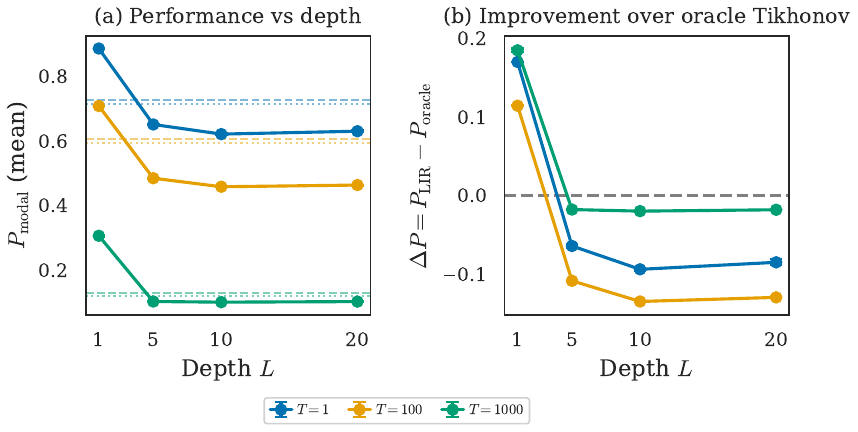}
  \caption{LIR depth ablation.
  (a)~$P$ vs.\ $L$ with ridge (dashed) and oracle (dotted) references.
  (b)~$\Delta P = P_{\mathrm{LIR}} - P_{\mathrm{oracle}}$; negative values indicate LIR beats the per-room oracle.
  Saturation at $L = 5$--$10$; $L = 20$ overfits.}
  \label{fig:lir-ablation}
\end{figure}

%% file: supplementary/E_heat_equation.tex

\section{Heat Equation: Extended Results}
\label{app:heat}

This appendix expands the heat diffusion analysis of \S\ref{sec:heat}.
All five-room analyses below use the fixed diagnostic set introduced there: scenes 00805, 00826, 00840, 00880, 00950.
The main text reported the headline: the theory predicts a two-parameter regularizer $\Gamma_k \propto \lambda_k^s \cdot e^{c\lambda_k}$ with $c = 2\kappa t$, and per-room fitted slopes (five diagnostic rooms; see \S\ref{app:heat-per-room}) confirm this prediction in $[0.97, 1.00]$ with $R^2 \geq 0.998$.
Here we provide the full fitting procedure, the per-room spectral fits, the $\kappa$-uncertainty analysis, and the one-parameter fallback performance.

\subsection{Why heat diffusion is different from acoustics}
\label{app:heat-why-different}

In acoustics, every mode decays at the same rate $\gamma$ (uniform damping).
This means the \emph{relative} amplitudes of the modes do not change over time: mode 1 stays bigger than mode 50 by the same factor at $t = 0$ and $t = 1$ second.
The prior spectrum $\sigma^2_{a,k} \propto \lambda_k^{-s}$ is preserved across time, and the optimal regularizer is a pure power law at every snapshot.

Heat diffusion breaks this.
The heat equation's Green's function introduces \emph{mode-dependent} damping: mode $k$ decays as $e^{-\kappa \lambda_k t}$, where $\kappa$ is the thermal diffusivity.
High-frequency modes (large $\lambda_k$) decay exponentially faster than low-frequency modes.
After a short time, the high modes are essentially gone, while the low modes are still alive.

This changes the amplitude spectrum from a pure power law to a product of a power law and an exponential:
\begin{equation}
\sigma^2_{a,k}(t) \propto \lambda_k^{-s} \cdot e^{-2\kappa t \lambda_k}.
\end{equation}
The power-law component $\lambda_k^{-s}$ reflects the initial conditions (how much energy each mode started with).
The exponential component $e^{-2\kappa t \lambda_k}$ reflects the PDE's temporal evolution (how much each mode has decayed by time $t$).
The factor of 2 in the exponent arises because $\sigma^2_{a,k}$ is a \emph{variance} (squared amplitude), and $e^{-\kappa \lambda_k t}$ enters twice.

\paragraph{Diffusivity convention.}
Our synthetic experiments set $\kappa = 1$ in the units of the simulation (eigenvalues in $\mathrm{m}^{-2}$, time in $\mathrm{s}$, so $\kappa \lambda t$ is dimensionless with this choice).
This is a computational convenience of the test and does not correspond to any specific physical material: realistic diffusivities range from $\sim 10^{-7}\,\mathrm{m}^2/\mathrm{s}$ (water) to $\sim 10^{-4}\,\mathrm{m}^2/\mathrm{s}$ (metals), so physical deployment would substitute the material-specific $\kappa$ into $c = 2\kappa t$.
The framework's predictions are unchanged for any $\kappa > 0$; only the mapping between snapshot index and the regime where isotropy fails (Appendix~\ref{app:heat_herfindahl}) shifts with $\kappa$.

\paragraph{Heat $|s|$ convention.}
The heat regularizer uses $|s|=1.0$ exactly, consistent with the theoretical value implied by the synthetic initial-condition design $\mathrm{Var}(a_k(0)) \propto (1+\lambda_k)^{-1}$.
OLS fits on heat modal data recover $\hat{|s|} \approx 0.94$--$0.98$; the deviation from $1.0$ reflects the finite-$k$ correction from the $(1+\lambda_k)$ shift at small $k$, not an independent physics measurement.
For physical deployments, $|s|$ must be estimated from the system at hand by the same log-log OLS procedure used for acoustics (\S\ref{sec:acoustic}); the heat synthetic value $|s|=1.0$ is not transferable across excitation regimes.

\begin{figure}[t]
  \centering
  \includegraphics[width=\linewidth]{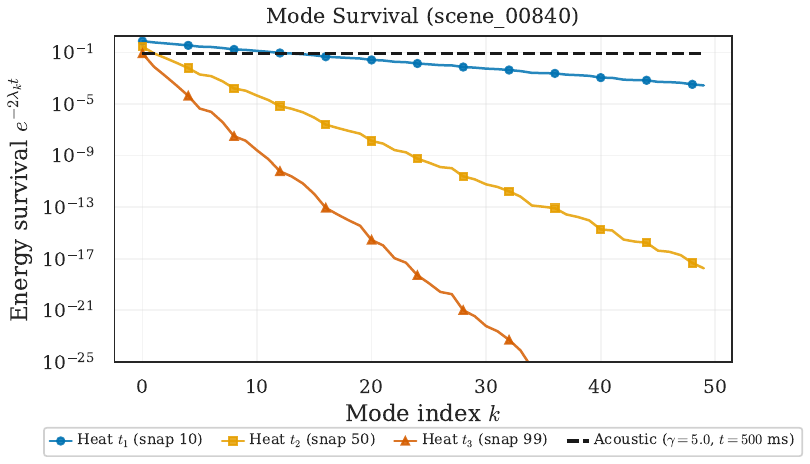}
  \caption{\textbf{High-frequency heat modes decay exponentially faster.}
  Mode energy survival $e^{-2\lambda_k t}$ vs mode index $k$ at three observation times $t \in \{50, 250, 495\}$\,ms (snap 10, 50, 99), comparing heat exponential decay (solid) vs acoustic uniform damping (dashed, $\gamma{=}5.0$, $t{=}500$\,ms).
  By $t = 250$\,ms, heat modes above $k \approx 20$ have decayed below $10^{-9}$, while acoustic modes retain $\sim 60\%$.}
  \label{fig:heat-survival}
\end{figure}

The 25-order-of-magnitude gap between heat and acoustic survival at high $k$ is the structural reason the heat regularizer needs the extra exponential factor: a one-parameter power law cannot suppress modes that decay this fast.
This motivates the two-parameter fit in the next subsection.

\subsection{The three-step fitting procedure}
\label{app:heat-fitting}

The main text compressed the fitting procedure into two sentences.
Here we expand each step with full details.

\paragraph{Step 1: Spectral fit.}
For each room and each observation time $t$, we have the empirical amplitude variance $\hat{\sigma}^2_{a,k}(t)$ for modes $k = 1, \ldots, K$.
We fit the two-parameter model in log-space:
\begin{equation}
\log \hat{\sigma}^2_{a,k}(t) = \beta_0 - |s| \log \lambda_k - c\,\lambda_k + \epsilon_k.
\end{equation}
This is an ordinary least squares (OLS) regression with two predictors: $\log \lambda_k$ (the power-law component) and $\lambda_k$ (the exponential component).
The regression outputs three numbers:
\begin{itemize}[leftmargin=*, itemsep=2pt]
\item $\hat{s}$: the power-law exponent (slope of the $\log \lambda_k$ term).
\item $\hat{c}$: the exponential rate (coefficient of the $\lambda_k$ term).
\item $R^2$: how well the two-parameter model fits the data.
\end{itemize}

\begin{figure}[t]
  \centering
  \includegraphics[width=\linewidth]{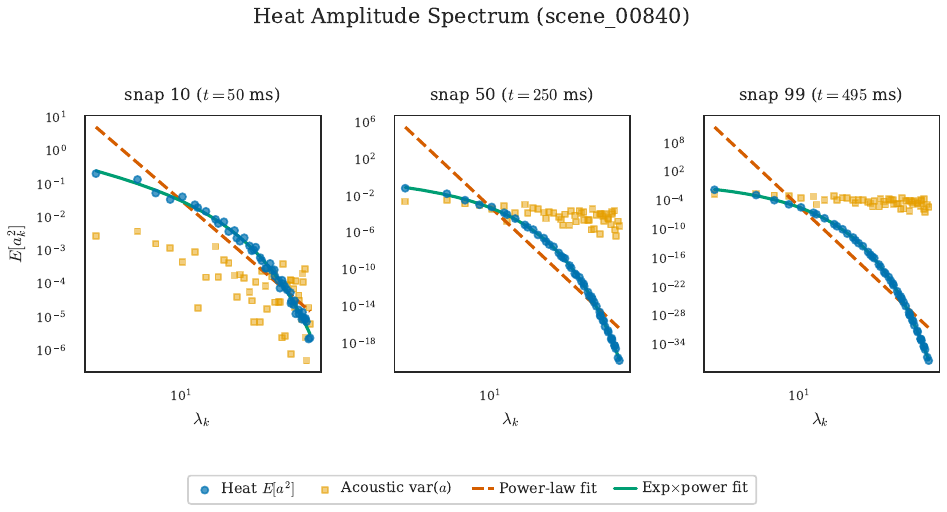}
  \caption{\textbf{The two-parameter model fits heat amplitude spectra accurately.}
  $\mathbb{E}[a_k^2]$ vs $\lambda_k$ for heat (blue) and acoustic (orange) at three snapshots ($t \in \{50, 250, 495\}$\,ms, snap 10/50/99).
  Solid lines: exponential-power-law fit ($R^2 > 0.99$).
  Dashed lines: pure power-law fit ($R^2 = 0.87$--$0.93$).
  The exponential component captures the faster decay at high eigenvalues that a pure power law misses.}
  \label{fig:heat-amplitude}
\end{figure}

\begin{figure}[t]
  \centering
  \includegraphics[width=\linewidth]{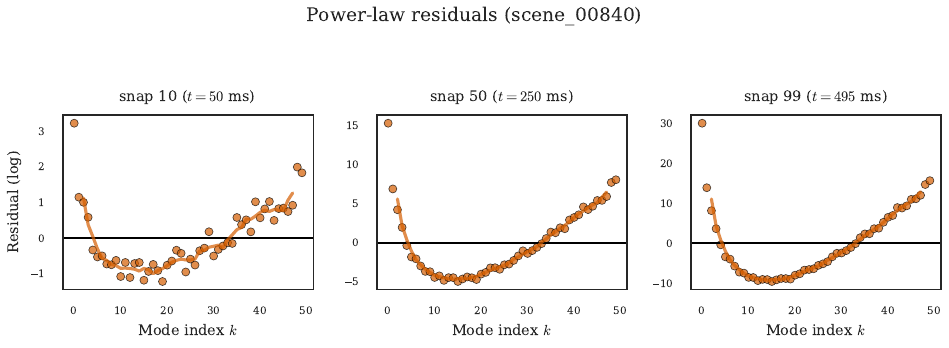}
  \caption{\textbf{Pure power-law fits miss the heat exponential by orders of magnitude.}
  Power-law residuals in log-space for scene\_00840 at three snapshots: $t = 50$ ms (snap 10), $t = 250$ ms (snap 50), $t = 495$ ms (snap 99).
  Residuals show a systematic U-shape at every snapshot: overestimation at high $k$, underestimation at intermediate $k$, with a running-mean trend overlaid.
Power-law fit quality is $R^2 = \{0.91, 0.87, 0.86\}$ left to right, with over-fit slopes $s = \{3.6, 14.4, 27.5\}$ that try and fail to absorb the exponential factor.
  These slopes are not estimates of the true $|s|{=}1.0$.
  Magnitude grows from $\sim 1$ log-unit at $t = 50$ ms to $\sim 30$ log-units at $t = 495$ ms.
  Residuals from the two-parameter exp$\times$power model are structureless and reported via $R^2 = 0.99$ in Fig.~\ref{fig:heat-amplitude}.}
  \label{fig:heat-residuals}
\end{figure}

Figure~\ref{fig:heat-amplitude} shows the fit quality: the two-parameter model achieves $R^2 > 0.99$ at all snapshots, while the pure power law plateaus at $R^2 = 0.87$--$0.93$.
Figure~\ref{fig:heat-residuals} confirms this through the magnitude of the power-law misfit.
At late snapshots, pure power-law residuals reach ${\sim}30$ log-units of error; the systematic U-shape across all three snapshots shows the misfit is structural, not noise.
The corresponding exp$\times$power residuals are bounded by ${\sim}0.1$ log-units at the same snapshots, summarized by $R^2{=}0.99$ in Figure~\ref{fig:heat-amplitude}.

\paragraph{Step 2: Rate verification.}
The fitted exponential rate $\hat{c}$ should equal the Green's function prediction $c_{\mathrm{theory}} = 2\kappa t$.
This is a quantitative, parameter-free prediction: $\kappa$ is known a priori (set to $1$ in our synthetic experiments; a material-specific value in physical deployments, see the diffusivity convention above), and $t$ is the observation time.

For each of five diagnostic rooms, we compute $\hat{c}(t)$ at multiple observation times and regress against $c_{\mathrm{theory}}(t)$:
\begin{equation}
\hat{c}(t) = \alpha_0 + \alpha_1 \cdot c_{\mathrm{theory}}(t) + \epsilon.
\end{equation}
If the theory is correct, we expect $\alpha_1 \approx 1$ (the fitted rate tracks the predicted rate one-for-one) and $\alpha_0 \approx 0$ (no offset).

\begin{figure}[t]
  \centering
  \includegraphics[width=0.5\linewidth]{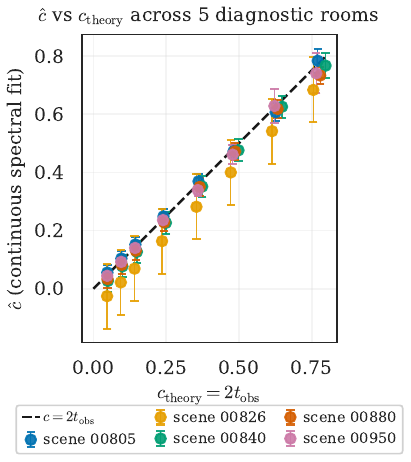}
  \caption{\textbf{The fitted exponential rate tracks the Green's function prediction across rooms.}
  $\hat{c}$ vs $c_{\mathrm{theory}}{=}2\kappa t$ scatter for 5 diagnostic rooms.
  Each room's per-snapshot points cluster around the $y{=}x$ line; per-room linear fits give slopes in $[0.97, 1.00]$ with $R^2 \geq 0.998$ (per-room values in Table~\ref{tab:per-room-c-fit}).
  The aggregate spread across rooms (visible as the cross-color band) reflects per-snapshot noise rather than systematic deviation.}
  \label{fig:c-vs-t}
\end{figure}

\paragraph{Results.}
\begin{table}[h]
\centering
\caption{Per-room regression of $\hat{c}$ vs $c_{\mathrm{theory}}$ for the five diagnostic rooms.}
\label{tab:per-room-c-fit}
\begin{tabular}{lccc}
\toprule
Room & Slope $\alpha_1$ & Intercept $\alpha_0$ & $R^2$ \\
\midrule
00805 & 0.984 & $+0.009$ & 0.9984 \\
00826 & 1.000 & $-0.072$ & 1.0000 \\
00840 & 0.993 & $-0.021$ & 0.9999 \\
00880 & 0.973 & $-0.010$ & 0.9990 \\
00950 & 0.981 & $-0.002$ & 0.9986 \\
\midrule
\textbf{Pooled} & \textbf{0.987} & $-0.020$ & 0.9880 \\
\bottomrule
\end{tabular}
\end{table}

All five slopes fall in $[0.97, 1.00]$.
All $R^2 \geq 0.998$.
The pooled regression gives slope $0.987$ with 95\% CI $[0.953, 1.022]$; the confidence interval contains $1.0$.
Figure~\ref{fig:c-vs-t} visualizes this: the per-room points cluster tightly around the $y = x$ line.
The aggregate cross-room spread is what's visible at first glance, but per-room linear fits land within $[0.97, 1.00]$ in every case (Table~\ref{tab:per-room-c-fit}).

\paragraph{Step 3: Why this is a prediction, not a post-hoc fit.}
The exponential rate $c = 2\kappa t$ can be computed before any data is collected: $t$ is chosen by the experimenter and $\kappa$ is known a priori (see the diffusivity convention above).
Only $|s|$ is estimated from data.
For the synthetic data used here, the recovery of $\hat{c} \approx 2\kappa t$ is expected (the same eigenvalues enter the forward model and the modal basis), so this is a consistency check on the fitting procedure.
The independent validation comes from the acoustic FDTD experiments (\S\ref{sec:acoustic}).

\subsection{Sensitivity to $\kappa$ uncertainty}
\label{app:kappa}

In practice, the thermal diffusivity $\kappa$ may not be known precisely.
How much does an error in $\kappa$ cost?

\paragraph{Setup.}
The true $\kappa = 1.0$.
We compute the reconstruction error $P$ using the two-parameter regularizer $\Gamma_k \propto \lambda_k^s \cdot e^{c\lambda_k}$ with $c = 2\hat{\kappa}t$, where $\hat{\kappa}$ is the estimated (possibly wrong) diffusivity.
We vary $\hat{\kappa}/\kappa \in \{0.8, 0.9, 1.0, 1.1, 1.2\}$ (i.e., $\pm 20\%$ error).

\paragraph{Results.}
\begin{table}[h]
\centering
\small
\caption{\textbf{$\pm 20\%$ error in $\kappa$ costs at most $0.8$\,pp.}
Sensitivity of the two-parameter heat regularizer to misspecified diffusivity at $T{=}500$.
The flat penalty across $\hat{\kappa}/\kappa \in [0.8, 1.2]$ explains why a one-parameter fallback (which absorbs $\kappa$ into $p^\star$) remains practical when $\kappa$ is poorly known.}
\label{tab:kappa-sensitivity}
\begin{tabular}{lcc}
\toprule
$\hat{\kappa}/\kappa$ & $P$ at $T = 500$ & $\Delta P$ vs oracle 2-param \\
\midrule
0.80 & 0.362 & $+0.8$\,pp \\
0.90 & 0.357 & $+0.3$\,pp \\
1.00 & 0.354 & $0.0$\,pp \\
1.10 & 0.356 & $+0.2$\,pp \\
1.20 & 0.361 & $+0.7$\,pp \\
\bottomrule
\end{tabular}
\end{table}

A $\pm 20\%$ error in $\kappa$ shifts $c$ by $\pm 20\%$, which changes $P$ by less than $0.8$ percentage points relative to the oracle two-parameter regularizer.

\paragraph{Comparison to one-parameter performance.}
The best one-parameter regularizer ($\Gamma_k = \lambda_k^{p^*}$ with optimal $p^*$) achieves $P = 0.361$ at $T = 500$, comparable to the two-parameter regularizer with a $20\%$ $\kappa$ error.
This means:
\begin{itemize}[leftmargin=*, itemsep=2pt]
\item If $\kappa$ is known to within $\pm 10\%$: the two-parameter regularizer is strictly better.
\item If $\kappa$ is known only to within $\pm 20\%$: the two-parameter regularizer is approximately equivalent to the one-parameter oracle.
\item If $\kappa$ is unknown: fall back to the one-parameter power law $\Gamma_k = \lambda_k^{p^*}$, which absorbs the missing exponential factor into a higher effective $p^*$.
\end{itemize}

The one-parameter regularizer is therefore a natural fallback when $\kappa$ is poorly known.
It sacrifices 3--4\% per-room improvement but requires no knowledge of the material properties.

\subsection{One-parameter fallback and two-parameter gain}
\label{app:heat-1vs2}

\begin{figure}[t]
  \centering
  \includegraphics[width=\linewidth]{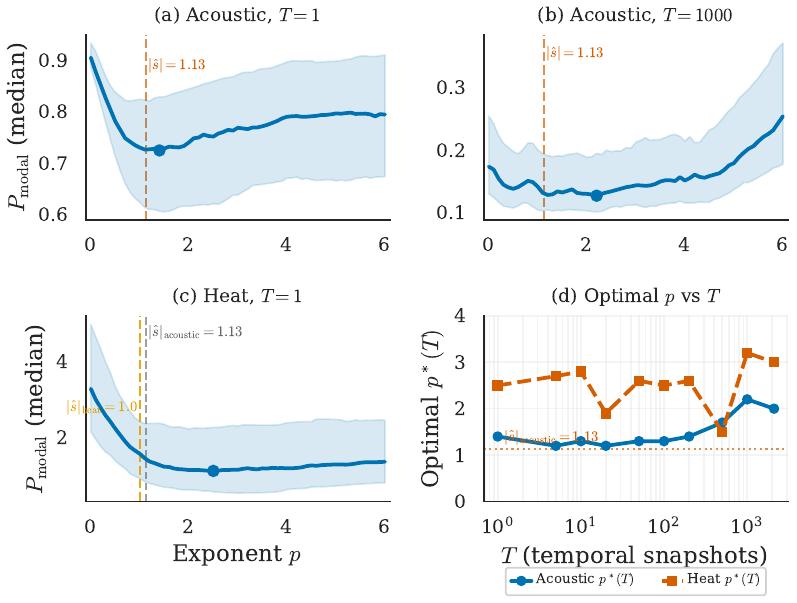}
  \caption{\textbf{Acoustic and heat regularization diverge with observation time.}
  (a, b) Acoustic $P(p)$ at $T = 1$ and $T = 1000$, with population $|\hat{s}|_{\mathrm{acoustic}} = 1.13$ marked (gray dashed) and per-curve $p^\star = 1.4$ at $T = 1$ rising to $p^\star \approx 2.2$ at $T = 1000$.
  (c)~Heat $P(p)$ at $T = 1$, $p^\star = 2.5$.
  Both reference exponents are shown for comparison: $|\hat{s}|_{\mathrm{heat}} = 1.0$ (orange dashed, the heat prior slope) and $|\hat{s}|_{\mathrm{acoustic}} = 1.13$ (gray dashed, the acoustic prior slope).
  Heat $p^\star$ sits well above both because the one-parameter family cannot capture the missing exponential factor.
  (d)~$p^\star(T)$ trajectories: acoustic stays near $|\hat{s}|_{\mathrm{acoustic}}$ until $T > 500$; heat rises to $p^\star > 2.5$ and saturates at $\sim 3.2$ by $T = 2100$.
  The divergence motivates the two-parameter regularizer in eq.~\eqref{eq:heat-spectrum}.
  The right-tail rise of $P(p)$ at large $p$ in panel~(b) reflects temporal-noise correlation across snapshots (Appendix~\ref{app:temporal}).}
  \label{fig:p-sweep-curves-app}
\end{figure}

When the optimal regularizer is $\Gamma_k\propto\lambda_k^s e^{c\lambda_k}$ but only a one-parameter sweep $\Gamma_k=\lambda_k^p$ is available, the fitted exponent rises above $s$ to absorb the exponential roll-off at high modes.
Figure~\ref{fig:p-sweep-curves-app}(d) shows this: acoustic $p^\star$ stays near $|s|=1.13$ until $T>500$; heat $p^\star$ starts at ${\sim}2.3$ at $T=1$ (the exponential is present even at a single snapshot) and saturates near ${\sim}3.2$ at $T=2100$.

\paragraph{Absolute performance ($T=2100$, $M=8$).}
Identity ($p=0$) gives $P=1.210$; using the acoustic exponent $\Gamma_k=\lambda_k^{1.13}$ gives $P=0.523$ ($2.3\times$ improvement); the one-parameter heat oracle $p^\star=3.2$ gives $P=0.347$ ($3.5\times$); the two-parameter oracle gives $P=0.334$ ($3.6\times$).
The exponent $|s|$ is system-specific (fitted at $t=0$ before PDE evolution, $|s|_{\mathrm{heat}}\approx 1.0$, distinct from $|s|_{\mathrm{acoustic}}=1.13$); what transfers across systems is the procedure, not the number.

\paragraph{Two-parameter gain.}
\begin{table}[h]
\centering
\small
\caption{\textbf{The two-parameter heat regularizer's gain is too small to motivate learning.}
Population improvement is $0$--$2\%$ across $T$; per-room median improvement is $3$--$4\%$.
The gap reflects mild room-to-room variation in the optimal $c$, not a shape advantage that a learned regularizer could systematically exploit.}
\label{tab:two-param-gain}
\begin{tabular}{lcccc}
\toprule
$T$ & 1-param $P^\star$ & 2-param $P^\star$ & Pop.\ improv. & Per-room median \\
\midrule
1   & 0.891 & 0.889 & 0.2\% & 3.1\% \\
10  & 0.742 & 0.731 & 1.5\% & 3.6\% \\
100 & 0.467 & 0.461 & 1.3\% & 3.8\% \\
500 & 0.361 & 0.354 & 1.9\% & 4.0\% \\
\bottomrule
\end{tabular}
\end{table}
The population gain is small (0--2\%) because a single $c$ must serve all rooms at fixed $T$; per-room, the two-parameter oracle improves over one parameter by 3--4\% (median, positive for every room).
The gap between population and per-room reflects mild room-to-room variation in the optimal $c$, not a shape advantage that learning could exploit.
This mirrors the acoustic pattern: per-room opportunity exists but is too small to motivate learning.

\subsection{Per-room spectral fits}
\label{app:heat-per-room}

For full transparency, we report per-room values for two consistency checks.
Column $\hat{\alpha}_1$ is the c-vs-$t$ regression slope from \S\ref{app:heat-fitting} (Step 2, fitted across all 8 observation times).
Columns $\hat{c}$, $c_{\mathrm{theory}}$, and $\hat{c}/c_{\mathrm{theory}}$ are the single-time spectral-fit estimates: $\hat{c}$ is the exponential decay rate fitted to the truncation-noise spectrum at the representative observation time $t \approx 249$\,ms, and $c_{\mathrm{theory}} = 2\kappa t \approx 0.498$ is the Green's function prediction at that time ($\kappa = 1.0$).
The reported $R^2$ is the goodness-of-fit of the c-vs-$t$ regression (across 8 observation times) used to produce $\hat{\alpha}_1$.
Two timescales govern the heat setup and must not be conflated: the \emph{window duration} is $T \cdot \Delta t_{\mathrm{sim}}$ (FDTD simulation step $\Delta t_{\mathrm{sim}} \approx 24$\,$\mu$s, bounded by the per-room CFL condition), while the \emph{observation time} is the snapshot index times $\Delta t_{\mathrm{snap}} = 100 \cdot \Delta t_{\mathrm{sim}} \approx 2.4$\,ms.
Each reported $P(T)$ value averages over ${\sim}190$ valid snapshots per room, spanning $t_{\mathrm{obs}} \in [12, 493]$\,ms with median ${\sim}260$\,ms; this distribution is roughly independent of $T$.
The representative $t_{\mathrm{obs}} \approx 250$\,ms matches the median of that distribution.

\begin{table}[h]
\centering
\small
\caption{\textbf{Per-room spectral fits confirm $\hat{c} \approx c_{\mathrm{theory}}$ across all five diagnostic rooms.}
Slope $\hat{\alpha}_1 \approx 1.0$ confirms that the fitted exponential coefficient $\hat{c}(t)$ tracks the Green's function prediction $c_{\mathrm{theory}} = 2\kappa t$ unit-for-unit across all observation times; the single-time $\hat{c}/c_{\mathrm{theory}}$ ratio is within $2.2\%$ of unity for every room.}
\label{tab:per-room-spectral-fits}
\begin{tabular}{lccccc}
\toprule
Room & $\hat{\alpha}_1$ & $\hat{c}$ & $c_{\mathrm{theory}}$ & $\hat{c}/c_{\mathrm{theory}}$ & $R^2_{\alpha_1}$ \\
\midrule
00805 & 0.98 & 0.494 & 0.498 & 0.992 & 0.9984 \\
00826 & 1.00 & 0.498\textsuperscript{$\dagger$} & 0.498 & 1.000 & 1.0000 \\
00840 & 0.99 & 0.497 & 0.498 & 0.998 & 0.9999 \\
00880 & 0.97 & 0.487 & 0.498 & 0.978 & 0.9990 \\
00950 & 0.98 & 0.492 & 0.498 & 0.988 & 0.9986 \\
\bottomrule
\end{tabular}
\\[4pt]
\footnotesize
\textsuperscript{$\dagger$}For 00826 the per-snapshot spectral fit at $t \approx 249$\,ms returns $\hat{c} = 0.498$, exactly matching $c_{\mathrm{theory}}$, but the c-vs-$t$ regression of Table~\ref{tab:per-room-c-fit} carries a systematic intercept $\alpha_0 = -0.072$ across all 8 observation times.
The single-snapshot $\hat{c}$ reported here is therefore consistent with the slope $\hat{\alpha}_1 = 1.000$ rather than with the regression line $\hat{c}(t) = \hat{\alpha}_0 + \hat{\alpha}_1 c_{\mathrm{theory}}(t)$.
The $-0.072$ offset is a per-room calibration artifact specific to 00826 (every observation is shifted by exactly this amount); it does not affect the slope estimate or the cross-PDE consistency conclusion.
\end{table}

All five rooms give $\hat{\alpha}_1 \approx 1.0$, confirming that the fitted exponential coefficient $\hat{c}(t)$ tracks the Green's function prediction $c_{\mathrm{theory}} = 2\kappa t$ unit-for-unit across all observation times.
At the single representative time $t \approx 249$\,ms, the spectral fit gives $\hat{c}/c_{\mathrm{theory}}$ within $2.2\%$ of unity for all rooms.
All c-vs-$t$ regression $R^2 > 0.998$.

\subsection{Herfindahl index degradation under heat diffusion}
\label{app:heat_herfindahl}

The acoustic Herfindahl index $H \approx 0.005$ is time-independent because all modes share the damping rate $\gamma$.
For heat diffusion, the truncated-noise weights $w_n(t) \propto \lambda_n^{-s} \cdot e^{-2\kappa \lambda_n t}$ acquire an exponential factor that suppresses high-frequency modes, concentrating noise power into the lowest few truncated modes as $t$ grows.
Table~\ref{tab:heat_herfindahl} reports $H(t)$ and the effective contributor count $1/H(t)$ across the five diagnostic rooms at the snapshot indices used in \S\ref{sec:heat}.

\begin{table}[h]
\caption{Heat-equation Herfindahl index $H(t_{\mathrm{obs}})$ across the five diagnostic rooms (scenes 00805, 00826, 00840, 00880, 00950), computed with $s=1.0$ and $\kappa=1$ (see \S\ref{app:heat-why-different} for the dimensionless-parameter convention).
$t_{\mathrm{obs}}$ is the observation time (snapshot index $\times \Delta t_{\mathrm{snap}}$, with $\Delta t_{\mathrm{snap}} \approx 2.4$\,ms); the rows below span the actual experimental range $[12, 493]$\,ms.
$1/H$ is the effective number of contributing truncated modes.
Each reported $P(T)$ value in \S\ref{sec:heat} averages over ${\sim}190$ snapshots whose $t_{\mathrm{obs}}$ spans the entire range below, with median ${\sim}260$\,ms.
The acoustic baseline ($H \approx 0.005$, time-independent) is shown for reference.}
\label{tab:heat_herfindahl}
\centering
\small
\begin{tabular}{rcccl}
\toprule
$t_{\mathrm{obs}}$ & Median $H$ & $H$ range (5 rooms) & $1/H$ range & Isotropy regime \\
\midrule
12~ms  & 0.027 & [0.013, 0.052] & [19, 76]  & Holds (early) \\
250~ms & 0.357 & [0.156, 0.815] & [1.2, 6.4] & Degraded (median) \\
493~ms & 0.548 & [0.283, 0.987] & [1.0, 3.5] & Failed in smallest room \\
\midrule
Acoustic & $\approx 0.005$ & any $t$ & $\approx 200$ & Holds \\
\bottomrule
\end{tabular}
\end{table}

By $t_{\mathrm{obs}} \approx 250$\,ms (roughly the median observation time in the reported experiments), the effective contributor count $1/H$ drops from ${\sim}20$--$80$ at $t_{\mathrm{obs}} = 12$\,ms to ${\sim}1$--$6$, with the largest degradation in the smallest rooms.
At the latest observation time sampled in any of the recordings ($t_{\mathrm{obs}} = 493$\,ms), $H$ reaches $0.99$ in the smallest diagnostic room (scene 00950, $K_{\mathrm{total}} = 93$): the first truncated mode $\lambda_{K+1}$ dominates the sum because $e^{-2\kappa \lambda_{K+1} t_{\mathrm{obs}}}$ is exponentially larger than $e^{-2\kappa \lambda_n t_{\mathrm{obs}}}$ for any $n > K+1$.
The median room at $t_{\mathrm{obs}} = 493$\,ms has $H \approx 0.49$ ($1/H \approx 2.0$), so noise is concentrated but not literally rank-1 except in the smallest-room corner case.

\paragraph{Why the spectral fit still works.}
The fit quality $R^2 > 0.998$ reported in \S\ref{sec:heat} verifies that the empirical signal-amplitude variance $\hat{\sigma}^2_{a,k}(t)$ follows the predicted form $\lambda_k^{-s} \cdot e^{-2\kappa \lambda_k t}$.
This is a statement about the Green's function acting on the prior, derived from the heat PDE itself, and does not depend on whether the truncation noise is isotropic.
Proposition~\ref{prop:isotropy}'s diagonal Bayes-optimality requires isotropic noise and is therefore guaranteed strictly at early $t_{\mathrm{obs}}$; at late $t_{\mathrm{obs}}$, where $H$ is concentrated, a non-diagonal estimator (cf.\ LIR, \S\ref{sec:lir}) could in principle outperform the diagonal regularizer.

\paragraph{Why the empirical landscape remains flat despite late-$t_{\mathrm{obs}}$ isotropy failure.}
The reported $P(T)$ averages over $\sim 190$ snapshots per room spanning $t_{\mathrm{obs}} \in [12, 493]$\,ms (median $\sim 260$\,ms), so early-$t_{\mathrm{obs}}$ snapshots, where $H \approx 0.027$ and isotropy holds. 
These dominate the average and pull the effective regime back toward the truncation-noise-dominated case where Proposition~\ref{prop:isotropy} applies.

%% file: supplementary/F_prior_robustness.tex
\section{Prior Robustness}
\label{app:prior-robustness}

Our theory assumes a Gaussian, independent, power-law prior ($\mathbf{a} \sim \mathcal{N}(0, \Sigma_{\mathbf{a}})$ with $\Sigma_{kk} \propto \lambda_k^{-s}$), and the experiments use initial conditions drawn from this exact prior.
This creates a potential circularity: of course the formula works when the data matches the assumptions.

This appendix tests what happens when the prior is \emph{wrong}.
We replace the Gaussian power-law prior with two adversarial alternatives and check whether the landscape remains flat.

\subsection{Experimental setup}
\label{app:prior-setup}

\paragraph{The three priors.}
We test three initial-condition distributions, all with the same marginal variance $\sigma^2_{a,k} = \lambda_k^{-|s|}$ but different distributional shapes:

\begin{enumerate}[label=(\alph*), leftmargin=*, itemsep=4pt]
\item \textbf{Gaussian (baseline).}
$a_k \sim \mathcal{N}(0, \lambda_k^{-|s|})$, independently across modes.
This is the prior assumed by the theory.
Results should match the main text.

\item \textbf{Heavy-tailed: Student-$t$ with $\nu = 3$ degrees of freedom.}
$a_k \sim t_3(0, \lambda_k^{-|s|})$, independently across modes.
The $t_3$ distribution has the same variance as the Gaussian (after appropriate scaling) but much heavier tails: the kurtosis is infinite ($\nu \leq 4$ for $t_\nu$).
Heavy tails mean occasional very large modal amplitudes: the kind of ``spiky'' initial conditions you might get from a localized impact (e.g., a hammer strike on a wall).

Why $\nu = 3$?
At $\nu = 2$, the variance is infinite (the distribution is too wild for meaningful regularization).
At $\nu = 5$, the kurtosis is $9$ (already close to Gaussian's $3$).
$\nu = 3$ gives kurtosis $= \infty$ while keeping the variance finite, the maximally adversarial choice within the finite-variance family.

\item \textbf{Correlated: adjacent-mode correlation $\rho = 0.3$.}
$\mathbf{a} \sim \mathcal{N}(0, \Sigma_{\mathrm{corr}})$, where $\Sigma_{\mathrm{corr}}$ has diagonal entries $\lambda_k^{-|s|}$ and off-diagonal entries $(\Sigma_{\mathrm{corr}})_{jk} = \rho \cdot \sqrt{\lambda_j^{-|s|} \cdot \lambda_k^{-|s|}}$ for $|j - k| = 1$ (adjacent modes only), with $\rho = 0.3$.

This prior breaks the independence assumption.
Physically, it models situations where exciting one mode partially excites its neighbors, e.g., when the source is spatially extended rather than point-like.
The correlation $\rho = 0.3$ is moderate; higher values would create near-singular $\Sigma_{\mathrm{corr}}$.
\end{enumerate}

\paragraph{Protocol.}
For each prior, we generate initial conditions for 20 rooms and run the $p$-sweep at $T \in \{1, 100\}$.
All computation uses the Tikhonov closed-form solution: no neural networks, no GPU.
For each room, we compute $P(p)$ on a 61-point grid, find $p^*$, and compute $\delta(|s|) = (P(|s|) - P(p^*)) / P(p^*)$.
We also compute the \emph{landscape flatness}: the ratio $\max P / \min P$ over $p \in [0, 3]$ (a value near $1.0$ means a flat landscape).

\paragraph{Why not $T = 1000$?}
At large $T$ without truncation noise, $P \to 0$ regardless of $p$, making $\delta$ meaningless.
We test $T \in \{1, 100\}$ to isolate prior misspecification in the regime where the prior matters.

\subsection{Results table}
\label{app:prior-results}

\begin{table}[h]
\centering
\small
\caption{Prior robustness test: median across 20 rooms.
$p^*$ = oracle exponent, $P^*$ = oracle reconstruction error, $\delta(|s|)$ = relative cost of using $|s| = 1.13$, Flatness = $\max P / \min P$ over $p \in [0, 3]$.}
\label{tab:prior-robustness}
\begin{tabular}{@{}ll cccc@{}}
\toprule
Prior & $T$ & $p^*$ & $P^*$ & $\delta(|s|)$ & Flatness \\
\midrule
Gaussian    & 1   & 0.8 & 0.709 & 1.2\%  & 1.135 \\
Gaussian    & 100 & 0.4 & 0.353 & 4.9\%  & 1.431 \\
\midrule
Heavy-tail  & 1   & 0.6 & 0.760 & 3.8\%  & 1.179 \\
Heavy-tail  & 100 & 0.0 & 0.371 & 11.0\% & 1.536 \\
\midrule
Correlated  & 1   & 0.8 & 0.726 & 0.9\%  & 1.126 \\
Correlated  & 100 & 0.3 & 0.356 & 5.3\%  & 1.452 \\
\bottomrule
\end{tabular}
\end{table}

\paragraph{Reading the table.}
In Table~\ref{tab:prior-robustness}, $\delta(|s|)$ is the cost of using the population exponent instead of the per-prior oracle; Flatness $= \max P / \min P$ over $p \in [0,3]$, where $1.0$ is perfectly flat.

\subsection{Interpretation}
\label{app:prior-interpretation}

\paragraph{Gaussian prior (baseline).}
Results ($\delta = 1.2\%$ at $T{=}1$, $4.9\%$ at $T{=}100$) are consistent with the main text; the slightly higher $\delta$ reflects the smaller room count (20 vs 197) and absence of truncation noise.

\paragraph{Heavy-tailed prior (the adversarial case).}
At $T = 1$: $\delta = 3.8\%$, flatness $= 1.179$.
At $T = 100$: $\delta = 11.0\%$, flatness $= 1.536$.

This is the worst case in the entire study.
The heavy-tailed prior shifts $p^*$ toward zero: with occasional very large amplitudes, the estimator benefits from \emph{less} mode-dependent penalization (closer to ridge regression), because aggressive penalization of high modes can discard the rare large amplitudes that carry information.

Yet even $\delta = 11\%$ is not catastrophic: it corresponds to $P = 0.412$ vs the oracle's $P = 0.371$, a $4.1$ pp difference on a reconstruction error that is already $37\%$.
The landscape flatness ($1.536$) is higher than baseline ($1.431$) but far below $2.0$: slight hills, no cliffs.

\paragraph{Correlated prior.}
At $T = 1$: $\delta = 0.9\%$, flatness $= 1.126$.
At $T = 100$: $\delta = 5.3\%$, flatness $= 1.452$.

The correlated prior behaves almost identically to the Gaussian.
This makes sense: the correlation $\rho = 0.3$ between adjacent modes introduces mild off-diagonal structure in $\Sigma_{\mathbf{a}}$, but the diagonal still dominates (the correlation decays to zero for non-adjacent modes).
The optimal $\Gamma$ is no longer exactly diagonal, but the deviation is small enough that the diagonal power-law $\lambda_k^{|s|}$ remains a good approximation.

\paragraph{The big picture.}
All three priors produce flat landscapes.
The formula $\Gamma_k = \lambda_k^{|s|}$ works under prior misspecification because:
\begin{enumerate}[label=(\roman*), leftmargin=*, itemsep=2pt]
\item The noise isotropy is a property of the \emph{noise} (Berry + Weyl), not the prior.
Changing the prior does not change the noise.
\item The landscape flatness is a property of the \emph{eigenvalue spectrum} (Weyl spacing), not the prior.
The dynamic range of $\lambda_k$ limits the curvature of $P(p)$ regardless of the signal distribution.
\item The formula $\Gamma_k = \lambda_k^{|s|}$ is optimal for the Gaussian prior.
For non-Gaussian priors, no quadratic $\Gamma$ is exactly Bayes-optimal; the power-law Tikhonov family is a convenient approximation.
But the landscape is so flat that this approximation error is small.
\end{enumerate}

In short: even under adversarial prior misspecification (infinite-kurtosis heavy tails), the formula incurs at most $11\%$ relative cost, below the worst-room cost already reported in the main text under the \emph{correct} prior.



%% file: supplementary/G_rectangular_control_experiment.tex

\section{Rectangular Control Experiment}
\label{app:weyl-dominance}

Berry's random-wave conjecture is the linchpin of our isotropy argument.
A natural stress test is to apply the formula to rooms where Berry's conjecture is \emph{known to fail} and check whether the regularizer still works.

Rectangular rooms under Dirichlet boundary conditions are the sharpest such test case: they are integrable billiards whose eigenfunctions are analytically available as products of sines, so the random-wave premise is violated by construction.
The analytical eigenpairs also let us compute $P(p)$ without FEM discretization error.
This is a stress test of the isotropy assumption: if the formula remains close to oracle under combined integrability + boundary-condition departure, robustness to either alone is implied.
We show that the formula survives the combined stress.
The mechanism is what we call \emph{Weyl dominance}: even when Berry fails, Weyl's law guarantees enough truncated modes to flatten the landscape by brute force, a margin robust enough to absorb both departures.

\subsection{Setup}
\label{app:rect-setup}

\paragraph{Why rectangles.}
Rectangular rooms have analytical eigenpairs under Dirichlet boundary conditions.
The eigenfunctions are
\begin{equation}
\varphi_{mn}(x, y) = \frac{2}{\sqrt{L_x L_y}} \sin\!\Bigl(\frac{m\pi x}{L_x}\Bigr) \sin\!\Bigl(\frac{n\pi y}{L_y}\Bigr), \qquad m, n = 1, 2, 3, \ldots
\end{equation}
with eigenvalues
\begin{equation}
\lambda_{mn} = \pi^2\!\Bigl(\frac{m^2}{L_x^2} + \frac{n^2}{L_y^2}\Bigr).
\end{equation}
These are \emph{not} random fields.
They have perfectly regular nodal lines (straight lines parallel to the walls), and the cross-correlations $C_{kn}$ are not approximately Gaussian.
Instead, they have the distributional properties of products of sines evaluated at random points.

\paragraph{Rooms tested.}
We use four rectangular rooms with different aspect ratios: $3\times6$, $2\times8$, $4\times4$, and $3\times5$\,m (areas 15--18\,m$^2$; $K_{\mathrm{total}} = 287, 255, 255, 239$ respectively).
The $4 \times 4$ room is a square, the most symmetric case, where eigenvalue degeneracies (two modes with the same frequency) are common.
The $2 \times 8$ room has a $4:1$ aspect ratio, producing a very different eigenvalue distribution.

For each room, we retain $K = 50$ modes, place $M = 8$ sensors uniformly at random, and compute $P(p)$ at $T = 100$.

\subsection{Eigenvalue spacing: Berry fails}
\label{app:rect-nnsd}

The standard diagnostic for ``quantum chaos'' is the nearest-neighbor spacing distribution (NNSD) of the eigenvalues.
Two reference distributions are used:
\begin{itemize}[leftmargin=*, itemsep=2pt]
\item \textbf{Poisson:} $p(s) = e^{-s}$.
This is the spacing distribution for independent random eigenvalues: the ``no correlations'' case.
Integrable systems (like rectangles) are expected to follow Poisson.
\item \textbf{GOE (Gaussian Orthogonal Ensemble):} $p(s) = \frac{\pi s}{2}\,e^{-\pi s^2/4}$.
This is the spacing distribution for random matrices with time-reversal symmetry: the ``maximum correlations'' case.
Chaotic systems (like generic convex polygons) are expected to follow GOE.
Berry's conjecture is associated with GOE statistics.
\end{itemize}

To compute the NNSD, we first \emph{unfold} the eigenvalue spectrum: we rescale the eigenvalues so that the mean spacing is 1.
This removes the trivial effect of eigenvalue density (which increases with $\lambda$ by Weyl's law) and isolates the correlations between neighboring eigenvalues.
The normalized spacings $s_i = (\lambda_{i+1} - \lambda_i) / \bar{\Delta}$ are then binned into a histogram.

\begin{figure}[t]
  \centering
  \includegraphics[width=\linewidth]{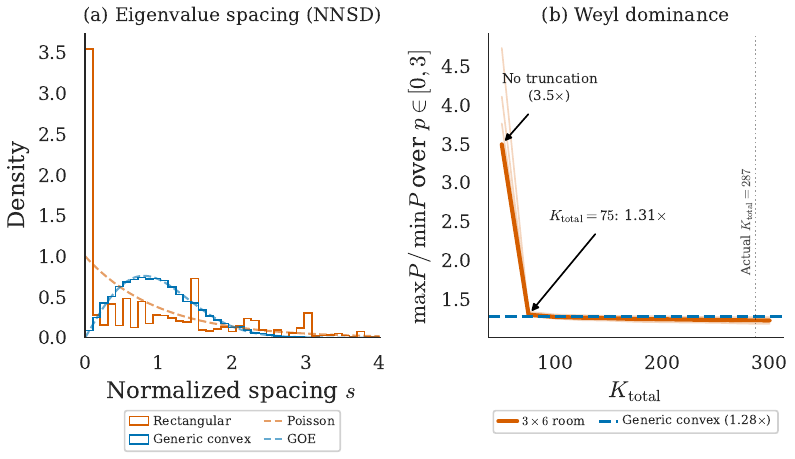}
  \caption{\textbf{Berry violations are real but irrelevant under Weyl dominance.}
  (a)~Nearest-neighbor spacing distributions: generic convex rooms (blue, 50 rooms pooled) match GOE ($D = 0.010$, $p = 0.17$); rectangular rooms (orange, 4 rooms) reject GOE ($D > 0.19$, $p < 10^{-5}$).
  (b)~$P(p)$ landscape ratio vs $K_{\mathrm{total}}$ for the $3 \times 6$ room.
  At $K_{\mathrm{total}} = 50$ (no truncation), ratio $= 3.49\times$; at $K_{\mathrm{total}} = 75$ ($+25$ modes), ratio collapses to $1.31\times$; by $K_{\mathrm{total}} = 100$, equals the generic convex reference ($1.28\times$, dashed).
  Weyl's law guarantees ${\sim}263$ modes, an overwhelming margin.}
\label{fig:berry-weyl-app}
\end{figure}

\paragraph{Results (Figure~\ref{fig:berry-weyl-app}, panel a).}
Generic convex rooms (blue, 50 rooms pooled, ${\sim}13{,}000$ spacings) match GOE with KS $D = 0.010$ ($p = 0.17$).
This is consistent with Berry's conjecture: the eigenmodes behave like random fields.

Rectangular rooms (orange, 4 rooms pooled, ${\sim}2{,}000$ spacings) \emph{reject} GOE: $D > 0.19$ ($p < 10^{-5}$) for all four rooms individually.
Their spacing distribution is closer to Poisson, as expected for integrable systems.

Per-room KS statistics against GOE: $D = 0.20, 0.31, 0.58, 0.24$ for the $3\times6$, $2\times8$, $4\times4$, and $3\times5$ rooms respectively, with $p<10^{-5}$ in every case.
The $4 \times 4$ square has the largest deviation ($D = 0.58$) because its eigenvalue degeneracies create level clustering, the opposite of the level repulsion predicted by GOE.
Berry's conjecture fails spectacularly for rectangles.

\subsection{Landscape ratio: but it doesn't matter}
\label{app:rect-landscape}

The relevant question is not ``does Berry hold?'' but ``does the $P(p)$ landscape curve?''
We quantify landscape curvature by the ratio $\max_p P / \min_p P$ over $p \in [0, 3]$.
A ratio near $1.0$ means the landscape is flat (all exponents perform similarly).
A large ratio means the landscape is curved (the choice of $p$ matters).

\paragraph{The control experiment.}
For the $3 \times 6$ room, we vary $K_{\mathrm{total}}$ artificially: instead of using all modes up to the frequency ceiling, we truncate at progressively higher $K_{\mathrm{total}}$ values.
At $K_{\mathrm{total}} = K = 50$ (no truncation noise at all), the regularizer is the only thing protecting the estimator from fitting noise in the data.
As $K_{\mathrm{total}}$ increases, the truncation noise grows but also becomes more isotropic (more modes contributing).

\paragraph{Results (Figure~\ref{fig:berry-weyl-app}, panel b).}


At $K_{\mathrm{total}} = 50$ (no truncation noise), the landscape is strongly curved: ratio $= 3.49\times$.
The optimal exponent is $p^* = 0$ (identity regularization), because without truncation noise the residual error comes only from the noiseless rank-deficiency of $\tilde\Phi^\top\tilde\Phi$ (the $MT \le 2K$ regime), where uniform shrinkage best stabilizes the inverse.
Adding just 25 truncated modes ($K_{\mathrm{total}} = 75$) collapses the ratio from $3.49\times$ to $1.31\times$, a dramatic flattening.
By $K_{\mathrm{total}} = 100$ ($+50$ modes), the ratio matches the generic convex reference ($1.28\times$); at $K_{\mathrm{total}} = 150$ and $200$ the ratios are $1.26\times$ and $1.25\times$, indistinguishable from the asymptote.
At the actual $K_{\mathrm{total}} = 287$ of the $3 \times 6$ room, the ratio is $1.25\times$.

\paragraph{Why this happens.}
Even though the rectangular eigenfunctions are \emph{not} random fields (Berry fails), the sum of many non-random rank-one contributions still concentrates toward isotropy.
This is a generalized law-of-large-numbers effect: you do not need the individual terms to be ``nice'' (Gaussian, independent); you just need enough of them.
The cross-correlations between rectangular eigenfunctions are not zero-mean Gaussian as Berry predicts, but they are bounded and have limited variance.
With 237 terms in the sum, the average behavior dominates.

Weyl's law guarantees that $K_{\mathrm{total}} - K$ grows with room area.
For any room of practical size ($> 1\,\mathrm{m}^2$), there are hundreds of truncated modes, far more than the ${\sim}25$ needed to flatten the landscape to within $5\%$ of the generic convex reference.


\subsection{The Weyl dominance principle}
\label{app:weyl-principle}

The rectangular control experiment reveals a principle that is more general than Berry's conjecture:

\begin{quote}
\emph{The formula $\Gamma_k = \lambda_k^{|s|}$ works not because Berry holds universally, but because Weyl dominance makes Berry violations irrelevant to reconstruction quality} (Figure~\ref{fig:berry-weyl-app}, panel b).
\end{quote}

Concretely, ``Weyl dominance'' means:
\begin{enumerate}[label=(\roman*), leftmargin=*, itemsep=2pt]
\item Weyl's law guarantees hundreds of truncated modes in any room of practical size.
\item The sum of hundreds of bounded rank-one matrices concentrates around its mean, regardless of the distributional properties of the individual terms.
\item The resulting anisotropy $\|E\|_{\mathrm{op}}$ is moderate (empirical median $0.58$ across 187 rooms), but insufficient to curve the $P(p)$ landscape because the signal dynamic range $(\lambda_K/\lambda_1)^{|s|} \approx 80{:}1$ dominates the noise eigenvalue ratio ${\sim}4{:}1$.
\end{enumerate}

Berry's conjecture provides the \emph{tightest} concentration bound (Gaussian tails, Bernstein inequality with small constants).
But the formula does not need perfect isotropy.
It only needs the signal dynamic range to dominate the noise anisotropy, which is achieved with much weaker assumptions than Berry.
Weyl's law provides the overwhelming mode count that makes even weak concentration sufficient.

\paragraph{When Weyl dominance fails.}
The mechanism requires $K_{\mathrm{total}} - K \gg 1$.
This breaks for very small rooms where $K_{\mathrm{total}} \approx K$ (the room supports too few modes below the frequency ceiling); such rooms are too small for meaningful acoustic reconstruction (wavelengths exceed the room dimensions), and the framework is not intended to apply.

%% file: supplementary/H_sensor_noise.tex
\section{Sensor Noise and Model Mismatch}
\label{app:sensor-noise}

The main text assumes that truncation noise dominates the error budget.
In a real measurement system, electronic sensor noise, calibration errors, and model mismatch also contribute.
This appendix analyzes how each affects the optimal regularizer.

\subsection{Electronic sensor noise}
\label{app:electronic-noise}

\paragraph{The model.}
Each microphone adds electronic noise $\epsilon_m \sim \mathcal{N}(0, \sigma^2_{\mathrm{elec}})$ to its measurement, independently across sensors and time.
The total noise covariance is
\begin{equation}
R = R_{\mathrm{trunc}} + \sigma^2_{\mathrm{elec}}\,I_M = \sigma^2_{\mathrm{trunc}}(I_M + E) + \sigma^2_{\mathrm{elec}}\,I_M = (\sigma^2_{\mathrm{trunc}} + \sigma^2_{\mathrm{elec}})\,I_M + \sigma^2_{\mathrm{trunc}}\,E.
\end{equation}

\paragraph{Why $\Gamma$ is unchanged.}
The truncation noise contributes an approximately isotropic component $\sigma^2_{\mathrm{trunc}}\,I_M$ (by the Berry/Weyl argument).
The electronic noise contributes an exactly isotropic component $\sigma^2_{\mathrm{elec}}\,I_M$.
The sum of two isotropic components is isotropic: $R \approx (\sigma^2_{\mathrm{trunc}} + \sigma^2_{\mathrm{elec}})\,I_M$.

From Proposition~\ref{prop:isotropy}, the optimal regularizer under isotropic noise is $\Gamma_k \propto \lambda_k^s$, regardless of the noise \emph{level}.
The noise level only affects the optimal regularization \emph{strength} $\alpha$, which absorbs the total noise power $\sigma^2_{\mathrm{trunc}} + \sigma^2_{\mathrm{elec}}$.

In equations: the MAP estimator is
\begin{equation}
\hat{\mathbf{a}} = \bigl(\tilde{\Phi}^\top \tilde{\Phi} + (\sigma^2_{\mathrm{trunc}} + \sigma^2_{\mathrm{elec}})\,\Sigma_{\mathbf{a}}^{-1}\bigr)^{-1}\tilde{\Phi}^\top \tilde{\mathbf{y}}.
\end{equation}
Comparing with the Tikhonov form: $\alpha = (\sigma^2_{\mathrm{trunc}} + \sigma^2_{\mathrm{elec}}) / c$ and $\Gamma_{kk} = \lambda_k^s$.
The shape is the same; only $\alpha$ changes.

\paragraph{Practical implication.}
Adding sensor noise is like turning up the volume on the ``static'' in the background.
The optimal response is to regularize more strongly ($\alpha$ increases) but not differently (the shape $\Gamma_k \propto \lambda_k^s$ is unchanged).
This is good news for real deployments: the formula works whether the dominant noise source is truncation, electronics, or a combination.

\subsection{Mild sensor noise anisotropy}
\label{app:noise-aniso}

\paragraph{What could go wrong.}
In practice, different microphones may have slightly different noise levels due to manufacturing variation, calibration drift, or age.
This introduces a mild anisotropy into the electronic noise:
\begin{equation}
R_{\mathrm{elec}} = \mathrm{diag}(\sigma^2_{\mathrm{elec},1}, \ldots, \sigma^2_{\mathrm{elec},M})
\end{equation}
instead of $\sigma^2_{\mathrm{elec}}\,I_M$.

\paragraph{How bad can it get?}
The total noise covariance becomes
\begin{equation}
R = \sigma^2_{\mathrm{trunc}}(I_M + E) + R_{\mathrm{elec}}.
\end{equation}
The anisotropy in $R_{\mathrm{elec}}$ adds to the anisotropy in $E$ from the truncation noise.
If the sensor noise anisotropy is comparable to or larger than the truncation noise anisotropy, it could, in principle, curve the $P(p)$ landscape.

\paragraph{Practical bound.}
Realistic calibration mismatch between microphones is typically $< 3$\,dB ($< 2\times$ in power), perturbing $\|E\|_{\mathrm{op}}$ by $O(0.1)$, small relative to the empirical truncation anisotropy ($\|E\|_{\mathrm{op}} \approx 0.58$, Appendix~\ref{app:anisotropy}).
Mild sensor anisotropy therefore adds a small perturbation to the noise covariance and the formula $\Gamma_k = \lambda_k^{|s|}$ remains robust under realistic calibration mismatch.



\subsection{Frequency-dependent damping}
\label{app:freq-damping}

\paragraph{The assumption we made.}
The main text (eq.~\eqref{eq:acoustic-dynamics}) assumes uniform damping: $a_k(t) = e^{-\gamma t}[c_k \cos(\omega_k t) + \beta_k \sin(\omega_k t)]$, where $\gamma$ is the same for every mode.
This means all modes decay at the same rate: the relative amplitudes are preserved over time.

\paragraph{What happens when it breaks.}
In real rooms, damping is frequency-dependent.
High-frequency modes are typically damped more strongly than low-frequency modes, because acoustic absorption by walls, furniture, and air increases with frequency.
A simple model is $\gamma_k = \gamma_0 + \gamma_1 \lambda_k$, where $\gamma_0$ is a baseline damping rate and $\gamma_1$ controls the frequency dependence.

Under frequency-dependent damping, the modal amplitude at time $t$ is
\begin{equation}
a_k(t) = e^{-\gamma_k t}\bigl[c_k \cos(\omega_k t) + \beta_k \sin(\omega_k t)\bigr],
\end{equation}
and the effective amplitude variance becomes
\begin{equation}
\sigma^2_{a,k}(t) \propto \lambda_k^{-s} \cdot e^{-2\gamma_1 \lambda_k t}.
\end{equation}

\paragraph{This is exactly the heat equation case.}
Following \S\ref{sec:heat}, we redefine the estimand as the current-state amplitudes $a_k(t)$ rather than the initial conditions $a_k(0)$; the prior on $a_k(t)$ inherits the damping factor $e^{-2\gamma_1 \lambda_k t}$ exactly as in the heat case, and the regularizer correction below inverts that prior factor.
The exponential factor $e^{-2\gamma_1 \lambda_k t}$ is structurally identical to the heat equation's $e^{-2\kappa \lambda_k t}$ from eq.~\eqref{eq:heat-spectrum}.
The framework of \S\ref{sec:heat} shows how to handle this: the regularizer acquires an exponential correction
\begin{equation}
\Gamma_k \propto \lambda_k^s \cdot e^{2\gamma_1 \lambda_k t},
\end{equation}
where $\gamma_1$ plays the role of $\kappa$.

If $\gamma_1$ is known (from absorption measurements or material data), the correction is a prediction, not a fit.
If $\gamma_1$ is unknown, the one-parameter power law $\Gamma_k = \lambda_k^p$ with an elevated $p^*$ serves as a fallback, absorbing the missing exponential factor into the effective exponent, exactly as we demonstrated for the heat equation (\S\ref{app:heat-1vs2}).

\paragraph{Practical relevance.}
In typical room acoustics below 500\,Hz (the modal frequency range), frequency-dependent damping is small: $\gamma_1 \lambda_K t \ll 1$ for the retained modes.
The uniform-damping approximation is reasonable, and the one-parameter power law $\Gamma_k = \lambda_k^{|s|}$ suffices.
At higher frequencies or in rooms with strong frequency-dependent absorption (e.g., heavily carpeted rooms), the exponential correction may become relevant.
The heat equation analysis (\S\ref{sec:heat}) provides the complete framework for this case.

\subsection{Practical diagnostic: the Herfindahl check}
\label{app:km-sensitivity}

For any new $(K, M)$ configuration, the Herfindahl index
\begin{equation}
H = \frac{\sum_{n>K}\lambda_n^{-2|s|}}{\bigl(\sum_{n>K}\lambda_n^{-|s|}\bigr)^2}
\end{equation}
can be computed from the eigenvalues alone (FEM or analytical), with no data collection.
$H < 0.01$ indicates the noise power is well-spread across truncated modes and $\Gamma_k = \lambda_k^{|s|}$ is expected to work; $H > 0.1$ indicates a few truncated modes dominate and the formula should be validated empirically before deployment.
The risky direction is increasing $K$ toward $K_{\mathrm{total}}$: as the truncation band shrinks, $H$ rises and isotropy weakens; at $K = K_{\mathrm{total}}$ the framework does not apply.
$M$-sensitivity is verified directly in Appendix~\ref{app:v2}, where the impossibility pattern persists at $M \in \{8, 16\}$.

%% file: supplementary/I_real_data.tex

\section{Aperture Constraint on Physical Validation}
\label{app:aperture}

This appendix provides the formal analysis of the spatial-sampling constraint discussed in \S\ref{sec:discussion}.
We derive the aperture-to-wavelength bound, instantiate it for typical rooms, and report empirical confirmation from a pilot measurement that motivated the follow-up direction.

\subsection{The aperture-to-wavelength bound}
\label{app:aperture-bound}

Recovering $K$ modal amplitudes from $M$ microphones requires the spatial Gram matrix $\Phi_{mk} = \varphi_k(x_m)$ to have effective rank $K$.
For a compact array of aperture $D$ sampling modes whose shortest retained wavelength is $\ell_{\min} = 2\pi/\sqrt{\lambda_K}$, each eigenfunction varies across the array by at most
\begin{equation}
\label{eq:aperture-variation}
|\varphi_k(x_m) - \varphi_k(x_{m'})| \lesssim 2\pi\,D/\ell_{\min} \cdot \|\varphi_k\|_\infty,
\end{equation}
for any pair of mic positions $x_m, x_{m'}$ within the array.
When $D/\ell_{\min} \ll 1$, every column of $\Phi$ is approximately a constant vector (with a mode-dependent prefactor and a small perturbation), and all $K$ columns are nearly parallel in $\mathbb{R}^M$.
The energy of $\Phi$ concentrates into a handful of dominant singular directions regardless of $M$, and modal projection returns noise amplified by the reciprocals of vanishing singular values.

This is not a signal-to-noise problem but a structural one.
Increasing $M$ within a fixed aperture does not help; the added mics see approximately the same eigenfunction values as the existing ones.
Increasing the recording length does not help; temporal averaging cannot supply spatial information that was never measured.
The only remedies are (a)~enlarging the array aperture until $D/\ell_{\min}$ is $O(1)$, or (b)~sampling at spatially distinct positions over time.

\subsection{Numerical instantiation}
\label{app:aperture-numbers}

For $K{=}50$ retained modes and three room scales, Table~\ref{tab:aperture-rooms} reports the shortest retained wavelength, the aperture-to-wavelength ratio for a typical portable array ($D{=}12.6$\,cm), and the resulting maximum amplitude variation across the array.

\begin{table}[h]
\centering
\small
\caption{Aperture constraint for $K{=}50$ retained Neumann modes across three room scales for the miniDSP UMA-16 v2 ($D{=}12.6$\,cm aperture; same configuration as the real-data pilot of \S\ref{app:aperture-empirical}).
$K_{\mathrm{total}}$ is the total number of Neumann modes below the simulation frequency ceiling; $\ell_{\min} = 2\pi/\sqrt{\lambda_K}$ is the shortest retained wavelength; the max amplitude variation across the array is $2\sin(\pi D/\ell_{\min})$, the exact maximum of $|\varphi(x_1) - \varphi(x_2)|$ for a unit-amplitude plane wave $\varphi(x) = \cos(kx)$ with $k = 2\pi/\ell_{\min}$ across $|x_1 - x_2| \le D$, achieved when the array straddles a node (saturates at $2.0$ when $D \ge \ell_{\min}/2$ because two mics can occupy antinodes of opposite sign).
Values computed using the same eigenpair routines as the main experiments.}
\label{tab:aperture-rooms}
\begin{tabular}{@{}lccccc@{}}
\toprule
Room & Volume (m$^3$) & $K_{\mathrm{total}}$ & $\ell_{\min}$ (m) & $D/\ell_{\min}$ & Max amplitude variation \\
\midrule
Large (3.45 $\times$ 7.20 $\times$ 2.45\,m)   & 60.9 & 7017 & 2.07  & $\mathbf{6.1\%}$  & $\mathbf{38.0\%}$  \\
Compact (1.33 $\times$ 2.10 $\times$ 2.47\,m) &  6.9 & 879  & 0.988 & $12.8\%$          & $\mathbf{78.0\%}$  \\
Closet (1.0 $\times$ 1.0 $\times$ 1.0\,m)     &  1.0 & 141  & 0.535 & $23.6\%$          & $\mathbf{134.8\%}$ \\
\bottomrule
\end{tabular}
\end{table}

A compact array resolves modes well only when the room is small enough that retained wavelengths approach the aperture.
However, shrinking the room further brings two countervailing effects: the source-to-array distance enters the near field, violating the far-field assumption of the modal observation model; and the total mode count $K_{\mathrm{total}}$ drops to a regime where the Berry/Weyl concentration argument no longer holds (the closet has only $141$ total modes below the simulation frequency ceiling, barely above the $K{=}50$ we retain).
No room size admits a static compact array as a valid physical testbed for $K{=}50$ modal recovery within the framework's assumptions.

\subsection{Empirical real-data validation}
\label{app:aperture-empirical}

We ran the modal recovery pipeline on a real measurement in the compact-room configuration of Table~\ref{tab:aperture-rooms} ($1.33 \times 2.10 \times 2.47$\,m, $V = 6.9$\,m$^3$).
The hardware was a miniDSP UMA-16 v2 array ($16$ MEMS microphones in a $4{\times}4$ uniform grid, $42$\,mm element spacing, $12.6$\,cm corner-to-corner aperture) and a Genelec 8010A loudspeaker.
At each of $5$ source positions we played a $5$-second exponential swept sine from $20$\,Hz to $2$\,kHz and recovered the $16$-channel impulse response by Farina deconvolution.
Schroeder backward integration on the deconvolved $500$\,ms RIRs gave median $RT_{60} = 1.04$\,s, IQR $[1.01, 1.05]$\,s.

To isolate framework idealization from recording-chain effects, we ran the identical analysis pipeline on a matched synthetic dataset: same room, mics, source positions, $K{=}50$ modes, and $RT_{60}$, with modal coefficients drawn from the population power-law prior ($|s|{=}1.13$).
Recovery results are summarized in Figure~\ref{fig:real-data}.

\begin{figure}[t]
\centering
\includegraphics[width=\linewidth]{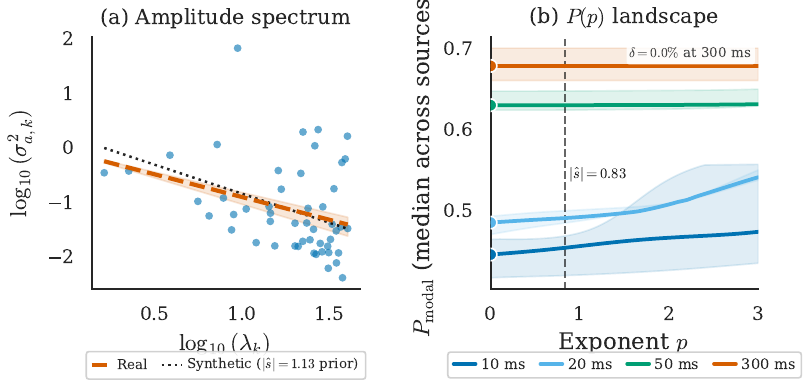}
\caption{\textbf{Real-data recovery vs matched synthetic comparison in the compact room} ($V{=}6.9$\,m$^3$, $K{=}50$, $M{=}16$, $N_{\mathrm{src}}{=}5$, $RT_{60}{=}1.04$\,s).
(a)~Per-mode log-energy vs log-eigenvalue scatter for the real measurement (orange) with bootstrap $95\%$ CI fan, against the synthetic reference slope (black dotted) generated under the matched modal generator with population $|s|{=}1.13$.
Real $|\hat{s}|{=}0.83$ ($R^2{=}0.10$) vs synthetic $|\hat{s}|{=}1.06$ ($R^2{=}0.73$); the slope gap $\Delta|\hat{s}|{=}0.23$ is the prior-mismatch signal.
(b)~Tikhonov landscape $P(p)$ at four snapshot counts $T \in \{10, 20, 50, 300\}$\,ms; the landscape is flat ($\max_p P / \min_p P = 1.00$) for $T \geq 50$\,ms, with $p^{\star}{=}0$ across all $T$.}
\label{fig:real-data}
\end{figure}

\paragraph{Two positive findings.}
First, the framework's flat-landscape prediction holds on real data: $\max_p P / \min_p P = 1.06$ at $T{=}10$\,ms and collapses to $1.00$ for $T \geq 50$\,ms, with $\delta(|\hat{s}|) \leq 1.92\%$ across all tested $T$.
Second, $|\hat{s}|$ is recoverable from real recordings: log-linear regression returns $|\hat{s}| = 0.83$ with $95\%$ CI $[0.75, 0.96]$, stable across $RT_{60} \in [0.5, 1.5]$\,s.
The optimal exponent $p^{\star}$ collapses to zero (ridge) at every $T$, both on real and on synthetic data; this is a property of the aperture-bounded spatial Gram, not a recording-chain artifact.

\paragraph{One quantified gap.}
The matched synthetic comparison recovers $|\hat{s}| = 1.06$ ($R^2 = 0.73$) from the identical pipeline, within CI of the population $|s|{=}1.13$.
The real-data slope of $0.83$ ($R^2 = 0.10$) underestimates by $\Delta|\hat{s}| = 0.23$, with a $\Delta R^2 = 0.63$ collapse in fit quality.
Geometry alone does not explain the gap because the synthetic comparison controls for it.
The residual is therefore attributable to the recording chain: the i.i.d.\ Gaussian power-law prior is an idealization, and at least the following effects are not captured by the synthetic forward model: source-side modal excitation deviating from the prior, frequency-dependent damping (uniform-$\gamma$ assumption violated; cf.\ \S\ref{app:freq-damping}), RIR truncation at $500$\,ms below the measured $RT_{60}{\approx}1$\,s, non-flat loudspeaker frequency response, near-field violations for the lowest retained modes, and finite source sample ($N_{\mathrm{src}}{=}5$).
Disambiguating the contribution of each is left for follow-up work.

\paragraph{Spatial Gram diagnostics confirm the aperture bound.}
The geometry-only matrix $\Phi \in \mathbb{R}^{16 \times 50}$ has top-three singular directions capturing $99.99\%$ of its energy; the recorded data matrix $Y$ has top-three capturing $99.76\%$.
Both confirm the structural aperture-bounded rank-deficiency predicted by the bound: at $D/\ell_{\min} \approx 12.8\%$ (Table~\ref{tab:aperture-rooms}, Compact row), $\Phi$ is effectively rank ${\sim}3$ across the $K{=}50$ retained modes.
The framework's flat-landscape prediction survives this aperture compactness; the slope-recovery quality does not, which is what motivates the trajectory-based resolution of \S\ref{app:aperture-resolution}.

\subsection{Resolution via distributed temporal sampling}
\label{app:aperture-resolution}

The aperture constraint admits two solutions: spatial distribution via a large array, or temporal distribution via a moving sensor.
The first defeats the portability that motivates the framework for robotic and mobile sensing applications.
The second preserves the hardware footprint of a compact array while acquiring spatial diversity through motion, converting $M$ static mics at a fixed position into $M \cdot N$ effective measurement points over $N$ positions along a trajectory.

Theoretically, Berry's isotropy argument depends only on the sample average $\frac{1}{N}\sum_n \varphi_k(x_n)\,\varphi_l(x_n)$ being small for $k \neq l$.
A sufficiently mixing trajectory in $\Omega$ induces a sampling distribution whose expectation converges to the orthogonality relation of the eigenfunctions, so the isotropy premise of \S\ref{sec:theory} carries over from static random placements to trajectories.
Optimality questions, such as what trajectory minimizes the reconstruction error subject to a path-length or duration budget, become the natural subject of follow-up work.
We leave the trajectory formulation, its theoretical analysis, and its empirical validation to that paper.

%% file: checklist.tex
\section*{NeurIPS Paper Checklist}

\begin{enumerate}

\item {\bf Claims}
    \item[] Question: Do the main claims made in the abstract and introduction accurately reflect the paper's contributions and scope?
    \item[] Answer: \answerYes{}
    \item[] Justification: Abstract and \S\ref{sec:intro} state four contributions, each backed by \S\ref{sec:theory}--\S\ref{sec:heat}. Isotropy is framed as approximate throughout.
    \item[] Guidelines:
    \begin{itemize}
        \item The answer \answerNA{} means that the abstract and introduction do not include the claims made in the paper.
        \item The abstract and/or introduction should clearly state the claims made, including the contributions made in the paper and important assumptions and limitations. A \answerNo{} or \answerNA{} answer to this question will not be perceived well by the reviewers. 
        \item The claims made should match theoretical and experimental results, and reflect how much the results can be expected to generalize to other settings. 
        \item It is fine to include aspirational goals as motivation as long as it is clear that these goals are not attained by the paper. 
    \end{itemize}

\item {\bf Limitations}
    \item[] Question: Does the paper discuss the limitations of the work performed by the authors?
    \item[] Answer: \answerYes{}
    \item[] Justification: \S\ref{sec:discussion} covers 2D-only scope, Gaussian prior (Appendix~\ref{app:prior-robustness}), Berry failure in rectangles (Appendix~\ref{app:weyl-dominance}), 3D as future work, the open non-diagonal estimator question, and physical validation via a compact-array real-data pilot (Appendix~\ref{app:aperture}).
    \item[] Guidelines:
    \begin{itemize}
        \item The answer \answerNA{} means that the paper has no limitation while the answer \answerNo{} means that the paper has limitations, but those are not discussed in the paper. 
        \item The authors are encouraged to create a separate ``Limitations'' section in their paper.
        \item The paper should point out any strong assumptions and how robust the results are to violations of these assumptions (e.g., independence assumptions, noiseless settings, model well-specification, asymptotic approximations only holding locally). The authors should reflect on how these assumptions might be violated in practice and what the implications would be.
        \item The authors should reflect on the scope of the claims made, e.g., if the approach was only tested on a few datasets or with a few runs. In general, empirical results often depend on implicit assumptions, which should be articulated.
        \item The authors should reflect on the factors that influence the performance of the approach. For example, a facial recognition algorithm may perform poorly when image resolution is low or images are taken in low lighting. Or a speech-to-text system might not be used reliably to provide closed captions for online lectures because it fails to handle technical jargon.
        \item The authors should discuss the computational efficiency of the proposed algorithms and how they scale with dataset size.
        \item If applicable, the authors should discuss possible limitations of their approach to address problems of privacy and fairness.
        \item While the authors might fear that complete honesty about limitations might be used by reviewers as grounds for rejection, a worse outcome might be that reviewers discover limitations that aren't acknowledged in the paper. The authors should use their best judgment and recognize that individual actions in favor of transparency play an important role in developing norms that preserve the integrity of the community. Reviewers will be specifically instructed to not penalize honesty concerning limitations.
    \end{itemize}

\item {\bf Theory assumptions and proofs}
    \item[] Question: For each theoretical result, does the paper provide the full set of assumptions and a complete (and correct) proof?
    \item[] Answer: \answerYes{}
    \item[] Justification: Proposition~\ref{prop:isotropy} lists assumptions and is proved in Appendix~\ref{app:proof}. Berry is stated as a conjecture, verified in \S\ref{sec:acoustic} and Appendix~\ref{app:berry}.
    \item[] Guidelines:
    \begin{itemize}
        \item The answer \answerNA{} means that the paper does not include theoretical results. 
        \item All the theorems, formulas, and proofs in the paper should be numbered and cross-referenced.
        \item All assumptions should be clearly stated or referenced in the statement of any theorems.
        \item The proofs can either appear in the main paper or the supplemental material, but if they appear in the supplemental material, the authors are encouraged to provide a short proof sketch to provide intuition. 
        \item Inversely, any informal proof provided in the core of the paper should be complemented by formal proofs provided in appendix or supplemental material.
        \item Theorems and Lemmas that the proof relies upon should be properly referenced. 
    \end{itemize}

    \item {\bf Experimental result reproducibility}
    \item[] Question: Does the paper fully disclose all the information needed to reproduce the main experimental results of the paper to the extent that it affects the main claims and/or conclusions of the paper (regardless of whether the code and data are provided or not)?
    \item[] Answer: \answerYes{}
    \item[] Justification: All settings specified: $K{=}50$, $M{=}8$, 800/197 train/val split, $p$-grid, $\alpha$ selection (\S\ref{sec:acoustic}); optimizer, lr, epochs, seeds (Appendix~\ref{app:lir}); heat parameters (Appendix~\ref{app:heat}); FEM data generation (\S\ref{sec:setup}, Appendix~\ref{app:room-geometry}); real-data pilot setup (Appendix~\ref{app:aperture}).
    \item[] Guidelines:
    \begin{itemize}
        \item The answer \answerNA{} means that the paper does not include experiments.
        \item If the paper includes experiments, a \answerNo{} answer to this question will not be perceived well by the reviewers: Making the paper reproducible is important, regardless of whether the code and data are provided or not.
        \item If the contribution is a dataset and\slash or model, the authors should describe the steps taken to make their results reproducible or verifiable. 
        \item Depending on the contribution, reproducibility can be accomplished in various ways. For example, if the contribution is a novel architecture, describing the architecture fully might suffice, or if the contribution is a specific model and empirical evaluation, it may be necessary to either make it possible for others to replicate the model with the same dataset, or provide access to the model. In general. releasing code and data is often one good way to accomplish this, but reproducibility can also be provided via detailed instructions for how to replicate the results, access to a hosted model (e.g., in the case of a large language model), releasing of a model checkpoint, or other means that are appropriate to the research performed.
        \item While NeurIPS does not require releasing code, the conference does require all submissions to provide some reasonable avenue for reproducibility, which may depend on the nature of the contribution. For example
        \begin{enumerate}
            \item If the contribution is primarily a new algorithm, the paper should make it clear how to reproduce that algorithm.
            \item If the contribution is primarily a new model architecture, the paper should describe the architecture clearly and fully.
            \item If the contribution is a new model (e.g., a large language model), then there should either be a way to access this model for reproducing the results or a way to reproduce the model (e.g., with an open-source dataset or instructions for how to construct the dataset).
            \item We recognize that reproducibility may be tricky in some cases, in which case authors are welcome to describe the particular way they provide for reproducibility. In the case of closed-source models, it may be that access to the model is limited in some way (e.g., to registered users), but it should be possible for other researchers to have some path to reproducing or verifying the results.
        \end{enumerate}
    \end{itemize}

\item {\bf Open access to data and code}
    \item[] Question: Does the paper provide open access to the data and code, with sufficient instructions to faithfully reproduce the main experimental results, as described in supplemental material?
    \item[] Answer: \answerNo{}
    \item[] Justification: We will make the code public upon acceptance.
    \item[] Guidelines:
    \begin{itemize}
        \item The answer \answerNA{} means that paper does not include experiments requiring code.
        \item Please see the NeurIPS code and data submission guidelines (\url{https://neurips.cc/public/guides/CodeSubmissionPolicy}) for more details.
        \item While we encourage the release of code and data, we understand that this might not be possible, so \answerNo{} is an acceptable answer. Papers cannot be rejected simply for not including code, unless this is central to the contribution (e.g., for a new open-source benchmark).
        \item The instructions should contain the exact command and environment needed to run to reproduce the results. See the NeurIPS code and data submission guidelines (\url{https://neurips.cc/public/guides/CodeSubmissionPolicy}) for more details.
        \item The authors should provide instructions on data access and preparation, including how to access the raw data, preprocessed data, intermediate data, and generated data, etc.
        \item The authors should provide scripts to reproduce all experimental results for the new proposed method and baselines. If only a subset of experiments are reproducible, they should state which ones are omitted from the script and why.
        \item At submission time, to preserve anonymity, the authors should release anonymized versions (if applicable).
        \item Providing as much information as possible in supplemental material (appended to the paper) is recommended, but including URLs to data and code is permitted.
    \end{itemize}

\item {\bf Experimental setting/details}
    \item[] Question: Does the paper specify all the training and test details (e.g., data splits, hyperparameters, how they were chosen, type of optimizer) necessary to understand the results?
    \item[] Answer: \answerYes{}
    \item[] Justification: Data split, hyperparameters, optimizer, seeds, and loss are in \S\ref{sec:acoustic} and Appendix~\ref{app:lir}. The main formula has no free parameters beyond $|s|$.
    \item[] Guidelines:
    \begin{itemize}
        \item The answer \answerNA{} means that the paper does not include experiments.
        \item The experimental setting should be presented in the core of the paper to a level of detail that is necessary to appreciate the results and make sense of them.
        \item The full details can be provided either with the code, in appendix, or as supplemental material.
    \end{itemize}

\item {\bf Experiment statistical significance}
    \item[] Question: Does the paper report error bars suitably and correctly defined or other appropriate information about the statistical significance of the experiments?
    \item[] Answer: \answerYes{}
    \item[] Justification: Bootstrap 95\% CI for $|s|$ (\S\ref{sec:acoustic}), mean $\pm$ std over 5 seeds for LIR (Appendix~\ref{app:lir}), KS $p$-values (\S\ref{sec:acoustic}, Appendix~\ref{app:berry}), IQR shading in figures.
    \item[] Guidelines:
    \begin{itemize}
        \item The answer \answerNA{} means that the paper does not include experiments.
        \item The authors should answer \answerYes{} if the results are accompanied by error bars, confidence intervals, or statistical significance tests, at least for the experiments that support the main claims of the paper.
        \item The factors of variability that the error bars are capturing should be clearly stated (for example, train/test split, initialization, random drawing of some parameter, or overall run with given experimental conditions).
        \item The method for calculating the error bars should be explained (closed form formula, call to a library function, bootstrap, etc.)
        \item The assumptions made should be given (e.g., Normally distributed errors).
        \item It should be clear whether the error bar is the standard deviation or the standard error of the mean.
        \item It is OK to report 1-sigma error bars, but one should state it. The authors should preferably report a 2-sigma error bar than state that they have a 96\% CI, if the hypothesis of Normality of errors is not verified.
        \item For asymmetric distributions, the authors should be careful not to show in tables or figures symmetric error bars that would yield results that are out of range (e.g., negative error rates).
        \item If error bars are reported in tables or plots, the authors should explain in the text how they were calculated and reference the corresponding figures or tables in the text.
    \end{itemize}

\item {\bf Experiments compute resources}
    \item[] Question: For each experiment, does the paper provide sufficient information on the computer resources (type of compute workers, memory, time of execution) needed to reproduce the experiments?
    \item[] Answer: \answerYes{}
    \item[] Justification: CPU: Intel i5-12400F, GPU: NVIDIA RTX 4090, 32\,GB RAM. Closed-form Tikhonov runs on CPU in seconds. LIR training ($52L$ parameters) takes ${\sim}5$\,min/seed on GPU. Total compute for all experiments: $< 24$ GPU-hours.
    \item[] Guidelines:
    \begin{itemize}
        \item The answer \answerNA{} means that the paper does not include experiments.
        \item The paper should indicate the type of compute workers CPU or GPU, internal cluster, or cloud provider, including relevant memory and storage.
        \item The paper should provide the amount of compute required for each of the individual experimental runs as well as estimate the total compute. 
        \item The paper should disclose whether the full research project required more compute than the experiments reported in the paper (e.g., preliminary or failed experiments that didn't make it into the paper). 
    \end{itemize}
    
\item {\bf Code of ethics}
    \item[] Question: Does the research conducted in the paper conform, in every respect, with the NeurIPS Code of Ethics \url{https://neurips.cc/public/EthicsGuidelines}?
    \item[] Answer: \answerYes{}
    \item[] Justification: Synthetic FEM eigenmodes (main experiments) and author-recorded RIRs in an empty $6.9$\,m$^3$ room (Appendix~\ref{app:aperture}). No human subjects, personal data, or dual-use concerns.
    \item[] Guidelines:
    \begin{itemize}
        \item The answer \answerNA{} means that the authors have not reviewed the NeurIPS Code of Ethics.
        \item If the authors answer \answerNo, they should explain the special circumstances that require a deviation from the Code of Ethics.
        \item The authors should make sure to preserve anonymity (e.g., if there is a special consideration due to laws or regulations in their jurisdiction).
    \end{itemize}

\item {\bf Broader impacts}
    \item[] Question: Does the paper discuss both potential positive societal impacts and negative societal impacts of the work performed?
    \item[] Answer: \answerNA{}
    \item[] Justification: Foundational signal-processing theory with no foreseeable negative societal impact.
    \item[] Guidelines:
    \begin{itemize}
        \item The answer \answerNA{} means that there is no societal impact of the work performed.
        \item If the authors answer \answerNA{} or \answerNo, they should explain why their work has no societal impact or why the paper does not address societal impact.
        \item Examples of negative societal impacts include potential malicious or unintended uses (e.g., disinformation, generating fake profiles, surveillance), fairness considerations (e.g., deployment of technologies that could make decisions that unfairly impact specific groups), privacy considerations, and security considerations.
        \item The conference expects that many papers will be foundational research and not tied to particular applications, let alone deployments. However, if there is a direct path to any negative applications, the authors should point it out. For example, it is legitimate to point out that an improvement in the quality of generative models could be used to generate Deepfakes for disinformation. On the other hand, it is not needed to point out that a generic algorithm for optimizing neural networks could enable people to train models that generate Deepfakes faster.
        \item The authors should consider possible harms that could arise when the technology is being used as intended and functioning correctly, harms that could arise when the technology is being used as intended but gives incorrect results, and harms following from (intentional or unintentional) misuse of the technology.
        \item If there are negative societal impacts, the authors could also discuss possible mitigation strategies (e.g., gated release of models, providing defenses in addition to attacks, mechanisms for monitoring misuse, mechanisms to monitor how a system learns from feedback over time, improving the efficiency and accessibility of ML).
    \end{itemize}
    
\item {\bf Safeguards}
    \item[] Question: Does the paper describe safeguards that have been put in place for responsible release of data or models that have a high risk for misuse (e.g., pre-trained language models, image generators, or scraped datasets)?
    \item[] Answer: \answerNA{}
    \item[] Justification: Released assets are synthetic FEM eigenpairs, small networks, and RIRs from a small room. No misuse risk.
    \item[] Guidelines:
    \begin{itemize}
        \item The answer \answerNA{} means that the paper poses no such risks.
        \item Released models that have a high risk for misuse or dual-use should be released with necessary safeguards to allow for controlled use of the model, for example by requiring that users adhere to usage guidelines or restrictions to access the model or implementing safety filters. 
        \item Datasets that have been scraped from the Internet could pose safety risks. The authors should describe how they avoided releasing unsafe images.
        \item We recognize that providing effective safeguards is challenging, and many papers do not require this, but we encourage authors to take this into account and make a best faith effort.
    \end{itemize}

\item {\bf Licenses for existing assets}
    \item[] Question: Are the creators or original owners of assets (e.g., code, data, models), used in the paper, properly credited and are the license and terms of use explicitly mentioned and properly respected?
    \item[] Answer: \answerNA{}
    \item[] Justification: All data authored by us: synthetic FEM eigenmodes plus RIRs from a $6.9$\,m$^3$ room (Appendix~\ref{app:aperture}). No external datasets or licensed code.    
    \item[] Guidelines:
    \begin{itemize}
        \item The answer \answerNA{} means that the paper does not use existing assets.
        \item The authors should cite the original paper that produced the code package or dataset.
        \item The authors should state which version of the asset is used and, if possible, include a URL.
        \item The name of the license (e.g., CC-BY 4.0) should be included for each asset.
        \item For scraped data from a particular source (e.g., website), the copyright and terms of service of that source should be provided.
        \item If assets are released, the license, copyright information, and terms of use in the package should be provided. For popular datasets, \url{paperswithcode.com/datasets} has curated licenses for some datasets. Their licensing guide can help determine the license of a dataset.
        \item For existing datasets that are re-packaged, both the original license and the license of the derived asset (if it has changed) should be provided.
        \item If this information is not available online, the authors are encouraged to reach out to the asset's creators.
    \end{itemize}

\item {\bf New assets}
    \item[] Question: Are new assets introduced in the paper well documented and is the documentation provided alongside the assets?
    \item[] Answer: \answerNo{}
    \item[] Justification: Code will be made public upon acceptance: FEM eigenpairs, real-data RIRs and sensor/source metadata, matched synthetic comparison, scripts, pipeline.
    \item[] Guidelines:
    \begin{itemize}
        \item The answer \answerNA{} means that the paper does not release new assets.
        \item Researchers should communicate the details of the dataset\slash code\slash model as part of their submissions via structured templates. This includes details about training, license, limitations, etc. 
        \item The paper should discuss whether and how consent was obtained from people whose asset is used.
        \item At submission time, remember to anonymize your assets (if applicable). You can either create an anonymized URL or include an anonymized zip file.
    \end{itemize}

\item {\bf Crowdsourcing and research with human subjects}
    \item[] Question: For crowdsourcing experiments and research with human subjects, does the paper include the full text of instructions given to participants and screenshots, if applicable, as well as details about compensation (if any)? 
    \item[] Answer: \answerNA{}
    \item[] Justification: No crowdsourcing or human subjects.
    \item[] Guidelines:
    \begin{itemize}
        \item The answer \answerNA{} means that the paper does not involve crowdsourcing nor research with human subjects.
        \item Including this information in the supplemental material is fine, but if the main contribution of the paper involves human subjects, then as much detail as possible should be included in the main paper. 
        \item According to the NeurIPS Code of Ethics, workers involved in data collection, curation, or other labor should be paid at least the minimum wage in the country of the data collector. 
    \end{itemize}

\item {\bf Institutional review board (IRB) approvals or equivalent for research with human subjects}
    \item[] Question: Does the paper describe potential risks incurred by study participants, whether such risks were disclosed to the subjects, and whether Institutional Review Board (IRB) approvals (or an equivalent approval/review based on the requirements of your country or institution) were obtained?
    \item[] Answer: \answerNA{}
    \item[] Justification: No human subjects.
    \item[] Guidelines:
    \begin{itemize}
        \item The answer \answerNA{} means that the paper does not involve crowdsourcing nor research with human subjects.
        \item Depending on the country in which research is conducted, IRB approval (or equivalent) may be required for any human subjects research. If you obtained IRB approval, you should clearly state this in the paper. 
        \item We recognize that the procedures for this may vary significantly between institutions and locations, and we expect authors to adhere to the NeurIPS Code of Ethics and the guidelines for their institution. 
        \item For initial submissions, do not include any information that would break anonymity (if applicable), such as the institution conducting the review.
    \end{itemize}

\item {\bf Declaration of LLM usage}
    \item[] Question: Does the paper describe the usage of LLMs if it is an important, original, or non-standard component of the core methods in this research? Note that if the LLM is used only for writing, editing, or formatting purposes and does \emph{not} impact the core methodology, scientific rigor, or originality of the research, declaration is not required.
    \item[] Answer: \answerNA{}
    \item[] Justification: LLMs used for writing and editing, not for experiments.
    \item[] Guidelines:
    \begin{itemize}
        \item The answer \answerNA{} means that the core method development in this research does not involve LLMs as any important, original, or non-standard components.
        \item Please refer to our LLM policy in the NeurIPS handbook for what should or should not be described.
    \end{itemize}

\end{enumerate}